\documentclass[11pt]{article}

\usepackage{algorithm}
\usepackage{algpseudocode}  
\usepackage[colorlinks=true,linkcolor=blue,citecolor=red,urlcolor=green]{hyperref}
\usepackage{amsfonts,amssymb}
\usepackage{comment}
 
\usepackage{titlesec}
\usepackage{tikz}
\usepackage{bbding}
\usepackage{pifont}
\usepackage{titletoc}
\usepackage{hhline}
\usepackage{amsmath}
\usepackage{listings}
\usepackage{verbatim}
\usepackage{makecell}
\usepackage{tabu}
\usepackage{graphicx}
\usepackage{enumitem}
\setlist[itemize]{leftmargin=*}
\setlist[enumerate]{leftmargin=*}
\usepackage{mathrsfs}
\usepackage{setspace}
\usepackage{natbib}
\usepackage{bbm}
\usepackage{mathtools}
\usepackage{titletoc}
\usepackage{geometry} 
\usepackage{threeparttable}
\usepackage{url} 
\usepackage{subcaption}

 \newcommand{\nn}{\nonumber}
\newcommand{\explain}{\nn&\quad\blacktriangleright}

\usepackage{listings} 

\usepackage[T1]{fontenc}
\usepackage[utf8]{inputenc}
 
\usepackage{amsthm}
\usepackage{xcolor, tikz}

\newtheorem{lem}{Lemma}
 
\newtheorem{rem}{Remark}

\usepackage{bm}
 
\DeclareMathOperator{\sign}{sign}

\DeclareMathOperator{\diag}{diag}

\DeclareMathOperator{\polylog}{polylog}

\newtheorem{theorem}{Theorem}   
 
\newtheorem{assumption}{Assumption}

\newcommand{\calL}{\mathcal{L}}
\newcommand{\calM}{\mathcal{M}}
\newcommand{\calN}{\mathcal{N}}

\newcommand{\calR}{\mathcal{R}}
\newcommand{\calS}{\mathcal{S}}

\newif\ifjrnotes
\jrnotestrue   

\ifjrnotes
  \newcommand{\jrnote}[1]{%
    {\color{teal}\footnotesize[JR: #1]}%
  }
\else
  \newcommand{\jrnote}[1]{}
\fi

\title{Finite-Sample Performance of Gradient Descent in Logistic Regression with Gaussian Design}

\author{Junren Chen\thanks{Department of Mathematics, University of Maryland, College Park.} \and Arya Mazumdar\thanks{Hal{\i}c{\i}o\u{g}lu Data Science Institute, UC San Diego.}  
}
\date{\today}

\begin{document}
\maketitle
 
\begin{abstract}
 We consider the parameter estimation problem in logistic regression with Gaussian design: the estimation of a fixed unknown parameter $\theta^*\in \mathbb{R}^d$ ($\|\theta^*\|_2\ge 1$) from $n$ i.i.d. samples $\{(x_i,y_i)\}_{i=1}^n$, where $x_i\sim N(0,I_d)$ and $y_i|x_i \sim {\rm Bernoulli}(1/(1+\exp(-x_i^\top \theta^*)))$. Our main aim is to characterize the finite-sample estimation performance and convergence behavior of gradient descent (GD) on the maximum likelihood objective (i.e., the logistic loss). Under small $O(1)$ stepsize  and $0$ initialization, we show that GD   linearly converges to a small neighborhood of $\theta^*$  achieving an $\ell_2$ error of order $O(\sqrt{\|\theta^*\|_2^5d/n})$. This substantially goes beyond existing theoretical results that lack non-asymptotic estimation error rate and exhibit much slower parameter convergence. We also establish a {\it faster} local linear convergence to the same statistical error under a large $\Theta(\|\theta^*\|_2)$ stepsize. The main technical component is to show that the gradient of the logistic loss satisfies a certain approximate invertibility condition (AIC). To that end, we uniformly control the deviation of the gradient from its population counterpart by covering and peeling arguments, and then show that the population GD is a contraction by a delicate  analysis based on the eigenvalues of population Hessian matrices.  Finally, we build upon the recent work \cite{matsumoto2025learning} and devise a novel efficient estimator that attains a sharper rate in high dimensions. This indicates that the existing non-asymptotic guarantees exhibit sub-optimal dependence on $\|\theta^*\|_2$, and that in many regimes $\Theta(\sqrt{\|\theta^*\|_2d/n})$ is the tight estimation error rate. Numerical examples are provided to corroborate our theoretical results.  
\end{abstract}
 \section{Introduction}
Logistic regression is a foundational model in statistics, optimization and machine learning. Given $\{(x_i,y_i)\}_{i=1}^n\subset \mathbb{R}^d\times\{0,1\}$ with binary responses, it assumes the existence of some underlying parameter $\theta^*\in \mathbb{R}^d$ such that the data  follows
$
    y_i|x_i\sim {\rm Bernoulli}(s(x_i^\top \theta^*)),~i\in[n] $
with the sigmoid function $s(a) = 1/(1+e^{-a})$, meaning that $\mathbb{P}(y_i=1)=1/(1+e^{-x_i^\top \theta^*})$ and $\mathbb{P}(y_i=0)=1/(1+e^{x_i^\top \theta^*})$.  
A standard estimator of the true parameter $\theta^*$ is the maximum likelihood
estimator  (MLE) \cite{mccullagh2019generalized} that minimizes the logistic loss function \footnote{Throughout this paper, we adopt the convention that the responses $\{y_i\}_{i=1}^n$ are $\{0,1\}$-valued. One can readily transfer our development to the (equivalent) convention of $\{-1,1\}$-valued responses $\{\tilde{y}_i\}_{i=1}^n$ (e.g., \cite{hsu2024sample,matsumoto2025learning,wu2023implicit}) by letting $y_i = \frac{1}{2}(\tilde{y}_i+1)$, under which the logistic loss reads $L(\theta)=\frac{1}{n}\sum_{i=1}^n \log(1+e^{-\tilde{y}_ix_i^\top \theta})$.} 
\begin{align*}
    L(\theta) =  \frac{1}{n}\sum_{i=1}^n \big(\log(e^{x_i^\top \theta}+1)-\ y_ix_i^\top \theta\big),
\end{align*}
given by $\hat{\theta}_{mle}=\textrm{arg}\min_{\theta\in \mathbb{R}^d}L(\theta)$. Since MLE does not admit a closed form, one typically can minimize $L(\theta)$ via gradient descent (GD) with some stepsize $\eta$,
\begin{align}
    \label{gd1}
    \tag{GD}
    \theta_{t+1} = \theta_t-\eta \nabla L(\theta_t),\quad t=0,1,\cdots,
\end{align}
 where $\nabla L(\theta) = \frac{1}{n}\sum_{i=1}^n(s(x_i^\top \theta)-y_i)x_i$. The problems of {\it estimation} and {\it optimization} have been widely studied in logistic regression.

  The estimation problem is of interests to statisticians and concerned with the performance of a certain estimator in approximating the underlying $\theta^*$. Perhaps a more fundamental question is to determine the  minimax optimal error rate. To answer these questions with an explicit error rate depending on the model parameters $(n,d,\|\theta^*\|_2)$,\footnote{Here, $\|\theta^*\|_2$ indicates how likely $y_i$ is flipped to $-\sign(x_i^\top \theta^*)$
  and hence represents the signal-to-noise ratio (SNR).} existing works assume i.i.d. samples where $x_i$'s are drawn from a specific distribution \cite{kuchelmeister2024finite,chardon2024finite,hsu2024sample,matsumoto2025learning,kakade2011efficient}.

From an optimization perspective,
  a central   question is to understand the capacity of (\ref{gd1}) in minimizing $L(\theta)$. In light of the convexity of $L(\theta)$, traditional optimization theory \cite[Sections 1, 2]{nesterov2018lectures}  yields the global convergence of (\ref{gd1}) under small enough $\eta$, guaranteeing that $L(\theta_t)$ decreases monotonically. The works \cite{soudry2018implicit,ji2019implicit} provided fine-grained characterizations under small stepsizes. A more recent line of works studied large stepsize GD in logistic regression \cite{wu2023implicit,wu2024large,zhang2025gradient,tyurin2025logistic}, referred to as the edge of stability regime  \cite{cohen2021gradient}.

  This paper is concerned with both the estimation and optimization in logistic regression. Following \cite{kuchelmeister2024finite,chardon2024finite,hsu2024sample,matsumoto2025learning} we assume that the covariates $x_1,\cdots,x_n$ are i.i.d. standard Gaussian vectors 
  and focus on the estimation of the true parameter $\theta^*$. %

 \begin{assumption}
  \label{gaussian}
   $\{(x_i,y_i)\}_{i=1}^n$ are i.i.d. samples with $x_i\sim N(0,I_d)$, and when conditioning on $x_i$, $y_i\sim{\rm Bernoulli}(s(x_i^\top \theta^*))$ for some $\theta^*\in \mathbb{R}^d.$
\end{assumption}

\subsection{Related Works}
As it is unrealistic to cover the vast literature  of logistic regression,  we shall focus on two lines of works that are closest to our paper.  

\noindent
\textbf{Parameter estimation.} The most relevant works are \cite{kuchelmeister2024finite,chardon2024finite} that derived the non-asymptotic error rates for MLE under Assumption \ref{gaussian}.  Kuchelmeister and Van de Geer \cite{kuchelmeister2024finite} first established the finite-sample error rate of minimizing $L(\theta)$ under a Probit model. It was proved \cite[Theorem 2.1.1]{kuchelmeister2024finite} that a constrained variant of MLE achieves $\ell_2$ error of $O(\sqrt{\|\theta^*\|_2^3\frac{d\log n}{n}})$ whenever $n\gtrsim \|\theta^*\|_2 d\log n$. More recently, Chardon et al. \cite{chardon2024finite} established similar guarantee for MLE    with refinement, showing \cite[Theorem 1, Equation (19)]{chardon2024finite} that MLE exists and achieves \begin{align}
    \|\hat{\theta}_{\rm mle}-\theta^*\|_2\lesssim \sqrt{\frac{\|\theta^*\|_2^3d}{n}}\label{mlerate}
\end{align} whenever $n\gtrsim \|\theta^*\|_2d$. On a relevant passing note,  \cite{candes2020phase} derived the phase transitions for the existence of MLE in high-dimensional asymptotics.

  Another two closely related works \cite{hsu2024sample,matsumoto2025learning} studied the estimation of parameter direction $r^*:=\frac{\theta^*}{\|\theta^*\|_2}$ through algorithms other than MLE. Under $\|\theta^*\|_2\gtrsim 1$, \cite{hsu2024sample} determined the minimax error rate as 
$\tilde{\Theta}(\max\big\{\sqrt{\frac{d}{n\|\theta^*\|_2}},\frac{d}{n}\big\})$.
 This notably exhibits a transition to the classical halfspace learning result \cite{long1995sample} as $\|\theta^*\|_2\to \infty$. While computational tractability was not a consideration in the 
 the upper bound in \cite{hsu2024sample} (it involves nonconvex  minimization routines),  more recently \cite{matsumoto2025learning} addressed this   by showing that a fast iterative algorithm attains the near optimal direction estimation error rate. 

  We emphasize all the above results study   Gaussian covariates (cf. Assumption \ref{gaussian}) as a benchmark. Extension to sub-Gaussian covariates is highly challenging in general  \cite{chardon2024finite,candes2020phase}.

\noindent
\textbf{Convergence of (\ref{gd1}).} The optimization path of (\ref{gd1}) has been characterized \cite{soudry2018implicit,ji2019implicit}   under small stepsizes.  More recently larger stepsizes have been extensively studied, with positive results under separable data \cite{wu2023implicit,wu2024large,wu2025large,zhang2025minimax} while negative result under non-separable data \cite{meng2024gradient,meng2025gradient}. These results typically work with   data obeying  separability \cite{soudry2018implicit,wu2023implicit,wu2024large,zhang2025minimax,axiotis2023gradient} and bounded covariates $\|x_i\|_2\le 1$ \cite{ji2019implicit,wu2023implicit,wu2024large}.  Yet, such optimization works do not touch on the estimation problem, i.e., how well GD estimates the true parameter $\theta^*$ (indeed some of these works do not assume existence of $\theta^*$).  Also, the convergence in parameter, if obtained, is often very slow, e.g., the direction convergence rates of  \cite{soudry2018implicit,ji2019implicit} are no faster than $O(\frac{1}{\log t})$.

  \cite{kalai2009isotron,kakade2011efficient} studied  algorithms analogous to perceptron, called GLM-tron and Isotron, with the GLM-tron reducing to (\ref{gd1}) with $\eta=1$ when applied to logistic regression.   Both works considered  covariates independently drawn from the unit $\ell_2$-ball, and that the true parameter is bounded $\|\theta^*\|_2=O(1)$. While these methods appear more versatile, the closest result \cite[Theorem 1]{kakade2011efficient} requires $\tilde{O}(\sqrt{n})$ iterations to reach an   error rate no faster than $O(n^{-1/4})$. 

\subsection{Our Contributions}


Despite the huge literature, a few questions remain open. As a   practically relevant algorithm, how well does (\ref{gd1}) estimate $\theta^*$? Under Gaussian design, are there stronger convergence behaviors beyond existing results (e.g., \cite{ji2019implicit,kakade2011efficient,wu2023implicit})? Statistically, is the non-asymptotic error rate of the MLE  in (\ref{mlerate}) optimal? If not, what is the minimax optimal error rate then?

We answer these questions by establishing the following results: 
\begin{itemize}
    \item Under a small stepsize $\eta\in(0,8)$, (\ref{gd1}) with $\theta_0=0$ linearly converges to $\theta^*$ and achieves an error rate of $O(\sqrt{\|\theta^*\|_2^5d/n})$ (cf. Theorem \ref{thm:small}). Both the estimation error rate and fast linear convergence exceed existing results for GD in logistic regression (cf. Remarks \ref{rem:estimation}, \ref{rem:convergence}). 
    \item Under a large stepsize $\eta\asymp\|\theta^*\|_2$, (\ref{gd1}) locally achieves an acceleration--a faster linear convergence  with a sharper per-iteration contraction factor (cf. Theorem \ref{thm:big}). 
    \item A novel efficient estimator   achieves an error rate that is   sharper than the best known rate $O(\sqrt{\|\theta^*\|_2^3d/n})$ achieved by MLE in high dimensions (cf. Theorem \ref{thm:initial}). This indicates that existing estimation rates are sub-optimal in terms of the dependence on $\|\theta^*\|_2$, and that the minimax optimal rate is $\Theta(\sqrt{\|\theta^*\|_2d/n})$ in some regimes (cf. Remarks \ref{rem:improve}, \ref{rem:sharp}).   
\end{itemize}

\noindent
\textbf{Notation.}
 We denote the gradient of the logistic loss by
$
h_{\theta^*}(\theta)
:=
\frac{1}{n}\sum_{i=1}^n\big(s(x_i^\top\theta)-y_i\big)x_i
=\nabla L(\theta).
$ 
Let $\|u\|_2$ be the $\ell_2$ norm of $u$. Define the unit sphere $\mathbb{S}^{d-1}=\{u\in\mathbb{R}^d:\|u\|_2=1\}$ and unit ball $\mathbb{B}_2^d=\{u\in\mathbb{R}^d:\|u\|_2\le 1\}$. For $r>0$ and $\theta_0\in\mathbb{R}^d$, let $\mathbb{B}_2^d(r)=r\mathbb{B}_2^d$ and $\mathbb{B}_2^d(\theta_0;r)=\theta_0+r\mathbb{B}_2^d$. Let $\|M\|_{\mathrm{op}}$ denote the operator norm of a matrix $M$.
For a random variable $X$, define the sub-Gaussian and sub-exponential norms by
$\|X\|_{\psi_2}:=\inf\{K>0:\mathbb{E}[\exp(X^2/K^2)]\le 2\},~
\|X\|_{\psi_1}:=\inf\{K>0:\mathbb{E}[\exp(|X|/K)]\le 2\}$, respectively. 
We write $T_1=O(T_2)$ (or $T_1\lesssim T_2$) to denote $T_1\le C T_2$, and write $T_1=\Omega(T_2)$ (or $T_1\gtrsim T_2$) to denote $T_1\ge C T_2$. We also write $T_1=\Theta(T_2)$ (or $T_1\asymp T_2$) if both $T_1=O(T_2)$ and $T_1=\Omega(T_2)$ hold. We use $\tilde{O}(\cdot)$, $\tilde{\Omega}(\cdot)$, $\tilde{\Theta}(\cdot)$ to hide log factor in the sample size $n$. More notation will be introduced as needed. 

\noindent
\textbf{Overview.} We provide the main results of GD and an overview of the proof techniques in Section \ref{sec:gd}.    A novel estimator that achieves sharper statistical error rate is developed in Section \ref{sec:sharper}. We provide numerical examples in Section \ref{sec:numeric} to corroborate our theory, and give concluding remarks to close the paper in Section \ref{sec:conclude}. The complete proofs of Theorems \ref{thm:small}, \ref{thm:big}, \ref{thm:initial} appear in Appendices \ref{app:proofsmall}, \ref{app:proofbig}, \ref{app:proofsharper}, with supporting lemmas provided in Appendix \ref{app:lemma}.

\section{Gradient Descent}\label{sec:gd}
 
We assume that
$\|\theta^*\|\ge 1$ throughout this paper (it can be relaxed to $\|\theta^*\|_2\ge c$ for some small constant $c>0$). Note that we cover the strong signal regime with $\|\theta^*\|\gg 1$ as in   recent works \cite{hsu2024sample,kuchelmeister2024finite,chardon2024finite,matsumoto2025learning}.
\begin{assumption}\label{normge1}
    $\theta^*\in \mathbb{R}^d$ is an unknown fixed parameter satisfying $\|\theta^*\|_2\ge 1$. 
\end{assumption}

  We   formalize the gradient descent in the following Algorithm \ref{alg:gd}, which is also a specific instance of GLM-tron \cite{kakade2011efficient}. Due to the simplicity this has been one of the most commonly used algorithm in logistic regression, and we study under Gaussian designs the following question: 
  \[\textit{What are the finite-sample estimation performance and convergence behavior of Algorithm~\ref{alg:gd}?}\]


\begin{algorithm}[ht!]   
\caption{Gradient Descent for Logistic Regression}\label{alg:gd}
\begin{algorithmic}[1]   
  \Require data $\{(x_i,y_i)\}_{i=1}^n$, initialization $\theta_0$, stepsize $\eta$
 
  \For{$t = 0,1,2,\cdots$} 
   \begin{align*}
        \theta_{t+1} = \theta_t-\eta \cdot\frac{1}{n}\sum_{i=1}^n (s(x_i^\top \theta_t)-y_i)x_i
    \end{align*}
  \EndFor

  \Ensure $\{\theta_t\}_{t\ge 0}$
\end{algorithmic}
\end{algorithm}

While a large body of convergence results exist, little is known about the estimation performance of Algorithm \ref{alg:gd}. We shall see that our results  are novel to the literature in many ways.

\subsection{Small Stepsize}
Traditional convex optimization theory \cite{nesterov2018lectures} guarantees that Algorithm \ref{alg:gd} converges globally under $\eta\le 2/\beta$, where the smoothness  parameter $\beta:=\sup_{\theta\in \mathbb{R}^d}\|\nabla^2L(\theta)\|_{op}=\sup_{\theta\in \mathbb{R}^d}\|\frac{1}{n}\sum_{i=1}^n s'(x_i^\top \theta)x_ix_i^\top \|_{op}$. By $s'(a)\le s'(0)=\frac{1}{4}$ we obtain $\beta = \|\frac{1}{4n}\sum_{i=1}^n x_ix_i^\top \|_{op}$, and 
under Gaussian designs with $n\gtrsim d$  one has $\beta \approx \frac{1}{4}$ (cf. Lemma \ref{lem:gaubound}). Therefore GD with any fixed $\eta\in(0,8)$  converges globally and achieves a monotonically decreasing logistic loss, referred to as the  stable regime.

But how close are the iterates to the true parameter $\theta^*$? 
In the same stable regime we answer this question by the following result (Theorem \ref{thm:small}). Note that we put $\eta\in[0.1,7.9]$ to avoid the dependence of the contraction factor on $\eta$, but it can be observed from our proof that, under $\eta\in(0,8)$, we obtain a contraction factor of $1- \tilde{c}\|\theta^*\|_2^{-3}\eta(8-\eta)$ for some universal constant $\tilde{c}$.
Also, our result is essentially a {\it global} convergence (although we consider $\theta_0=0$ for concreteness). We show that $\{\theta_t\}_{t\ge 0}$ {\it linearly} converges to a small radius-$O(\sqrt{\|\theta^*\|_2^5d/n})$ $\ell_2$ ball surrounding $\theta^*$ within $O(\|\theta^*\|_2^3\log\frac{n}{d})$ iterations, whenever $n=\tilde{\Omega}_{\|\theta^*\|_2}(d)$. 

\begin{theorem}[Small stepsize GD] \label{thm:small} 
Under Assumptions \ref{gaussian}--\ref{normge1}, for any $\eta\in[0.1,7.9]$, there exist some universal constants $C,c,\tilde{C}>0$ such that the following holds.  If 
\begin{align}
    n\ge C\Big(\|\theta^*\|_2^6 d\log n + \|\theta^*\|_2^6\log\|\theta^*\|_2\Big),\label{samcom_small}
\end{align}
  then with probability at least $1-2e^{-d}$, $\{\theta_t\}_{t\ge 0}$ generated by Algorithm \ref{alg:gd} with $\theta_0=0$ and stepsize $\eta$ satisfies
\begin{align}\label{smallconverge}
    \|\theta_t-\theta^*\|_2\le \bigg(1-\frac{c}{\|\theta^*\|_2^3}\bigg)^t\|\theta^*\|_2+ \tilde{C}\sqrt{\frac{\|\theta^*\|_2^5d}{n}},\quad \forall t=0,1,\cdots.
\end{align} 
\end{theorem}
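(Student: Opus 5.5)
The plan is to establish an approximate invertibility condition (AIC) for the gradient map $h_{\theta^*}$ and then iterate it. Write $\bar h(\theta):=\mathbb{E}[(s(x^\top\theta)-y)x]$ for the population gradient, so $\bar h(\theta^*)=0$. Then
\begin{align*}
\theta_{t+1}-\theta^* = \big(\theta_t-\theta^*-\eta\bar h(\theta_t)\big) - \eta\big(h_{\theta^*}(\theta_t)-\bar h(\theta_t)\big).
\end{align*}
It suffices to prove two things on a ball $\mathbb{B}_2^d(\theta^*;\|\theta^*\|_2)$, which contains $\theta_0=0$.
\begin{itemize}
\item[(i)] Population contraction: $\|\theta-\theta^*-\eta\bar h(\theta)\|_2\le(1-c\|\theta^*\|_2^{-3})\|\theta-\theta^*\|_2$.
\item[(ii)] Uniform deviation: $\sup_\theta\|h_{\theta^*}(\theta)-\bar h(\theta)\|_2\lesssim\sqrt{\|\theta^*\|_2^2 d\log n/n}$ or similar, with probability $1-2e^{-d}$.
\end{itemize}
Combining them gives $e_{t+1}\le(1-c\|\theta^*\|^{-3})e_t+\eta\epsilon$. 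Unrolling yields \eqref{smallconverge} with residual $\epsilon\|\theta^*\|_2^3$. We then need $\epsilon\lesssim\sqrt{d/(n\|\theta^*\|_2)}$; this is plausible because the relevant deviation should scale with $\mathbb{E}s'(x^\top\theta)\asymp 1/\|\theta\|_2$. Induction also keeps the iterates inside the ball, since the contraction plus the residual (small under \eqref{samcom_small}) never exceeds $\|\theta^*\|_2$.

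For (i), I would use the integral form $\bar h(\theta)=\bar H(\theta-\theta^*)$ with $\bar H=\int_0^1\mathbb{E}[s'(x^\top\theta_u)xx^\top]\,du$ along the segment $\theta_u=\theta^*+u(\theta-\theta^*)$. By Gaussian rotation invariance, $\mathbb{E}[s'(x^\top v)xx^\top]$ has eigenvalue $\mathbb{E}s'(\|v\|g)\asymp\min(1,\|v\|^{-1})$ orthogonal to $v$. Along $v$ its eigenvalue is $\mathbb{E}[s'(\|v\|g)g^2]\asymp\min(1,\|v\|^{-3})$, and all eigenvalues are at most $1/4$. Along the segment $\|\theta_u\|_2\le2\|\theta^*\|_2$, so $\bar H$ is symmetric with spectrum in $[c\|\theta^*\|_2^{-3},1/4]$. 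Hence $\|I-\eta\bar H\|_{op}\le\max(1-\eta c\|\theta^*\|^{-3},\,\eta/4-1)$, which gives the contraction for $\eta\in[0.1,7.9]$.

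The main obstacle is (ii) with the correct scale. I would split the deviation into a noise term $\frac1n\sum(\mathbb{E}[y_i|x_i]-y_i)x_i$, which is $\theta$-independent and sub-Gaussian with variance $\mathbb{E}s'(x^\top\theta^*)\asymp\|\theta^*\|^{-1}$, and so of size $\sqrt{d/(n\|\theta^*\|)}$. The remaining term is $\frac1n\sum(s(x_i^\top\theta)-s(x_i^\top\theta^*))x_i$ minus its mean. I would write it as $\frac1n\sum\phi_\theta(x_i)x_i$ with $|\phi_\theta|\le|x_i^\top(\theta-\theta^*)|$-type bounds, and control $\sup_{u\in\mathbb{S}^{d-1},\theta}$ of the projections by Bernstein (sub-exponential products) together with an $\epsilon$-net over $u$ and over $\theta$ in the ball. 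The Lipschitz constant $\|\cdot\|\lesssim\frac1n\sum\|x_i\|^2$ makes the nets cost $d\log n$. I would peel over the shells $\|\theta-\theta^*\|_2\in[2^{-k-1},2^{-k}]\|\theta^*\|$ so the deviation is bounded by $\epsilon_0+\delta\|\theta-\theta^*\|_2$ with $\delta\ll\|\theta^*\|^{-3}$, which the contraction absorbs. Ensuring $\delta\lesssim\sqrt{\|\theta^*\|^{k}d\log n/n}$ small enough is exactly where the sample size condition $n\gtrsim\|\theta^*\|_2^6d\log n$ enters.
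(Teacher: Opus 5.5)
Your architecture is the paper's. You split $\|\theta-\theta^*-\eta h_{\theta^*}(\theta)\|_2$ into a population bias term and a deviation term. The contraction comes from the averaged Hessian with spectrum in $[c\|\theta^*\|_2^{-3},1/4]$; your $\bar H$ is exactly the paper's $\int_0^1\nabla F(\theta^*+t(\theta-\theta^*))\,dt$ with $F(z)=m(\|z\|_2)z$, which you reach directly rather than through the first-order Stein identity. The deviation bound has the form $\epsilon_0+\delta\|\theta-\theta^*\|_2$, built from the noise term at $\theta^*$ plus a peeled, covered localized term. You then absorb $\delta$ into the contraction under $n\gtrsim\|\theta^*\|_2^6 d\log n$ and unroll by induction.

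\textbf{The noise term does not give the claimed rate as written.} Write $R=\|\theta^*\|_2$ and $\xi=s(x^\top\theta^*)-y$. The variable $\xi\,x^\top v$ has variance at most $m(R)\lesssim 1/R$, but it is \emph{not} sub-Gaussian at scale $R^{-1/2}$. With probability $\asymp 1/R$ the label is the minority one, and then $|\xi|$ is of order one (indeed arbitrarily close to $1$). For $v\perp\theta^*$, $x^\top v$ is an independent standard Gaussian. Hence $\|\xi\,x^\top v\|_{\psi_2}$ is bounded below by an absolute constant for every $R$. A sub-Gaussian (Hoeffding-type) bound plus a net therefore gives only $\sqrt{d/n}$. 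After the factor $R^3$ from unrolling you would get $\sqrt{R^6d/n}$ instead of $\sqrt{R^5d/n}$.

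The paper handles this with Bernstein's inequality under moment conditions. From $|\xi|\le 1$ and $\mathbb{E}[\xi^2\mid x]=s'(x^\top\theta^*)$, one gets $\mathbb{E}|\xi\,x^\top v|^q\le\mathbb{E}[s'(x^\top\theta^*)|x^\top v|^q]\lesssim 2^q q!/R$ for all $q\ge 3$, using $\mathbb{E}[s'(Rg)|g|^q]\lesssim q!\,R^{-q-1}$. Bernstein with a $1/2$-net then yields $\sqrt{d/(nR)}+d/n$ with probability $1-e^{-d}$. The extra $d/n$ contributes $R^3d/n\le\sqrt{R^5d/n}$ once $n\gtrsim Rd$, so it is harmless. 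The point is that you need this inequality, not a sub-Gaussian one.

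\textbf{The rest of (ii) needs tightening, not new ideas.}
\begin{itemize}
\item Drop the non-localized $\sup_\theta$ formulation. Also drop the heuristic that the deviation scales with $\mathbb{E}s'(x^\top\theta)\asymp 1/\|\theta\|_2$.
\item For the $\theta$-dependent part the only smallness is $|s(x^\top\theta)-s(x^\top\theta^*)|\le\frac14|x^\top(\theta-\theta^*)|$. This gives $\psi_1$-norm $\lesssim r$ on a shell of radius $r$, with no gain in $R$.
\item Consequently $\delta\asymp\sqrt{(d\log n+\log R)/n}$ (your exponent $k$ is $0$). The requirement $\delta\ll R^{-3}$ is then exactly the stated sample size.
\item The peeling must stop at a floor radius; the paper takes $\rho=d/n$. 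The small ball then contributes $\delta\rho\lesssim d/n$ to $\epsilon_0$.
\item The union bound over the $O(\log(Rn/d))$ shells is where the $\log\|\theta^*\|_2$ term in the paper's sample-size condition enters.
\end{itemize}
With these changes your argument coincides with the paper's.
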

To position our result we provide two remarks from the angles of estimation and optimization. 
\begin{rem} \label{rem:estimation}
     To our best knowledge, the only existing estimation guarantee of GD with logistic loss is \cite[Theorem 1]{kakade2011efficient} for bounded designs (i.e., $\|x_i\|_2\le 1$), and our Theorem \ref{thm:small} provides the first non-asymptotic estimation error rate to GD in logistic regression with Gaussian designs.  We believe that there are   major differences between Gaussian designs and bounded designs, for instance, the bounded designs bypass the curse of dimensionality and as a result $d$ does not appear in their error rate. Moreover, a naive comparison finds that our Theorem \ref{thm:small} offers the following improvements: (i) Our error decay rate $O(n^{-1/2})$ is faster than their $O(n^{-1/4})$; (ii) To attain the final error rates our iteration complexity is $\tilde{O}(\|\theta^*\|_2^3)$ while theirs reads $\tilde{O}(\|\theta^*\|_2\sqrt{n})$; (iii) Their result requires $\|\theta^*\|=O(1)$ while we do not have such limitation.
\end{rem}

\begin{rem}  \label{rem:convergence}
Theorem \ref{thm:small}  is also interesting from the optimization aspect in terms of the fast linear convergence in parameter. Specifically, recent works \cite{soudry2018implicit,ji2019implicit,nacson2019convergence} also established the convergence of $\theta_t/\|\theta_t\|_2$ to a specific point, but the convergence rates, such as $O(\frac{\ln \ln t}{\ln t})$ from \cite[Theorem 1]{ji2019implicit}, are much slower than ours.  More are known about convergence in loss, such as the decreasing logistic loss with rate $E(\theta_t):=L(\theta_t)-\min_{\theta}L(\theta)=O(\frac{1}{t})$ \cite{nesterov2018lectures}, a comparable   $E(\theta_t)=O(\frac{\log ^2t}{t})$ of recent work \cite{ji2019implicit}, and the more recent linear convergence $E(\theta_t)<0.1\min_{\theta}L(\theta)+\varepsilon$ whenever $t=\Omega(\log\frac{n}{\varepsilon})$   \cite{axiotis2023gradient}. In light of a standard estimate $\calL(\theta)-\calL(\theta^*)\le \frac{1}{8}\|\theta-\theta^*\|_2^2$ (where $\calL(\theta)=\mathbb{E}L(\theta)$ is the population risk), our Equation (\ref{smallconverge}) immediately yields the linear convergence in excess risk
\begin{align}\label{gotorisk}
    \calL(\theta_t)-\calL(\theta^*) \le \bigg(1-\frac{c'}{\|\theta^*\|_2^3}\bigg)^t\|\theta^*\|_2^2 + C'\frac{\|\theta^*\|_2^5d}{n}, \quad \forall t=0,1,\cdots.
\end{align}
We expect that  $\calL(\theta_t)-\calL(\theta^*)$ is comparable to $L(\theta_t)-\min_{\theta}L(\theta)$ up to some concentration error terms and $\calL(\theta_{mle})-\calL(\theta^*)=O(d/n)$ \cite[Theorem 1]{chardon2024finite}. Therefore, our linear convergence in risk is comparable to \cite{axiotis2023gradient} and faster than \cite{nesterov2018lectures,ji2019implicit}. 
\end{rem}
Continuing from Remark \ref{rem:convergence}, we however emphasize that these convergence results are established under different assumptions and therefore  a close comparison may not be possible. In particular, the aforementioned prior results do not rely on Gaussian data, but rather some of them assume bounded designs and/or separable data (possibly with a margin) \cite{axiotis2023gradient,ji2019implicit,soudry2018implicit,nacson2019convergence}. We believe that all these results are complementary to each other. 

\subsection{Large Stepsize}
Due to the ubiquitousness of large stepsize GD in machine learning optimization   \cite{cohen2021gradient}, large stepsize GD for logistic regression receives significant recent interests \cite{wu2023implicit,wu2024large,zhang2025minimax,wu2025large}. Our second main result, the following Theorem \ref{thm:big}, establishes the local linear convergence of GD with a stepsize that can be much larger than $2/\beta\approx 8$.  

\begin{theorem}[Large stepsize GD]
    \label{thm:big}
    Under Assumptions \ref{gaussian}--\ref{normge1}, there exist some universal constants $c_0,c,C,\tilde{C}>0$ such that the following holds. Fix any $\eta>0$ such that
    \begin{align}\label{largestepcon}
        \frac{c}{\|\theta^*\|_2^2q'(\|\theta^*\|_2)}<\eta<\frac{2-c\|\theta^*\|_2^{-2}}{m(\|\theta^*\|_2)}
    \end{align}
    where $m(\tau)=\mathbb{E}_{g\sim N(0,1)}[s'(\tau g)]$,  $q(\tau) = \mathbb{E}_{g\sim N(0,1)}[s(\tau g)g] $ and $q'(\tau)=\mathbb{E}_{g\sim N(0,1)}[s'(\tau g)g^2]$,
    if
    \begin{align}\label{samplebig}
        n\ge C\bigg(\|\theta^*\|_2^6d\log n+\|\theta^*\|_2^7d\bigg),
    \end{align}
    then with probability at least $1-2e^{-d}$, $\{\theta_t\}_{t\ge 0}$ generated by Algorithm \ref{alg:gd} with $\theta_0\in \mathbb{B}_2^d(\theta^*;\frac{c_0}{\|\theta^*\|_2})$ and stepsize $\eta$  satisfies 
    \begin{align}
         \|\theta_t-\theta^*\|_2\le \bigg(1-\frac{c}{\|\theta^*\|_2^2}\bigg)^t\frac{c_0}{\|\theta^*\|_2} + \tilde{C}\sqrt{\frac{\|\theta^*\|_2^5d}{n}},\quad \forall t=0,1,\cdots.
    \end{align}
    Moreover, a specific stepsize $
        \eta=\frac{1}{m(\|\theta^*\|_2)}$ satisfies (\ref{largestepcon}). 
\end{theorem}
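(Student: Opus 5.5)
We prove Theorem \ref{thm:big} through an approximate invertibility decomposition of the GD map around $\theta^*$. Write the update as $\theta_{t+1}-\theta^* = \big(\theta_t-\theta^*-\eta\nabla\calL(\theta_t)\big) - \eta\big(h_{\theta^*}(\theta_t)-\nabla\calL(\theta_t)\big)$, where $\calL=\mathbb{E}L$ and $\nabla\calL(\theta^*)=0$. The plan has three parts. First, show the population map $\theta\mapsto\theta-\eta\nabla\calL(\theta)$ is a contraction toward $\theta^*$ with factor $1-c\|\theta^*\|_2^{-2}$ on $\mathbb{B}_2^d(\theta^*;c_0/\|\theta^*\|_2)$. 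Second, bound the empirical deviation uniformly on that ball. Third, run an induction showing the iterates never leave the ball.

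\textbf{Population contraction.} We write $\theta-\eta\nabla\calL(\theta)-\theta^*=(I-\eta\bar H)(\theta-\theta^*)$ with $\bar H=\int_0^1\nabla^2\calL(\theta^*+u(\theta-\theta^*))\,du$. By rotational invariance, $\nabla^2\calL(\theta)=\mathbb{E}[s'(x^\top\theta)xx^\top]$ has eigenvalue $q'(\|\theta\|_2)$ in direction $\theta/\|\theta\|_2$ and eigenvalue $m(\|\theta\|_2)$ on its orthogonal complement. The standard Gaussian-smoothing asymptotics give $m(\tau)\asymp 1/\tau$ and $q'(\tau)\asymp 1/\tau^3$. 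On the ball, $\|\theta\|_2=\|\theta^*\|_2(1+O(c_0\|\theta^*\|_2^{-2}))$ and the direction rotates by at most $O(c_0\|\theta^*\|_2^{-2})$ in angle. Hence every such Hessian lies within operator-norm $O(c_0\|\theta^*\|_2^{-3})$ of $H^*:=\nabla^2\calL(\theta^*)$. This uses that $m$ and $q'$ are Lipschitz with derivatives $O(\tau^{-2})$ and $O(\tau^{-4})$, and that the rotation error is weighted by $m-q'\lesssim 1/\tau$.

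Therefore $\|I-\eta\bar H\|_{op}\le \max\{|1-\eta m|,|1-\eta q'|\}+O(\eta c_0\|\theta^*\|_2^{-3})$.
\begin{itemize}
\item The upper condition in (\ref{largestepcon}) gives $\eta m\le 2-c\|\theta^*\|_2^{-2}$, so $|1-\eta m|\le 1-c\|\theta^*\|_2^{-2}$ (up to a slight adjustment of $m$ near $\theta^*$).
\item The lower condition gives $\eta q'\ge c\|\theta^*\|_2^{-2}$, and since $q'\le m$, also $1-\eta q'\ge -1$. Hence $|1-\eta q'|\le 1-c\|\theta^*\|_2^{-2}$.
\item Since $\eta\lesssim \|\theta^*\|_2$, the perturbation is $O(c_0\|\theta^*\|_2^{-2})$, which is absorbed by taking $c_0$ small.
\end{itemize}
The main obstacle is making this eigenvalue perturbation precise while keeping the constant in front of $\|\theta^*\|_2^{-2}$ positive. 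For the specific stepsize $\eta=1/m$, note $q'/m\asymp \tau^{-2}$ by the asymptotics, which verifies (\ref{largestepcon}).

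\textbf{Uniform deviation.} Next we need $\sup_{\theta\in\mathbb{B}_2^d(\theta^*;c_0/\|\theta^*\|_2)}\|h_{\theta^*}(\theta)-\nabla\calL(\theta)\|_2\lesssim \sqrt{\|\theta^*\|_2^{1}d\log n/n}$ together with a small local Lipschitz remainder, so that after multiplication by $\eta\asymp\|\theta^*\|_2$ the per-step noise is $\epsilon\lesssim\|\theta^*\|_2^{-2}\sqrt{\|\theta^*\|_2^{5}d/n}$. We reuse the covering and peeling machinery developed for Theorem \ref{thm:small}. For fixed $\theta$ and unit $v$, the summand $(s(x^\top\theta)-y)x^\top v$ is sub-exponential. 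Its variance is small, of order $\mathbb{E}|s(x^\top\theta^*)-y|\asymp 1/\|\theta^*\|_2$, because the label noise concentrates on the slab $|x^\top\theta^*|\lesssim1$, of Gaussian measure $\asymp 1/\|\theta^*\|_2$. Bernstein's inequality, a net over $v$ and over $\theta$ in the ball, and a Lipschitz bound for $\theta\mapsto h_{\theta^*}(\theta)$ (its Hessian has operator norm $\le\frac14\|\frac1n\sum x_ix_i^\top\|_{op}$, cf. Lemma \ref{lem:gaubound}) then give the bound with probability $1-2e^{-d}$. The sample condition (\ref{samplebig}), including the $\|\theta^*\|_2^7d$ term, ensures the Bernstein linear term is dominated and that the noise radius $\tilde C\sqrt{\|\theta^*\|_2^5d/n}$ is below $c_0/(2\|\theta^*\|_2)$.

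\textbf{Induction.} Combining the two parts, on the event of the deviation bound, any $\theta_t$ in the ball satisfies
\[
\|\theta_{t+1}-\theta^*\|_2\le (1-c\|\theta^*\|_2^{-2})\|\theta_t-\theta^*\|_2+\epsilon .
\]
Because $\epsilon\cdot c^{-1}\|\theta^*\|_2^2\le \tilde C\sqrt{\|\theta^*\|_2^5d/n}\le c_0/(2\|\theta^*\|_2)$, the right-hand side stays at most $c_0/\|\theta^*\|_2$, so $\theta_{t+1}$ remains in the ball. Unrolling the recursion and summing the geometric series yields the claimed bound for all $t$.
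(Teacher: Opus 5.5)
Your population-contraction step is correct. Bounding every Hessian $\nabla^2\calL(\theta)$ on the ball within $O(c_0\|\theta^*\|_2^{-3})$ of $\nabla^2\calL(\theta^*)$ in operator norm is an equivalent, slightly cleaner substitute for the paper's explicit expansion $F(u)-F(\theta^*)=J_{\theta^*}h+\calR$ with $\|\calR\|_2\lesssim\|h\|_2^2/\|\theta^*\|_2^2$. Your induction also matches the paper's. The gap is in the uniform deviation step, and it has three parts.
\begin{itemize}
\item The exponent is off. For $\eta\asymp\|\theta^*\|_2$ to yield $\epsilon\lesssim\sqrt{\|\theta^*\|_2d/n}$, the deviation must be of order $\sqrt{d/(n\|\theta^*\|_2)}$, not $\sqrt{\|\theta^*\|_2d\log n/n}$.
\item You silently drop a logarithm. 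You take a single supremum of $\|h_{\theta^*}(\theta)-\nabla\calL(\theta)\|_2$ over $\mathbb{B}_2^d(\theta^*;c_0/\|\theta^*\|_2)$ via a one-scale net over $\theta$. At the resolution forced by the crude Lipschitz constant $\frac14\|\widehat\Sigma\|_{op}$, that net has log-cardinality of order $d\log(n/d)$. So the \emph{leading} term is $\sqrt{d\log(n/d)/(n\|\theta^*\|_2)}$, and you drop the $\log(n/d)$ when passing to $\epsilon$. Carried through honestly, your recursion gives radius $\sqrt{\|\theta^*\|_2^5d\log(n/d)/n}$ rather than $\tilde C\sqrt{\|\theta^*\|_2^5d/n}$. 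Keeping the iterates in the ball would then need $n\gtrsim\|\theta^*\|_2^7d\log(n/d)$, which (\ref{samplebig}) does not provide.
\item Your ``small local Lipschitz remainder'' has no source. The only Lipschitz tool you invoke is the Hessian bound $\frac14\|\widehat\Sigma\|_{op}\approx\frac14$, which is not small. Multiplied by $\eta\asymp\|\theta^*\|_2$, a remainder $\frac14\|\widehat\Sigma\|_{op}\|\theta-\theta^*\|_2$ would overwhelm the $\|\theta^*\|_2^{-2}$ contraction gap.
\end{itemize}

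The missing idea is to localize the deviation at $\theta^*$, as the paper does. Split $h_{\theta^*}(\theta)-\nabla\calL(\theta)=\frac1n\sum_i(s(x_i^\top\theta^*)-y_i)x_i+Z_n(\theta)$, where $Z_n(\theta)$ is the centered increment $\frac1n\sum_i(s(x_i^\top\theta)-s(x_i^\top\theta^*))x_i$ minus its mean.
\begin{itemize}
\item The first term involves no supremum over $\theta$. A $\frac12$-net over directions $v$ plus the moment form of Bernstein's inequality gives $\sqrt{d/(n\|\theta^*\|_2)}+d/n$ with no logarithm. You must verify $\mathbb{E}|(s(x^\top\theta^*)-y)\,x^\top v|^q\lesssim 2^qq!/\|\theta^*\|_2$; a $\psi_1$-norm Bernstein only gives $\sqrt{d/n}$.
\item The increment is controlled by covering and peeling as $\sqrt{(d\log n+\log\|\theta^*\|_2)/n}\,\|\theta-\theta^*\|_2+d/n$. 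Because it is proportional to $\|\theta-\theta^*\|_2$, after multiplying by $\eta\lesssim\|\theta^*\|_2$ it is absorbed into the contraction factor exactly when $n\gtrsim\|\theta^*\|_2^6d\log n$. That is where this term of (\ref{samplebig}) comes from.
\end{itemize}
The logarithm then enters only the sample complexity, and the additive noise is $\eta(\sqrt{d/(n\|\theta^*\|_2)}+d/n)\lesssim\sqrt{\|\theta^*\|_2d/n}+\|\theta^*\|_2d/n$. Your induction then goes through as written, with $n\gtrsim\|\theta^*\|_2^7d$ used only for ball invariance.
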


In light of $m(\tau)\asymp 1/\tau$ and $q'(\tau)\asymp 1/\tau^{3}$ for $\tau\ge 1$ (cf. Lemma \ref{lem:separa}), the stepsize in Equation (\ref{largestepcon}) is of order $\|\theta^*\|_2$, which is  hence much larger than    $\eta\in(0,8)$ (the stable regime) when $\|\theta^*\|_2\gtrsim 1$ (e.g., the specific $\eta=1/m(\|\theta^*\|_2)$ already exceeds $8$ under $\|\theta^*\|_2\ge 2$). Under $\|\theta^*\|_2\gg 1$, the large stepsize offers a notable acceleration, achieving a per-iteration contraction factor   $1-\Theta(\|\theta^*\|_2^{-2})$ that improves on $1-\Theta(\|\theta^*\|_2^{-3})$ in Theorem \ref{thm:small}. Yet the costs are that an accurate $\theta_0$   and a slightly higher sample complexity are needed, and that the stepsize $\eta$ requires some tuning according to $\|\theta^*\|_2.$ The estimation error rate remains the same.

Similar to Equation (\ref{gotorisk}) in Remark \ref{rem:convergence}, we also have a linear convergence in population risk 
\begin{align}\label{gotorisk2}
    \calL(\theta_t)-\calL(\theta^*) \le \bigg(1-\frac{c'}{\|\theta^*\|_2^2}\bigg)^t\frac{c_0'}{\|\theta^*\|_2^2} + C'\frac{\|\theta^*\|_2^5d}{n}, \quad \forall t=0,1,\cdots.
\end{align}
We now put our result in perspective. 
\begin{rem}\label{rem:wu}
    \cite{wu2023implicit} first showed that large stepsize GD  minimizes logistic loss and attains $E(\theta_t)=L(\theta_t)-\min_{\theta}L(\theta)=O(1/t)$, then \cite{wu2024large} provided finer optimization-theoretic characterizations and established a faster convergence $E(\theta_t)=\tilde{O}(1/t^2)$ (cf. their Corollary 2). See also further developments using adaptive stepsizes \cite{zhang2025minimax} or in a regularized setting \cite{wu2025large}. While  the faster convergence is commonly uncovered as a benefit of large stepsize,
we emphasize that  the settings in this line of works and in our paper are essentially different: \cite{wu2023implicit,wu2024large,zhang2025minimax,wu2025large} assume bounded covariates   and separable data with margin $\gamma$ (see, e.g., \cite[Assumption 1]{wu2024large}), while we treat Gaussian designs and, under the sample complexity (\ref{samplebig}), our data $\{(x_i,y_i)\}_{i=1}^n$ is provably non-separable (cf. \cite[Theorem 1]{chardon2024finite}). A naive comparison finds that our (\ref{gotorisk2}) is faster than \cite{wu2023implicit,wu2024large} and comparable to \cite{zhang2025minimax} using adaptive large stepsizes.

Concerning large stepsizes, there is an essential difference between separable data and non-separable data \cite{tyurin2025logistic,wu2025large}: for linearly separable data the logistic loss is minimized at infinity, which enables arbitrarily large stepsize (e.g., \cite{wu2024large}); in contrast, under non-separable data (and mild condition) logistic loss admits a finite, unique minimizer, thus the GD iterates are necessarily unstable if $\eta$ exceeds some threshold \cite{hirsch2013differential}. Indeed the behaviors of large stepsize GD in non-separable data are much more intricate, for instance, under $d\ge 2$, it was shown \cite[Theorem 3]{meng2024gradient} that GD with   $\eta =  c \|\nabla^2 L(\theta_{mle})\|_{op}^{-1}$, $c\in(0,1]$ may converge to a stable cycle. Our Theorem \ref{thm:big} complements the negative results of \cite{meng2024gradient} with a positive radius-$\Theta(\|\theta^*\|_2^{-1})$ local convergence under Gaussian designs.

Finally, while the different assumptions may forbid a close comparison, we again emphasize that the finite-sample statistical rate, $O_{\|\theta^*\|_2}(\sqrt{d/n})$, is unique to our paper (see also Remark \ref{rem:estimation}). 
\end{rem}  
\subsection{Technical Overview}
The proofs of Theorems \ref{thm:small}, \ref{thm:big} essentially depart from the related works that we have compared with. Our major technical component is a structured condition --- the \emph{approximate invertibility condition} (AIC) of the gradient $h_{\theta^*}(u)$ --- which implies the linear convergence of the GD iterates to a small neighborhood of the true parameter $\theta^*$, hence simultaneously settling the  estimation and convergence.  We note in passing that recent works \cite{matsumoto2024binary,matsumoto2025learning,chen2025unified,friedlander2021nbiht,abdalla2026robust,chen2024one} have leveraged similar structured conditions to analyze different nonlinear regression problems.

Specifically, our AIC in Theorem \ref{thm:small} states that, for $\eta\in[0.1,7.9]$ and some universal constants $c,C$,
\begin{align}
    \label{aicsmallmain}\tag{AIC}
     \|u-\theta^*-\eta\cdot h_{\theta^*}(u)\|_2 \le \bigg(1-\frac{c}{\|\theta^*\|_2^3}\bigg)\|u-\theta^*\|_2+ C\bigg(\sqrt{\frac{d}{n\|\theta^*\|_2}}+\frac{d}{n}\bigg) 
\end{align} 
holds for all $u\in \mathbb{B}_2^d(2\|\theta^*\|_2)$. Note that the (\ref{aicsmallmain}) uniformly controls how much the actual gradient descent step, $\eta\cdot h_{\theta^*}(u)$, deviates from the ideal invertibility step $u-\theta^*$.\,\footnote{Suppose we were at $u$, then taking a descent step of $u-\theta^*$ brings us to $u-(u-\theta^*)=\theta^*$, which is the desired parameter.} As such the AIC alone is enough to yield the claimed guarantee in Theorem \ref{thm:small} (cf. Appendix \ref{app:aic2con}). Similarly, a ``local'' AIC over $\mathbb{B}_2^n(\theta^*;\frac{c_0}{\|\theta^*\|_2})$ yields the local convergence of Theorem \ref{thm:big}. See (\ref{localaic}) in Appendix \ref{app:localaictov}.

The main technical work lies in the establishment of (\ref{aicsmallmain}) under Gaussian designs. To that end we start with a decomposition 
\[\|u-\theta^*-\eta\cdot h_{\theta^*}(u)\|_2\le \eta\cdot \underbrace{\|h_{\theta^*}(u) - \mathbb{E}[h_{\theta^*}(u)]\|_2}_{:=T_c} + \underbrace{\|u-\theta^* -\eta\mathbb{E}[h_{\theta^*}(u)]\|_2}_{:=T_e}.\]

\noindent
\textbf{Bounding $T_c$.} We start with a further decomposition   $T_c\le T_{c1}+T_{c2}$, where $ T_{c1}:=\big\|\frac{1}{n}\sum_{i=1}^n \big( \big(s(x_i^\top \theta^*)-y_i\big)x_i-\mathbb{E}\big[\big(s(x_i^\top \theta^*)-y_i\big)x_i\big]\big)\big\|_2$ and $ T_{c2}:= \big\|\frac{1}{n}\sum_{i=1}^n \big( \big(s(x_i^\top u)-s(x_i^\top \theta^*)\big)x_i-\mathbb{E}\big[\big(s(x_i^\top u)-s(x_i^\top \theta^*)\big)x_i\big]\big)\big\|_2.$
It turns out that establishing sharp bounds on the two terms requires rather different techniques. For $T_{c1}$, a direct application of the sub-Gaussianity of $\frac{1}{n}\sum_{i=1}^n(s(x_i^\top \theta^*)-y_i)x_i-\mathbb{E}\big[\big(s(x_i^\top \theta^*)-y_i\big)x_i\big]$ does not give the tight bound since this overlooks the fact that the variance of the multiplier, $s(x_i^\top \theta^*)-y_i$, decreases as $\|\theta^*\|_2$ increases. To capture the effect of $\|\theta^*\|_2$ we have to use the Bernstein's inequality under moment conditions along with a covering argument (cf. Appendix \ref{app:bernsteinmoment}). In contrast, the key feature of $T_{c 2}$ is that it becomes smaller for $u$ being close to  $\theta^*$, thus requiring the technique of localization. A standard approach to this end is via symmetrization, Talagrand's contraction and concentration of empirical process (e.g., \cite{ledoux1991probability,bartlett2002rademacher}), while following this route leads to intricacy in the contraction due to the product process structure of $T_{c2}$. We instead return to a more elementary  peeling argument and decompose $\mathbb{B}_2^d(2\|\theta^*\|_2)$ into a small number of shells, over each of which $\|u-\theta^*\|_2$ either roughly remains at a constant (up to a factor of $2$) or is too small to have an impact. We then control $T_{c2}$ over each shell via covering argument. See Appendix \ref{app:pealing} for details.

\noindent
\textbf{Bounding $T_e$.}  This part of analysis is more elementary but remains rather subtle. Setting the stage, we first (i) use Stein's identity \cite{stein1972bound,stein2004use} to write the   population gradient $\mathbb{E}[h_{\theta^*}(u)]$ as $F(u)-F(\theta^*)$ where $F(u)=m(\|u\|_2)u$, and $m(\tau)=\mathbb{E}_{g\sim N(0,1)}[s'(\tau g)]$ is defined as in Theorem \ref{thm:big}, and then (ii) introduce a useful integral representation $F(u)-F(\theta^*)=\int_0^1\nabla F(\theta^*+t(u-\theta^*))(u-\theta^*)\,dt$ (cf. Lemma \ref{lem:integral}). Based on this, the main bulk of techniques lies an eigenvalue analysis of $\nabla F(z)=m(\|z\|_2)I_d+\frac{m'(\|z\|_2)}{\|z\|_2}zz^\top $,  the population Hessian matrix, along with a careful account of how well the linear map $\nabla F(\theta^*)(u-\theta^*)$ approximates $F(u)-F(\theta^*)$. Note that the eigenvalues of $\nabla F(z)$ are
 $q'(\|z\|_2) $
in the   direction of $z/\|z\|_2$ and $
m(\|z\|_2)
$
in the orthogonal space. Due to 
$
m(\tau)\asymp r^{-1},
~
q'(\tau)\asymp r^{-3}$ for $\tau\ge 1$ (cf. Lemma \ref{lem:separa}), $\nabla F(\|\theta^*\|_2)$ is highly anisotropic when $\|\theta^*\|_2\gg 1$, 
which is exactly what dictates the   different convergence behaviors under small and large stepsizes (cf. Appendices \ref{app:biasthm1} and \ref{app:bias_thm2}).


\section{Sharper Error Rate}\label{sec:sharper}
 In this section, we ask a more fundamental question: \textit{What is the best possible error rate for the estimation of $\theta^*$?}
 In particular, we explore to what extent existing non-asymptotic error rates --- including (\ref{mlerate}) for MLE \cite{kuchelmeister2024finite} and our $O(\sqrt{\|\theta^*\|_2^5d/n})$ for GD --- are (sub-)optimal.    It was shown (e.g., \cite{abramovich2016model}) that the scaling of $(n,d)$, namely $\sqrt{d/n}$, is sharp. Therefore it remains to investigate the dependence on $\|\theta^*\|_2$, in which the MLE rate   is sharper than our GD rate. We show that even the MLE rate $O(\sqrt{\|\theta^*\|_2^3d/n})$ due to \cite{chardon2024finite} exhibits sub-optimal dependence on $\|\theta^*\|_2$. In fact, in this section we  devise a computationally efficient estimator that attains a sharper estimation error rate in high dimensions (cf. Remark \ref{rem:improve}). In some regimes, this error rate reduces to $\tilde{O}(\sqrt{\|\theta^*\|_2d/n})$ and is nearly optimal in light of a matching lower bound (cf. Remark \ref{rem:sharp}).

We shall start with a potential caveat of GD and MLE in parameter estimation: both algorithms simultaneously estimate the   direction $\theta^*/\|\theta^*\|_2$ and the norm  $\|\theta^*\|_2$. This could be problematic because  when $\|\theta^*\|_2$ increases, direction estimation becomes easier while norm estimation becomes harder. In the extreme regime with $\|\theta^*\|_2\to\infty$, logistic regression reduces to the halfspace learning problem where norm estimation is not possible \cite{long1995sample,jacques2013robust,matsumoto2024binary,hsu2024sample}. Therefore, under $\|\theta^*\|_2\gg 1$, the norm estimation error dominates the direction estimation error and dictates the final error rate. 

To overcome this limitation, our main idea is to estimate the direction and norm separately. 
For direction estimation we naturally invoke the  
 recent near-optimal direction estimator due to \cite{matsumoto2025learning}, which \emph{does not require knowledge of $\|\theta^*\|_2$} and, intriguingly, is essentially a {\it (sub)gradient descent with a different loss function}, the ReLU loss;   more details are provided in Appendix \ref{app:subgd}. We restate a non-asymptotic direction estimation bound from \cite{matsumoto2025learning} (see Appendix \ref{app:derilem}), which shows an improved estimate under larger $\|\theta^*\|_2.$ In this section, we use $\polylog(n)$ to denote a generic factor bounded by $(\log n)^C$ for some universal constant $C>0$.

\begin{lem}\cite[Corollary 7 \& Theorem 5]{matsumoto2025learning}\label{lem:directionbound}  Under Assumptions \ref{gaussian}--\ref{normge1} and fix some $T_0\ge \log_2\log_2\frac{n}{d}$, if $n\ge C d\polylog n$ with large enough universal constant $C$, then with probability at least $1-e^{-n}$, $\hat{r}_{T_0}$ obtained by running $\tilde{r}_{t+1} =\hat{r}_t-\frac{\sqrt{2\pi}}{n}\sum_{i=1}^{n}\big(\frac{\sign(x_i^\top \hat{r}_t)+1}{2}-y_i\big)x_i,~\hat{r}_{t+1}=\frac{\tilde{r}_{t+1}}{\|\tilde{r}_{t+1}\|_2},~~ t\ge 0$ (or equivalently, Equation (\ref{lloydequa}) below)
with an arbitrary $r_0\in\mathbb{S}^{d-1}$ satisfies 
\begin{align}
    \bigg\|\hat{r}_{T_0} - \frac{\theta^*}{\|\theta^*\|_2}\bigg\|_2 \le \polylog(n) \bigg(\sqrt{\frac{d}{n\|\theta^*\|_2}}+\frac{d}{n}\bigg).\label{directionerror}
\end{align}
\end{lem}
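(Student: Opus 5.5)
This lemma is a restatement of \cite[Corollary 7 \& Theorem 5]{matsumoto2025learning}, so the plan is to show that our setting falls within theirs and then chain their two results. The translation step comes first. Their analysis uses $\{-1,1\}$-valued labels $\tilde y_i = 2y_i-1$ and the ReLU-type loss $\frac1n\sum_i \mathrm{ReLU}(-\tilde y_i x_i^\top r)$. A subgradient of this loss at $r$ is
\[
-\frac1n\sum_i \mathbf 1\{\tilde y_i x_i^\top r<0\}\,\tilde y_i x_i ,
\]
and this equals $\frac1n\sum_i\big(\frac{\sign(x_i^\top r)+1}{2}-y_i\big)x_i$ up to terms vanishing almost surely. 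Hence the displayed iteration is exactly their normalized subgradient (Lloyd-type) scheme with stepsize $\sqrt{2\pi}$, i.e.\ Equation (\ref{lloydequa}). Next I would check their data assumptions. Gaussian covariates are the same, and their link $P(\tilde y=1\mid x)=f(x^\top r^*)$ with $f$ the sigmoid evaluated at $\|\theta^*\|_2 x^\top r^*$ is covered by Assumptions \ref{gaussian}--\ref{normge1}. Their noise parameter, which measures how often labels disagree with $\sign(x^\top r^*)$, is of order $\mathbb E[s(-\|\theta^*\|_2|g|)]\asymp 1/\|\theta^*\|_2$.

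The core step is their Theorem 5, which I would invoke in its AIC form. Uniformly over $r\in\mathbb S^{d-1}$, the one-step map $r\mapsto \tilde r/\|\tilde r\|_2$ satisfies
\[
\|\hat r_{t+1}-r^*\|_2 \le C\sqrt{\|\hat r_t-r^*\|_2}\,\varepsilon_n + \varepsilon_n' ,
\qquad
\varepsilon_n'\asymp \polylog(n)\Big(\sqrt{\tfrac{d}{n\|\theta^*\|_2}}+\tfrac dn\Big),
\]
or a similar square-root-type contraction. This holds with probability $1-e^{-n}$ provided $n\ge Cd\polylog n$.

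Unrolling the recursion then gives the rate. Starting from an arbitrary $r_0$ with $\|r_0-r^*\|_2\le 2$, the error decays doubly exponentially: $e_{t}\lesssim (d/n)^{1-2^{-t}}$-type behavior. So after $T_0\ge\log_2\log_2(n/d)$ iterations the transient term is at most a constant multiple of the statistical floor, which yields (\ref{directionerror}). This is the content of their Corollary 7.

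The main obstacle is bookkeeping rather than new mathematics. I need to confirm that the constants and the noise parameter in \cite{matsumoto2025learning} translate into the $1/\|\theta^*\|_2$ scaling inside the square root for all $\|\theta^*\|_2\ge1$. The relevant quantities are $\mathbb E[s(-\tau|g|)]\asymp\tau^{-1}$ and the density of the margin near zero, which I would verify via Lemma \ref{lem:separa}-type estimates. I would also check that the arbitrary (not warm-started) initialization is permitted, since any $r_0\in\mathbb S^{d-1}$ has bounded distance to $r^*$ and the first step already lands in their contraction region. These checks are carried out in Appendix \ref{app:derilem}.
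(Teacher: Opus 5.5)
Your reduction follows the paper's route. You match the iteration with the normalized ReLU-subgradient scheme of \cite{matsumoto2025learning} and read the rate $\epsilon_0=\polylog(n)\big(\sqrt{d/(n\|\theta^*\|_2)}+d/n\big)$ off their Corollary~7. You then use their trajectory bound $\|\hat r_t-\theta^*/\|\theta^*\|_2\|_2\le 2^{2^{-t}}\epsilon_0^{1-2^{-t}}$ (their Eq.~(16)). That bound holds from an arbitrary start, so you need neither a guessed one-step ``AIC form'' nor an argument that the first step lands in a contraction region. Two small corrections:
\begin{itemize}
\item the unrolled bound is $\epsilon_0^{1-2^{-t}}$, not $(d/n)^{1-2^{-t}}$;
\item $T_0\ge\log_2\log_2(n/d)$ suffices because $\epsilon_0\ge d/n$ gives $\epsilon_0^{-2^{-T_0}}\le 2$.
\end{itemize}

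\textbf{The gap: the confidence level.} You assert that Theorem~5 yields probability $1-e^{-n}$ under $n\ge Cd\polylog n$. The paper gets this by setting the confidence parameter $\rho=e^{-n}$ in Theorem~5 and claiming, via their Eq.~(14), that this costs only a $\polylog(n)$ factor. No argument can deliver this, because the claim is false.

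Since $a\mapsto\log(1+e^{a})$ has second derivative $s'(a)\le\frac14$, we have $\mathrm{KL}\big(\mathrm{Ber}(s(a))\,\|\,\mathrm{Ber}(s(b))\big)\le(a-b)^2/8$. Take $d\ge2$ and unit vectors $\theta,\theta'$ with $\|\theta-\theta'\|_2=1$; both are allowed by Assumption~\ref{normge1}. The laws of the $n$ samples are then within KL divergence $n/8$. The Bretagnolle--Huber inequality applied to the event $\{\|\hat r_{T_0}-\theta\|_2\ge\frac12\}$ gives
\[
\max_{\vartheta\in\{\theta,\theta'\}}\mathbb{P}_{\vartheta}\Big(\|\hat r_{T_0}-\vartheta\|_2\ge\tfrac12\Big)\ge\tfrac14e^{-n/8}>e^{-n},
\]
while the right-hand side of (\ref{directionerror}) tends to $0$.

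For $d=1$, $\theta^*=1$, $r_0=-1$ you can see the failure directly. The iterate stays at $-1$ forever on the event $\frac1n\sum_i|x_i|\mathbf 1\{y_i=\mathbf 1\{x_i>0\}\}<1/\sqrt{2\pi}$. This average has mean $1/\sqrt{2\pi}+q(1)$, so the event is a constant-size lower deviation, with probability $e^{-cn}$ for some $c$ well below $1$.

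More generally, at $\|\theta^*\|_2=1$, $\log(1/\rho)$ must enter the sample size at least like $\log(1/\rho)/\epsilon^2$, so $\rho=e^{-n}$ is inadmissible. The repair is cheap. Take $\rho=n^{-2}$, which really does add only a $\log n$ factor, and state the lemma with probability $1-n^{-2}$. That is all Theorem~\ref{thm:initial} needs, since it claims only $1-n^{-1}$.
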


Before moving on to the norm estimate, we mention that we utilize sample splitting in this estimator. Particularly, for some $\gamma \in[0.1,0.9]$ such that $\gamma n$ is an integer, we utilize $\{(x_i,y_i)\}_{i=1}^{\gamma n}$ to obtain the direction estimator $\hat{\gamma}_{T_0}$ and then   we further estimate $\|\theta^*\|_2$ using $\{(x_i,y_i)\}_{i=\gamma n+1}^{n}$ (and $\hat{\gamma}_{T_0}$). This ensures some independence and facilitates the analysis, and has been applied to different problems \cite{netrapalli2013phase,LugosiMendelson2021}. Yet we believe that the practical performance of the estimator does not rely on this.

To estimate the norm, we use the remaining samples to construct the so-called ``average'' estimator  
$\hat{v} = \frac{1}{n}\sum_{i=\gamma n+1}^{n} y_ix_i$ \cite{plan2017high,servedio1999pac} whose expectation is given by (cf. Lemma \ref{Ehatv}) 
\begin{align}\label{Ehatvre}
    &v: = \mathbb{E}[\hat{v}] = \mathbb{E}[y_ix_i]= q(\|\theta^*\|_2) \theta^*/\|\theta^*\|_2 
\end{align}
where $q(\tau) = \mathbb{E}[s(\tau g)g]=\mathbb{E}[s'(\tau g)\tau]$ has appeared in Theorem \ref{thm:big}.  
  By Lemma \ref{lem:mono}, $q$ has an inverse on $(0,\infty)$, denoted by $q^{-1}:(0,\frac{1}{\sqrt{2\pi}})\to (0,\infty)$. Hence we seek an estimate of $q(\|\theta^*\|_2)$ and then apply $q^{-1}$ to get an estimate of $\|\theta^*\|_2$. Due to $\|v\|_2=q(\|\theta^*\|_2)$ from (\ref{Ehatvre}), a naive estimate of $q(\|\theta^*\|_2)$ is $\|\hat{v}\|_2$, yet the concentration of $\|\hat{v}\|_2$ about $\|v\|_2$ is under Euclidean norm and necessarily involves dimension $d$. It turns out that a better idea is based on the inner product $\langle v,\theta^*/\|\theta^*\|_2\rangle$,  which leads us to using the optimal direction estimator, $\hat{r}_{T_0},$ as a surrogate of $\theta^*/\|\theta^*\|_2$. Hence we propose to use $\langle \hat{v},\hat{r}_{T_0}\rangle$ to estimate $q(\|\theta^*\|_2)$,  and in turn $q^{-1}(\langle \hat{v},\hat{r}_{T_0}\rangle)$ is our final norm estimator.

  Putting pieces together, we formalize the estimator in Algorithm \ref{alg:spectral}. Our main result in this section provides a non-asymptotic $\ell_2$ error rate achieved by this novel estimator. 

\begin{algorithm}[ht!]   
\caption{An Estimator with Improved Error Rate}\label{alg:spectral}
\begin{algorithmic}[1]   
  \Require data $\{(x_i,y_i)\}_{i=1}^n$, iteration number $T_0$, initialization $\hat{r}_0\in \mathbb{S}^{d-1}$,   parameter $\nu\in(0,1)$ (such that $\nu n$ is an integer)

  \State \textbf{Step 1: Direction estimator}

  \For{$t = 0,1,2,\cdots,T_0-1$} 
   \begin{align}
        \hat{r}_{t+1} = \frac{\hat{r}_t-\frac{\sqrt{2\pi}}{\nu n}\sum_{i=1}^{\nu n}\big(\frac{\sign(x_i^\top \hat{r}_t)+1}{2}-y_i\big)x_i}{\|\hat{r}_t-\frac{\sqrt{2\pi}}{\nu n}\sum_{i=1}^{\nu n}\big(\frac{\sign(x_i^\top \hat{r}_t)+1}{2}-y_i\big)x_i\|_2}\label{lloydequa}
    \end{align}
  \EndFor

   \State \textbf{Step 2: Norm estimator} 

   \State Compute 
   $\hat{v} = \frac{1}{(1-\nu)n}\sum_{i=\nu n+1}^ny_ix_i$
   \State Compute 
   $\hat{\lambda}= q^{-1}(\langle \hat{r}_{T_0},\hat{v}\rangle)$ 
   
  \Ensure $\hat{\theta}= \hat{\lambda}\cdot \hat{r}_{T_0}$
\end{algorithmic}
\end{algorithm}

 \begin{theorem}[Performance of Algorithm \ref{alg:spectral}]\label{thm:initial}
 Under Assumptions \ref{gaussian}--\ref{normge1}, if $
     n\ge \polylog n\big(\|\theta^ *\|_2d+\|\theta^*\|_2^4\big)$ 
 then with probability at least $1-n^{-1}$, $\hat{\theta}$ obtained by Algorithm \ref{alg:spectral} with $T_0\ge \log_2\log_2\frac{n}{d}$, $\hat{r}_0=(1,0,\cdots,0)^\top $ and $\nu\in[0.1,0.9]$ satisfies  
     \begin{align}
         \|\hat{\theta}-\theta^*\|_2 \le \polylog (n)\cdot \bigg(\|\theta^*\|_2^3 \sqrt{\frac{1}{n}} + \sqrt{\frac{\|\theta^*\|_2d}{n}} + \frac{\|\theta^*\|_2^2d}{n}\bigg).\label{fasterrate1}
     \end{align}
 \end{theorem}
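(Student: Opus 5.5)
We split the error along direction and norm. Write $\tau=\|\theta^*\|_2$, $r^*=\theta^*/\tau$, $\hat r=\hat r_{T_0}$. Then
\[
\|\hat\theta-\theta^*\|_2=\|\hat\lambda\hat r-\tau r^*\|_2\le |\hat\lambda-\tau|+\tau\|\hat r-r^*\|_2 .
\]
The direction term follows from Lemma \ref{lem:directionbound}, applied to the first $\nu n\asymp n$ samples. It gives $\tau\|\hat r-r^*\|_2\le \polylog(n)\big(\sqrt{\tau d/n}+\tau d/n\big)$ with probability $1-e^{-\nu n}$. This produces the terms $\sqrt{\tau d/n}$ and (weaker than) $\tau^2 d/n$ in (\ref{fasterrate1}). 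The rest of the plan controls $|\hat\lambda-\tau|=|q^{-1}(\langle\hat r,\hat v\rangle)-q^{-1}(q(\tau))|$.

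\textbf{Step 1: concentration of $\langle \hat r,\hat v\rangle$.} We condition on the first batch, so $\hat r$ is fixed and independent of $\hat v$. Using (\ref{Ehatvre}), $\mathbb{E}[\langle\hat r,\hat v\rangle\,|\,\hat r]=q(\tau)\langle \hat r,r^*\rangle$. The summands $y_i\langle x_i,\hat r\rangle$ are sub-Gaussian with $O(1)$ norm, so Hoeffding gives $|\langle\hat r,\hat v\rangle-q(\tau)\langle\hat r,r^*\rangle|\lesssim\sqrt{\log n/n}$ with probability $1-n^{-2}$. Also $1-\langle\hat r,r^*\rangle=\frac12\|\hat r-r^*\|_2^2$, and $q(\tau)\asymp 1$ for $\tau\ge1$ (since $q(\tau)\to 1/\sqrt{2\pi}$). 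Hence
\[
|\langle\hat r,\hat v\rangle-q(\tau)|\le \polylog(n)\Big(\sqrt{1/n}+\frac{d}{n\tau}+\frac{d^2}{n^2}\Big)=:\epsilon .
\]

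\textbf{Step 2: inverting $q$ (main obstacle).} Because $q(\tau)\uparrow 1/\sqrt{2\pi}$ and $q'(\tau)\asymp\tau^{-3}$ (Lemma \ref{lem:separa}), $q^{-1}$ has derivative of order $\tau^3$ near $q(\tau)$. The key point is to show that $\langle\hat r,\hat v\rangle$ stays in the range of $q$, and stays in a region where $q'\gtrsim\tau^{-3}$, meaning it does not overshoot $1/\sqrt{2\pi}$. For this I would use $1/\sqrt{2\pi}-q(\tau)\asymp\tau^{-2}$, which one checks from $q(\tau)=\mathbb{E}[|g|\,s(\tau|g|)]-\frac12\mathbb{E}|g|$ and an expansion near $g=0$. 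The sample condition $n\ge\polylog(n)(\tau d+\tau^4)$ ensures $\epsilon\ll\tau^{-2}$: indeed $\sqrt{1/n}\le\tau^{-2}/\polylog$ and $d/(n\tau)\le\tau^{-2}/\polylog$. So $\langle\hat r,\hat v\rangle\in[q(\tau/2),q(2\tau)]$, and by the mean value theorem with $q'\gtrsim\tau^{-3}$ on $[\tau/2,2\tau]$,
\[
|\hat\lambda-\tau|\lesssim\tau^3\epsilon\le\polylog(n)\Big(\tau^3\sqrt{1/n}+\frac{\tau^2 d}{n}+\frac{\tau^3 d^2}{n^2}\Big).
\]
The last term is at most $\tau^2d/n$ because $\tau d/n\le 1$. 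This yields (\ref{fasterrate1}).

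Handling the regime $\tau\lesssim$ const is routine, since then $q'\asymp1$. A union bound over the two steps gives probability at least $1-n^{-1}$. The delicate parts are the asymptotics $1/\sqrt{2\pi}-q(\tau)\asymp\tau^{-2}$ and the uniform lower bound on $q'$ over $[\tau/2,2\tau]$. If these are not already in Lemma \ref{lem:separa} or Lemma \ref{lem:mono}, I would prove them by direct integral estimates.
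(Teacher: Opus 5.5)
Your proposal is correct and follows the paper's proof: the same split into $|\hat\lambda-\|\theta^*\|_2|+\|\theta^*\|_2\|\hat r-r^*\|_2$, Lemma \ref{lem:directionbound} for the direction term, and inversion of $q$ via the gap $1/\sqrt{2\pi}-q(\tau)\asymp\tau^{-2}$ together with $q'\asymp\tau^{-3}$ and the mean value theorem. The only deviation is that you concentrate $\langle\hat r,\hat v\rangle$ directly around its conditional mean $q(\tau)\langle\hat r,r^*\rangle$ rather than using the paper's $I_1$, $I_{21}$, $I_{22}$ split, which is harmless and slightly cleaner; note also that the window $[q(\tau/2),q(2\tau)]$ follows most directly from integrating $q'\asymp(1+\tau)^{-3}$, since the gap estimate alone only gives $\hat\lambda\lesssim\tau$.
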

 \begin{rem}\label{rem:improve}
     Note that the rate (\ref{fasterrate1}) improves on the best known rate, the $O(\sqrt{\|\theta^*\|_2^3d/n})$ for MLE \cite{kuchelmeister2024finite,chardon2024finite}, as long as  $\polylog(n)\|\theta^*\|_2^3\frac{1}{\sqrt{n}}\le \sqrt{\|\theta^*\|_2^3d/n}$, or equivalently, $d\ge\|\theta^*\|_2^3\polylog(n)$, which covers most high-dimensional regimes of interest. Similarly, it improves on our GD rate, $O(\sqrt{\|\theta^*\|_2^5d/n})$, whenever $d\ge \|\theta^*\|_2\polylog(n)$. 
 \end{rem}
\begin{rem}
\label{rem:sharp}
  Equation (\ref{fasterrate1}) reduces to $\tilde{O}(\sqrt{\|\theta^*\|_2d/n})$ whenever $\|\theta^*\|_2^3 \sqrt{\frac{1}{n}} + \frac{\|\theta^*\|_2^2d}{n}\lesssim\sqrt{\frac{\|\theta^*\|_2d}{n}}$, or equivalently, 
  $
         n\gtrsim \|\theta^*\|_2^3d+\|\theta^*\|_2^5. $ In this regime, the rate $\tilde{O}(\sqrt{\|\theta^*\|_2d/n})$ achieved by Algorithm \ref{alg:spectral} is nearly sharp in light of a matching lower bound $\Omega(\sqrt{\|\theta^*\|_2d/n})$. In fact, suppose that $\tilde{\theta}$ is an estimator for $\theta^*$, then $\tilde{\theta}/\|\tilde{\theta}\|_2$ is an estimator for $\theta^*/\|\theta^*\|_2$ and   therefore performs no better than an existing lower bound  $\|\frac{\tilde{\theta}}{\|\tilde{\theta}\|_2}-\frac{\theta^*}{\|\theta^*\|_2}\|_2\gtrsim \sqrt{\frac{d}{\|\theta^*\|_2n}}$ \cite[Theorem 1]{hsu2024sample}; combining with a known bound $\|\frac{\tilde{\theta}}{\|\tilde{\theta}\|_2}-\frac{\theta^*}{\|\theta^*\|_2}\|_2\le \frac{2\|\tilde{\theta}-\theta^*\|_2}{\|\theta^*\|_2}$ (e.g., \cite[Fact 13]{matsumoto2025learning}), we obtain $\|\tilde{\theta}-\theta^*\|_2\gtrsim \sqrt{\frac{\|\theta^*\|_2d}{n}}$. Therefore, we conclude that the tight error rate in the regime of $n\gtrsim \|\theta^*\|_2^3d+\|\theta^*\|_2^5$ scales as $\sqrt{\|\theta^*\|_2d/n}$. Nonetheless, the  fundamental question of the optimal rate under $n\lesssim \|\theta^*\|_2^3d+\|\theta^*\|_2^5$ remains open.
 \end{rem}
 \begin{rem}\label{rem:exten}
 We mention that Algorithm \ref{alg:spectral} generalizes to high-dimensional sparse case where the true parameter $\theta^*$ is $k$-sparse. To that end, we only need to incorporate a hard thresholding operator into (\ref{lloydequa}); see \cite[Algorithm 1]{matsumoto2025learning}. Under $m=\tilde{\Omega}(\|\theta^*\|_2k+\|\theta^*\|_2^4)$, this sparse counterpart of  Algorithm \ref{alg:spectral} achieves $\tilde{O}(\frac{\|\theta^*\|_2}{\sqrt{n}}+\sqrt{\frac{\|\theta^*\|_2k}{n}}+\frac{\|\theta^*\|_2^2k}{n})$. In contrast, it appears to be much harder to extend Theorems \ref{thm:small}, \ref{thm:big} to the sparse case. 
 \end{rem}

\section{Simulations}\label{sec:numeric}
We provide numerical examples to corroborate our theoretical results. We follow Assumptions \ref{gaussian}, \ref{normge1} to generate $\{(x_i,y_i)\}_{i=1}^n$, and set  $\theta^*=\|\theta^*\|_2r^*$ where $r^*$ is uniformly distributed over $\mathbb{S}^{n-1}$. Each result is averaged over $50$ independent trials.  
 All experiments were implemented using Matlab R2022a on a laptop with an Intel CPU up to 2.5 GHz and 32 GB RAM.

\noindent 
\textbf{Estimation performance.} We first validate the theoretical estimation error rate  $O(\sqrt{\|\theta^*\|_2^5d/n})$ of GD. We run  Algorithm \ref{alg:gd} with $\theta_0=0$, $\eta=4$, and use $\theta_{100}$ as estimator. We test $n\in\{3000,6000,12000,24000\}$ under three settings of $(d,\|\theta^*\|_2)=(200,2),\,(400,2),\,(200,3)$. The log-log plots in Figure~\ref{fig:sub1} validate the predicted $O(n^{-1/2})$ decay rate and confirm that the estimation error increases with both $d$ and $\|\theta^*\|_2$.

\noindent 
\textbf{Convergence under $\eta\in(0,8)$.}
To illustrate the linear convergence predicted by Theorem~\ref{thm:small} for constant stepsizes, we run Algorithm~\ref{alg:gd} from $\theta_0=0$ with $\eta=1$ and $\eta=4$ under the setting
$(n,d,\|\theta^*\|_2)=(5000,200,4)$.
Figure~\ref{fig:sub2} reports the estimation error
$\|\theta_t-\theta^*\|_2$ over the first $200$ iterations on a logarithmic scale.
The approximately straight-line decay in the early iterations is consistent with the linear convergence behavior established in Theorem~\ref{thm:small}.

\noindent
\textbf{Large stepsize.} To illustrate the local acceleration predicted by Theorem~\ref{thm:big}, we run Algorithm~\ref{alg:gd} in a high-signal setting
$(n,d,\|\theta^*\|_2)=(80000,100,8)$.
We initialize the iterates locally around $\theta^*$ by setting
$\theta_0=\theta^*+u$, where $u\sim\mathrm{Unif}(\mathbb{S}^{d-1})$.
We compare the constant stepsize $\eta=4$ with the large stepsize
$\eta=1/m(8)\approx 20.63$.
Figure~\ref{fig:sub3} reports the estimation error over the first $40$ iterations.
The large-stepsize trajectory displays a much steeper initial decay and reaches the statistical error plateau within only a few iterations, consistent with the accelerated local contraction in Theorem~\ref{thm:big}.

\begin{figure}[ht!]
    \centering

    \begin{subfigure}[t]{0.32\textwidth}
        \centering
        \includegraphics[width=\textwidth]{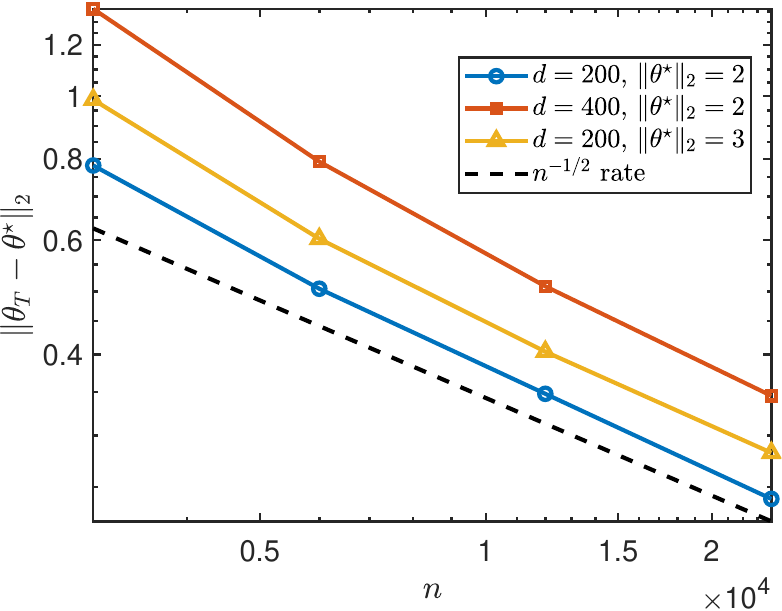}
        \caption{Estimation performance}
        \label{fig:sub1}
    \end{subfigure}
    \hfill
    \begin{subfigure}[t]{0.32\textwidth}
        \centering
        \includegraphics[width=\textwidth]{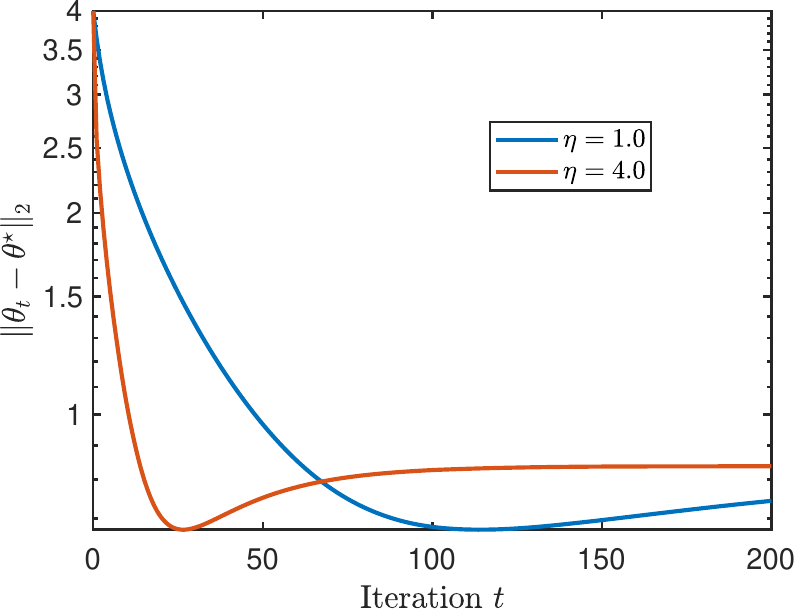}
        \caption{Linear conver. $\eta=1,~4$}
        \label{fig:sub2}
    \end{subfigure}
    \hfill
    \begin{subfigure}[t]{0.32\textwidth}
        \centering
        \includegraphics[width=\textwidth]{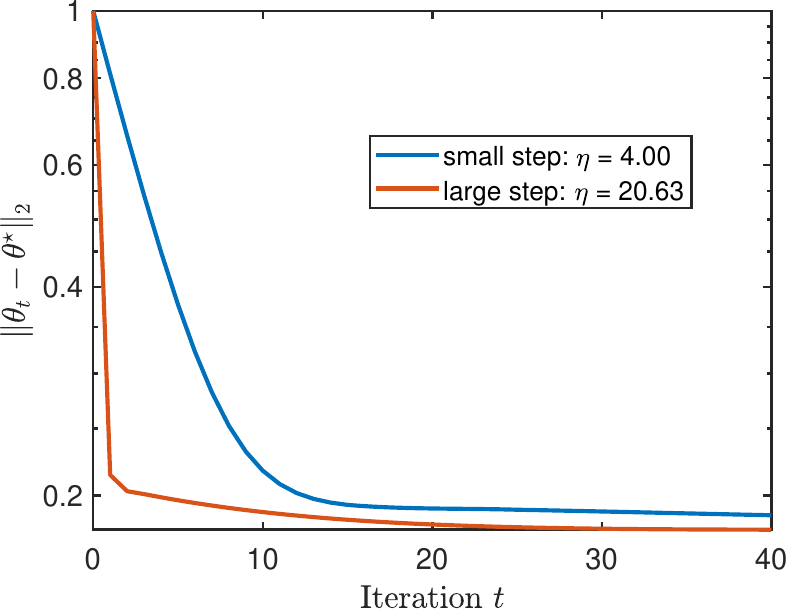}
        \caption{$\eta=4$ v.s. $\eta=20.63$}
        \label{fig:sub3}
    \end{subfigure}

    \caption{Estimation and convergence of GD under Gaussian designs.}
    \label{fig:three_figs}
\end{figure}

\noindent
\textbf{Performance of Algorithm \ref{alg:spectral}.} While the aim of Algorithm \ref{alg:spectral} in this paper is primarily to understand the  minimax optimal rate, we also evaluate its numerical performance. We compare Algorithm \ref{alg:spectral} (without sample splitting) and GD with $\eta=4$ and sufficiently many iterations, which converges to the MLE if it exists. Our numerical results suggest that Algorithm \ref{alg:spectral}   outperforms GD in high dimensions under a moderately large $n$. Due to page limit we defer the discussion to Appendix \ref{app:numalg2}.

\section{Conclusion}\label{sec:conclude}
We provide a finite-sample analysis to gradient descent (GD) for logistic regression with Gaussian designs. We show that GD with $O(1)$ stepsize and $0$ initialization linearly converges to the true parameter $\theta^*$ and achieves error rate $O(\sqrt{\|\theta^*\|_2^5d/n})$. We also show that GD with $\Theta(\|\theta^*\|_2)$ stepsize and a good enough initialization achieves a faster linear convergence and the same estimation performance. The two results are among the first estimation performance bounds to GD in logistic regression and exhibit a convergence rate faster than most existing results. We also devise a novel estimator whose estimation error rate improves on MLE, GD in high dimensions and is nearly sharp  when $n\gtrsim \|\theta^*\|_2^3d+\|\theta^*\|_2^5$. There remain many interesting problems worth further investigations, including extension of our GD results to sparse logistic regression, the minimax optimal rate under $n\lesssim \|\theta^*\|_2^3d+\|\theta^*\|_2^5$ (Remark \ref{rem:sharp}), a finite-sample analysis of GD with adaptive stepsizes, estimator with error rate sharper than Algorithm \ref{alg:spectral}, among many others.

\bibliography{libr}



\appendix

 \section{Proof of Theorem \ref{thm:small} (GD with Small Stepsize)}  \label{app:proofsmall}
 The main ingredient is an approximate invertibility condition (AIC) that controls 
 \begin{align*}
     \|u - \theta^* - \eta\cdot h_{\theta^*}(u)\|_2,\quad \forall u\in \mathbb{B}_2^d(2\|\theta^*\|_2). 
 \end{align*}
 We start with a decomposition 
 \begin{align}\label{aicdecompose1}
     \|u-\theta^*-\eta\cdot h_{\theta^*}(u)\|_2\le \eta\cdot \underbrace{\|h_{\theta^*}(u) - \mathbb{E}[h_{\theta^*}(u)]\|_2}_{:=T_c} + \underbrace{\|u-\theta^* -\eta\mathbb{E}[h_{\theta^*}(u)]\|_2}_{:=T_e},
 \end{align}
 where $T_c$ is a concentration term and $T_e$ is a bias term. 
 \subsection{Bounding $T_c$ (concentration term)}\label{sec:boundTc}
 By substituting $h_{\theta^*}(u)=\frac{1}{n}
 \sum_{i=1}^n(s(x_i^\top u)-y_i)x_i$ and a further decomposition, 
 \begin{align}\nn
     T_c: &= \bigg\|\frac{1}{n}\sum_{i=1}^n \bigg( \big(s(x_i^\top u)-y_i\big)x_i-\mathbb{E}\Big[\big(s(x_i^\top u)-y_i\big)x_i\Big]\bigg)\bigg\|_2\\\nn
     & \le  \bigg\|\frac{1}{n}\sum_{i=1}^n \bigg( \big(s(x_i^\top \theta^*)-y_i\big)x_i-\mathbb{E}\Big[\big(s(x_i^\top \theta^*)-y_i\big)x_i\Big]\bigg)\bigg\|_2 \\\nn
     & + \bigg\|\frac{1}{n}\sum_{i=1}^n \bigg( \big(s(x_i^\top u)-s(x_i^\top \theta^*)\big)x_i-\mathbb{E}\Big[\big(s(x_i^\top u)-s(x_i^\top \theta^*)\big)x_i\Big]\bigg)\bigg\|_2\\:
     &=T_{c1}+T_{c2}. \label{decompconcen} 
 \end{align}
 Conditioning on $x_i$, Assumption \ref{gaussian} ensures $\mathbb{E}[s(x_i^\top \theta^*)-y_i]=0$, hence $T_{c1}$ can be written as 
\begin{align*}
    T_{c1} = \bigg\|\frac{1}{n}\sum_{i=1}^n\big(s(x_i^\top \theta^*)-y_i\big)x_i\bigg\|_2 
\end{align*}
 
\subsubsection{Bounding $T_{c1}$}\label{app:bernsteinmoment}
We shall establish that
\begin{align}\label{Tc1bound}
    T_{c1}\lesssim \sqrt{\frac{d}{n\|\theta^*\|_2}} + \frac{d}{n}
\end{align}
holds with probability at least $1-e^{-d}$. This is a consequence of the following result, where we bound the multiplier process $T_{c1}=\sup_{v\in \mathbb{S}^{d-1}}\frac{1}{n}\sum_{i=1}^n (s(x_i^\top \theta^*)-y_i)x_i^\top v$ via Bernstein's inequality (cf. Lemma \ref{lem:bernstein210}) along with a covering argument.  

 \begin{lem}[Bounding $T_{c1}$ via covering]
There exist universal constants $C,c>0$ such that, for every $\delta\in(0,1)$, with probability at least $1-\delta$,
\[
\left\|
\frac1n\sum_{i=1}^n \big(s(x_i^\top \theta^*)-y_i\big) x_i
\right\|_2
\le
C\left[
\sqrt{
\frac{d+\log(1/\delta)}
{n\|\theta^*\|_2}
}
+
\frac{d+\log(1/\delta)}{n}
\right].
\]
\end{lem}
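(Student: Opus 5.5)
We will bound the norm through a net argument, $\|w\|_2\le 2\max_{v\in\mathcal{N}}\langle w,v\rangle$. Here $w:=\frac1n\sum_{i=1}^n(s(x_i^\top\theta^*)-y_i)x_i$ and $\mathcal{N}$ is a $\frac12$-net of $\mathbb{S}^{d-1}$ with $|\mathcal{N}|\le 5^d$. For each fixed $v\in\mathbb{S}^{d-1}$ we write $\xi_i:=s(x_i^\top\theta^*)-y_i$ and $Z_i:=\xi_i x_i^\top v$. These are i.i.d. and centered, since $\mathbb{E}[\xi_i\mid x_i]=0$. We will apply Bernstein's inequality under moment conditions (Lemma \ref{lem:bernstein210}) to $\frac1n\sum_i Z_i$. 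The conclusion follows by a union bound over $\mathcal{N}$ with $\log(1/\delta)$ replaced by $d\log 5+\log(1/\delta)$.

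The key step is the moment bound $\mathbb{E}|Z_i|^k\le \frac{k!}{2}\sigma^2 b^{k-2}$ with $\sigma^2\lesssim \|\theta^*\|_2^{-1}$ and $b\lesssim 1$. Since $|\xi_i|\le1$, we have $\mathbb{E}[|\xi_i|^k\mid x_i]\le\mathbb{E}[\xi_i^2\mid x_i]=s'(x_i^\top\theta^*)$. Hence
\[
\mathbb{E}|Z_i|^k\le \mathbb{E}\big[s'(x_i^\top\theta^*)|x_i^\top v|^k\big].
\]
We decompose $v=\alpha r^*+\beta w^\perp$ with $r^*=\theta^*/\|\theta^*\|_2$, $w^\perp\perp r^*$ and $\alpha^2+\beta^2=1$. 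Then $g:=x_i^\top r^*$ and $g':=x_i^\top w^\perp$ are independent standard Gaussians. Using $|a+b|^k\le 2^{k-1}(|a|^k+|b|^k)$, we get
\[
\mathbb{E}|Z_i|^k\lesssim 2^{k}\Big(\mathbb{E}\big[s'(\tau g)|g|^k\big]+\mathbb{E}\big[s'(\tau g)\big]\,\mathbb{E}|g'|^k\Big),\qquad \tau=\|\theta^*\|_2 .
\]
The second term equals $m(\tau)\mathbb{E}|g'|^k\lesssim \tau^{-1}C^k k!$. This uses $m(\tau)\asymp\tau^{-1}$ (Lemma \ref{lem:separa}) and Gaussian moments $\mathbb{E}|g'|^k\le C^k k!$; these are generous but suffice. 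For the first term, $s'(a)\le e^{-|a|}$ gives
\[
\mathbb{E}\big[s'(\tau g)|g|^k\big]\le \frac{2}{\sqrt{2\pi}}\int_0^\infty e^{-\tau t}t^k\,dt=\frac{2k!}{\sqrt{2\pi}\,\tau^{k+1}}\le \frac{C\,k!}{\tau},
\]
where the last step uses $\tau\ge1$. Together these give $\mathbb{E}|Z_i|^k\le \frac{k!}{2}\cdot\frac{C_1}{\tau}\cdot C_2^{k-2}$, which is the required Bernstein condition with $\sigma^2=C_1/\tau$ and $b=C_2$.

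Bernstein's inequality then yields, for each fixed $v$ and each $u>0$, with probability at least $1-2e^{-u}$,
\[
\Big|\frac1n\sum_i Z_i\Big|\lesssim\sqrt{\frac{u}{n\tau}}+\frac{u}{n}.
\]
Taking $u=d\log 5+\log(2/\delta)$, a union bound over $\mathcal{N}$ and the net inequality give the claim with adjusted constants; the $\log 5$ and $\log 2$ factors are absorbed into $C$.

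The main obstacle is obtaining variance proxy $\tau^{-1}$ rather than $1$. Sub-Gaussian bounds fail here because $\|Z_i\|_{\psi_2}\asymp1$ regardless of $\tau$. This is why we need the moment form of Bernstein, with $|\xi_i|^k\le\xi_i^2$ converting every moment into one that carries the factor $s'$. It is also why we need the orthogonal decomposition, which handles the direction $r^*$ (where $s'(\tau g)$ localizes $g$ to width $1/\tau$) separately from the orthogonal directions (where independence factors out $m(\tau)$). Everything else is routine.
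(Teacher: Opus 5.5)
Your proof is correct and follows essentially the same route as the paper. Both use a $\tfrac12$-net with $|\mathcal N|\le 5^d$, the reduction $\mathbb{E}[|\xi_i|^k\mid x_i]\le s'(x_i^\top\theta^*)$, the split of $x_i^\top v$ along $\theta^*/\|\theta^*\|_2$ and its orthogonal complement, the bounds $\mathbb{E}[s'(\tau g)|g|^k]\lesssim k!/\tau^{k+1}$ and $m(\tau)\asymp 1/\tau$, and then moment-form Bernstein plus a union bound. The differences are cosmetic: you handle the variance ($k=2$) with the same crude splitting rather than the paper's exact identity plus $q'(\tau)<m(\tau)$, and your choice $u=d\log 5+\log(2/\delta)$ gives probability exactly $1-\delta$ for every $\delta\in(0,1)$.
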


\begin{proof}
For convenience we let $R:=\|\theta^*\|_2$ and $\xi_i=s(x_i^\top \theta^*)-y_i$,  then we seek to bound $\|\frac{1}{n}\sum_{i=1}^n \xi_ix_i\|_2$. We now construct a $\frac{1}{2}$-net of $\mathbb{S}^{d-1}$, denoted by $N_{1/2}$, such that its cardinality is bounded by $|N_{1/2}|\le 5^d$ (e.g., \cite[Corollary 4.2.13]{vershynin2018high}).  By a standard discretization argument (see, e.g., \cite[Exercise 4.4.2]{vershynin2018high}), 
\begin{align}\label{netbound1}
    \bigg\|\frac{1}{n}\sum_{i=1}^n \xi_ix_i\bigg\|_2\le 2\sup_{v\in N_{1/2}} \frac{1}{n}\sum_{i=1}^n \xi_i x_i^\top v.
\end{align}
Fix any unit vector $v \in \mathbb{S}^{d-1}$ we introduce a shorthand
$Y_i(v):=\xi_i\, x_i^\top v$. In light of (\ref{netbound1}), all that remains is to (i) bound $\frac{1}{n}\sum_{i=1}^n Y_i(v)$ for a fixed $v\in\mathbb{S}^{d-1}$ and then (ii) take a union bound over $N_{1/2}$.

\paragraph{Bounding $\frac{1}{n}\sum_{i=1}^n Y_i(v)$.} Fix $v\in\mathbb{S}^{d-1}$. Recall that $\mathbb E Y_i(v)=0$. Moreover, conditional on $x_i$,
\begin{align}
    \mathbb E[\xi_i^2\mid x_i]& = s(x_i^T\theta^*)^2 + s(x_i^T\theta^*) - 2s(x_i^T\theta^*)^2 \nn
\\&=
s(x_i^\top\theta^*)\big(1-s(x_i^\top\theta^*)\big)
=
s'(x_i^\top\theta^*),\label{Exi2}
\end{align}
where the last equality is due to Lemma \ref{lem:derivative}.
Hence
\[
\mathbb E[Y_i(v)^2]
=
\mathbb E\big[s'(x_i^\top\theta^*)(x_i^\top v)^2\big].
\]

We now bound this variance. Let $e=\theta^*/\|\theta^*\|_2$ and $g_1:=x_i^\top e\sim N(0,1)$, which along with $R=\|\theta^*\|_2$ deliver $x_i^\top \theta^*=Rg_1$. By rotational invariance we can decompose $v$ along $e=\theta^*/\|\theta^*\|_2$ and the orthogonal directions, that is \[v=(v^\top e)e+[v-(v^\top e)e],\quad\textrm{where}~\|v-(v^\top e)e\|_2= \sqrt{1-(v^\top e)^2},\] and then establish    
\begin{align}
    x_i^\top v
=
v^\top e\cdot g_1+\sqrt{1-(v^\top e)^2}\cdot g_2,\quad\textrm{where $g_2\sim N(0,1)$ is independent of $g_1$}\label{decomposexiv}
\end{align}
Then 
\begin{align}\nn
&\mathbb{E}[Y_i(v)^2]=\mathbb E\big[s'(x_i^\top\theta^*)(x_i^\top v)^2\big]
\\\nn
& = \mathbb{E}\Big[s'(Rg_1)\big\{v^\top e\cdot g_1+\sqrt{1-(v^\top e)^2}\cdot g_2\big\}^2\Big]
\\\explain\textrm{by (\ref{decomposexiv})}
\\&=\nn
(v^\top e)^2\mathbb E[s'(Rg_1)g_1^2]
+
\big(1-(v^\top e)^2\big)\mathbb E[s'(Rg_1)]
\\\nn
&\le
\mathbb E[s'(Rg_1)]\\\explain\textrm{by Lemma \ref{lem:V1larger}}\\
&\lesssim \frac{1}{R}\label{varianceestimate}\\
\explain\textrm{by Lemma \ref{lem:separa} and $R=\|\theta^*\|_2\ge 1$ (cf. Assumption \ref{normge1})}
\end{align}
We proceed to estimate higher-order moments, $\mathbb{E}|Y_i(v)|^q$ for integer $q\ge 3$. Similarly to (\ref{varianceestimate}), for any integer $q\ge 3$,  
\begin{align}\nn
    &\mathbb{E}|Y_i(v)|^q = \mathbb{E}|\xi_i x_i^\top v|^q \\
    &\le \mathbb{E}|\xi_i|^2  |x_i^\top v|^{q}\nn\\\nn
    \explain\textrm{by $|\xi_i|=|s(x_i^\top \theta^*)-y_i|\le 1$}\\
    &= \mathbb{E}\big(s'(x_i^\top \theta^*)|x_i^\top v|^q\big)\nn\\\nn
    \explain\textrm{by Equation (\ref{Exi2})}\\
\nn    &=\mathbb{E}\Big(s'(Rg_1)\Big|(v^\top e)g_1+\sqrt{1-(v^\top e)^2}\cdot g_2\Big|^q\Big) \\
    \explain \textrm{by (\ref{decomposexiv}) and $g_1=x_i^\top e$ with $e=\theta^*/\|\theta^*\|_2$}\\
    \nn&\le 2^q\bigg(\mathbb{E}\Big[s'(Rg_1)|g_1|^q\Big] + \mathbb{E}\Big[s'(Rg_1)|g_2|^q\Big]\bigg)\\
    \explain\textrm{by $(T_1+T_2)^q\le 2^q(|T_1|^q+|T_2|^q)$, ~$|v^\top e|\le 1$} \\
    \nn&\le 2^q \bigg(\frac{2q!}{\sqrt{2\pi}}\frac{1}{R^{q+1}} + q!\cdot  \mathbb{E}[s'(Rg_1)]\bigg)\\
    \explain \textrm{by Lemma \ref{lem:Hq-upper}, $g_1\perp g_2$, and $\mathbb{E}|g_2|^q\le q!$}\\
    &\lesssim \frac{2^q q!}{R}\label{qmomentbound}\\
    \explain\textrm{by Lemma \ref{lem:separa} and $\|\theta^*\|_2\ge 1$}
\end{align}
Combining (\ref{varianceestimate}) and (\ref{qmomentbound}), we have validated the conditions in Bernstein's inequality (cf. Lemma \ref{lem:bernstein210}) with $X_i= \frac{Y_i(v)}{n}$, $v\lesssim \frac{1}{nR}$ and $c\lesssim\frac{1}{n}$, and as a result we obtain
\begin{align*}
    \mathbb{P}\bigg(\bigg|\frac{1}{n}\sum_{i=1}^n Y_i(v)\bigg|\lesssim \sqrt{\frac{t}{nR}}+\frac{t}{n}\bigg)\ge 1- 2\exp(-t),\quad \forall t>0.
\end{align*}

\paragraph{Union bound.} We now take a union bound over $v\in N_{1/2}$, yielding 
\begin{align*}
    \mathbb{P}\bigg(\sup_{v\in N_{1/2}}\bigg|\frac{1}{n}\sum_{i=1}^n Y_i(v)\bigg|\lesssim \sqrt{\frac{t}{nR}}+\frac{t}{n}\bigg)\ge 1- 2\exp(d\log(5)-t),\quad \forall t>0.
\end{align*}
Setting $t= d\log (5)+2\log(1/\delta)$ proves the claim.
\end{proof}

\subsubsection{Bounding $T_{c2}$}\label{app:pealing}
We seek to bound
\[T_{c2}:=\bigg\|\frac{1}{n}\sum_{i=1}^n \bigg( \big(s(x_i^\top u)-s(x_i^\top \theta^*)\big)x_i-\mathbb{E}\Big[\big(s(x_i^\top u)-s(x_i^\top \theta^*)\big)x_i\Big]\bigg)\bigg\|_2\]
uniformly for all $u\in \mathbb{B}_2^d(2\|\theta^*\|_2)$. 
Due to the multiplier $s(x_i^\top u)-s(x_i^\top \theta^*)$, one expects a tighter bound for $u$ being close to $\theta^*$. We achieve such ``localization'' via a peeling argument, along with a covering argument over each shell. This establishes the following Lemma \ref{lem:peeling}.

We shall pause to announce the final bound on $T_{c2}$. Setting \[\delta=\exp(-d),\qquad\rho = \frac{d}{n}\]
and applying a relaxation $\log\log_2(\frac{6\|\theta^*\|_2}{\rho})\le \log \frac{Cn\|\theta^*\|_2}{d}$,
we reach the following: if $n\gtrsim d\log n + \log\|\theta^*\|_2$, then with probability at least $1-e^{-d}$,
\begin{align}
    \label{Tc2bound}
    T_{c2} \lesssim \sqrt{\frac{d\log n+ \log\|\theta^*\|_2}{n}}\|u-\theta^*\|_2 + \frac{d}{n},\quad \forall u\in \mathbb{B}_2^d(2\|\theta^*\|_2). 
\end{align}

 \begin{lem}[Bounding $T_{c2}$ via peeling]\label{lem:peeling}
For $u\in\mathbb{R}^d$, define 
\[
Z_n(u)
:=
\frac1n\sum_{i=1}^n
\big(s(x_i^\top u)-s(x_i^\top\theta^*)\big)x_i
-
\mathbb E\big[
\big(s(x^\top u)-s(x^\top\theta^*)\big)x
\big].
\]
Fix $0<\rho\le 3\|\theta^*\|_2$. Then there exists a universal constant $C>0$ such that, with probability at least $1-\delta$, 
\[
\|Z_n(u)\|_2
\le
C\left[
\sqrt{
\frac{
d\log(Cn)+\log\!\left(\frac{\log_2(6\|\theta^*\|_2/\rho)}{\delta}\right)}
{n}}
+
\frac{
d\log(Cn)+\log\!\left(\frac{\log_2(6\|\theta^*\|_2/\rho)}{\delta}\right)}
{n}
\right]
\max\{\|u-\theta^*\|_2,\rho\}
\]
holds uniformly over all $u$ satisfying $\|u\|_2\le 2\|\theta^*\|_2$.
\end{lem}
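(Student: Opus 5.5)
We peel the ball $\{\|u\|_2\le 2\|\theta^*\|_2\}$ according to the distance $\|u-\theta^*\|_2$, which is at most $3\|\theta^*\|_2$. Set $r_0=\rho$ and $r_j=2^j\rho$ for $j=1,\dots,J:=\lceil\log_2(3\|\theta^*\|_2/\rho)\rceil\le\log_2(6\|\theta^*\|_2/\rho)$. The shells are $S_0=\{u:\|u-\theta^*\|_2\le\rho\}$ and $S_j=\{u: r_{j-1}<\|u-\theta^*\|_2\le r_j\}$, each intersected with $\mathbb{B}_2^d(2\|\theta^*\|_2)$. For each $u\in S_j$ we have $r_j\le 2\max\{\|u-\theta^*\|_2,\rho\}$. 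It therefore suffices to show that, for each fixed $j$, with probability at least $1-\delta/(J+1)$,
\[\sup_{u\in S_j}\|Z_n(u)\|_2\lesssim \Big(\sqrt{\tfrac{d\log(Cn)+\log((J+1)/\delta)}{n}}+\tfrac{d\log(Cn)+\log((J+1)/\delta)}{n}\Big)r_j,\]
and then take a union bound over the $J+1$ shells.

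\textbf{Fixed $(u,v)$ bound.} Write $\|Z_n(u)\|_2=\sup_{v\in\mathbb{S}^{d-1}}\langle Z_n(u),v\rangle$. For fixed $u$ and $v$, the summands are $W_i=(s(x_i^\top u)-s(x_i^\top\theta^*))x_i^\top v$, centered. Because $s$ is $\tfrac14$-Lipschitz,
\[|W_i|\le\tfrac14|x_i^\top(u-\theta^*)|\,|x_i^\top v|,\]
a product of two Gaussians. Hence $\|W_i\|_{\psi_1}\lesssim\|u-\theta^*\|_2\le r_j$. Bernstein's inequality for sub-exponential variables then gives, with probability at least $1-2e^{-t}$,
\[\Big|\tfrac1n\textstyle\sum_i W_i-\mathbb{E}W_i\Big|\lesssim r_j\big(\sqrt{t/n}+t/n\big).\]

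\textbf{Discretization.} Take a $\tfrac12$-net of $\mathbb{S}^{d-1}$ for $v$ (size $5^d$), which reduces the problem to a factor $2$ as in (\ref{netbound1}). Take an $\epsilon$-net $\mathcal{N}_j$ of $S_j$ with $\epsilon=r_j/n^{C}$; since $S_j\subset\mathbb{B}_2^d(\theta^*;r_j)$, its size is at most $(3n^C)^d$. The union bound then costs $t\asymp d\log(Cn)+\log((J+1)/\delta)$, which produces exactly the stated logarithmic terms.

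\textbf{Approximation step (main obstacle).} We must control $\|Z_n(u)-Z_n(u')\|_2$ for $\|u-u'\|_2\le\epsilon$; this is where the $\log n$ comes from. The empirical part is bounded by
\[\tfrac14\Big\|\tfrac1n\textstyle\sum_i x_ix_i^\top\Big\|_{op}\epsilon.\]
By Lemma \ref{lem:gaubound}, or a crude bound $\|\tfrac1n\sum_i x_ix_i^\top\|_{op}\le\max_i\|x_i\|_2^2\lesssim d+\log(n/\delta)$ holding with probability $1-\delta/2$, this is $\lesssim (d+n)\epsilon$. The population part is at most $\tfrac14\epsilon$. Choosing $C$ large makes both terms at most $r_j/n\le r_j\cdot\frac{d}{n}$, which is absorbed into the second term of the bound. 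Combining the net bound, the approximation, and the union bound over shells, and noting that $J+1\lesssim\log_2(6\|\theta^*\|_2/\rho)$ (using $\rho\le3\|\theta^*\|_2$), yields the lemma.
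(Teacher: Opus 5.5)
Your proposal is correct and follows essentially the same route as the paper: dyadic peeling in $\|u-\theta^*\|_2$, a sub-exponential Bernstein bound for each fixed pair via the $\tfrac14$-Lipschitz property of $s$, a fine net of each shell at resolution $r_j/\mathrm{poly}(n)$ together with a $\tfrac12$-net of the sphere (which produces the $d\log(Cn)$ term), a Lipschitz extension through $\|\tfrac1n\sum_i x_ix_i^\top\|_{op}$, and a union bound over the $J+1\lesssim\log_2(6\|\theta^*\|_2/\rho)$ shells. The differences are cosmetic: you fold the small ball $\|u-\theta^*\|_2\le\rho$ in as $S_0$, and you use resolution $r_j/n^C$ instead of $r_j/n$. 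One intermediate claim is slightly off: $(d+n)\epsilon\le r_j/n$ can fail when $d\gg n$, but $(d+n)/n^{C}\lesssim d/n$ for $C\ge 2$, so the error is still absorbed into the second term as you conclude.
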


\begin{proof}
Let $R_0=3\|\theta^*\|_2$ and write $h:=u-\theta^*.$ In light of 
\[\|u-\theta^*\|_2\le \|u\|_2+\|\theta^*\|_2\le 3\|\theta^*\|_2=R_0,\]
we shall restrict our attention to 
\[\|h\|_2\le R_0.\]
For $w\in \mathbb{S}^{d-1}$, define the scalar process
$
Z_n(h,w)
:=
\langle w,Z_n(\theta^*+h)\rangle,$ i.e.,
\[
Z_n(h,w)
=
\frac1n\sum_{i=1}^n
\left[
\big(s(x_i^\top(\theta^*+h))-s(x_i^\top\theta^*)\big)(x_i^\top w)
-
\mathbb E
\big(s(x^\top(\theta^*+h))-s(x^\top\theta^*)\big)(x^\top w)
\right],
\]
and note that
\[\|Z_n(u)\|_2=\sup_{w\in \mathbb{S}^{d-1}}Z_n(u-\theta^*,w).\]

\paragraph{Non-uniform bound.}
We first establish a fixed-$(h,w)$ concentration bound for $Z_n(h,w)$ for $w\in\mathbb{S}^{d-1}$. Since $\sup_{a\in\mathbb{R}}|s'(a)|\le 1/4$,
we have $\left|
s(x_i^\top(\theta^*+h))-s(x_i^\top\theta^*)
\right|
\le
\frac14 |x_i^\top h|,$ and
therefore 
\[
\left|
\big(s(x_i^\top(\theta^*+h))-s(x_i^\top\theta^*)\big)(x_i^\top w)
\right|
\le
\frac14 |x_i^\top h|\,|x_i^\top w|.
\]
Suppose that $\|h\|_2\le 2\tilde{r}$ and $\|a\|_2=1$, then $x_i^\top h$ is sub-Gaussian with $\psi_2$ norm $O(\tilde{r})$. Also, $x_i^\top a$ is sub-Gaussian with $\psi_2$ norm $O(1)$. Hence, the product is sub-exponential with $\psi_1$ norm $O(\tilde{r})$:
\[\|\big(s(x_i^\top(\theta^*+h))-s(x_i^\top\theta^*)\big)(x_i^\top a)\|_{\psi_1}\lesssim \|(x_i^Th)(x_i^Tw)\|_{\psi_1}\le \|x_i^Th\|_{\psi_2}\|x_i^Tw\|_{\psi_2}\lesssim \tilde{r}.\]
See \cite[Section 2]{vershynin2018high} for details. 
By Bernstein's inequality for sub-exponential variables (cf. Lemma \ref{bern}), for every fixed pair $(h,w)$ with $\|h\|_2\le 2\tilde{r}$ and $w\in\mathbb{S}^{d-1}$,
\begin{align}
    & \mathbb P\left(
|Z_n(h,w)|
\gtrsim t
\right)
\le
2\exp\bigg(-cn\min\bigg\{(\frac{t}{\tilde{r}})^2,\frac{t}{\tilde{r}}\bigg\}\bigg),\quad \forall t>0.
    \\
    \Longrightarrow~&\mathbb P\left(
|Z_n(h,w)|
\gtrsim \tilde{r}\left(\sqrt{\frac{t}{n}}+\frac{t}{n}\right)
\right)
\le
2e^{-t},\quad \forall t>0.\label{fixedbound}
\end{align}

\paragraph{Peeling argument: $\|h\|_2\ge \rho$.} We now apply a peeling argument. For the fixed $0<\rho\le R_0:=3\|\theta^*\|_2$ and
\[
j=0,1,\ldots,J,
\qquad
J:=\left\lceil \log_2(R_0/\rho)\right\rceil,
\]
define
\[
r_j:=2^j\rho,
\qquad
\mathcal S_j
:=
\left\{
h:\ r_j\le \|h\|_2\le 2r_j
\right\}.
\]
Let $\mathcal N_j$ be an $(r_j/n)$-net of  
$
\mathcal S_j
=
\left\{
h:\ r_j\le \|h\|_2\le 2r_j
\right\}$, and we may choose
$|\mathcal N_j|\le (Cn)^d$ \cite[Corollary 4.2.13]{vershynin2018high}. Also let $\mathcal M$ be a $1/2$-net of $\mathbb{S}^{d-1}$ with
$
|\mathcal M|\le 5^d$ \cite[Corollary 4.2.13]{vershynin2018high}.
Applying the fixed-$(h,w)$ Bernstein bound in (\ref{fixedbound}) to all pairs
\[
(h_0,w_0)\in \mathcal N_j\times \mathcal M
\]
and taking a union bound over $j=0,\ldots,J$, along with a suitable choice of $t$ to offset the effect of the union bound, we obtain that
\begin{align}\label{discretebound}
    &|Z_n(h_0,w_0)|
\le
C r_j\underbrace{\bigg(\sqrt{
\frac{
d\log(Cn)+\log\!\left(\frac{J+1}{\delta}\right)}
{n}}
+
\frac{
d\log(Cn)+\log\!\left(\frac{J+1}{\delta}\right)}
{n}\bigg)}_{:=A_n},\\&\forall h_0\in \calN_j,~\forall w_0\in\calM,\forall j=0,\cdots,J
\end{align}
holds with probability at least $1-\delta/2$.

It remains to pass from the nets to all points in each shell. For any $h,h'\in\mathbb R^d$ and any $w\in \mathbb{S}^{d-1}$, using again $\sup_{a\in\mathbb{R}}|s'(a)|\le 1/4$, we obtain 
\[
|Z_n(h,w)-Z_n(h',w)|
\le
\frac14
\left[
\frac1n\sum_{i=1}^n
|x_i^\top(h-h')|\,|x_i^\top w|
+
\mathbb E |x^\top(h-h')|\,|x^\top w|
\right].
\]
By Cauchy--Schwarz,
\begin{align*}
\frac1n\sum_{i=1}^n
|x_i^\top(h-h')|\,|x_i^\top w|
&\le
\left(
\frac1n\sum_{i=1}^n (x_i^\top(h-h'))^2
\right)^{1/2}
\left(
\frac1n\sum_{i=1}^n (x_i^\top w)^2
\right)^{1/2} \\
&=
\left((h-h')^\top \widehat\Sigma (h-h')\right)^{1/2}
\left(w^\top \widehat\Sigma w\right)^{1/2},
\end{align*}
where
$
\widehat\Sigma:=\frac1n\sum_{i=1}^n x_i x_i^\top=\frac{1}{n}XX^\top ~~(X=[x_1,...,x_n])$ is the sample covariance. 
Therefore,
\[
\frac1n\sum_{i=1}^n
|x_i^\top(h-h')|\,|x_i^\top w|
\le
\|\widehat\Sigma\|_{\mathrm{op}}\|h-h'\|_2\|w\|_2
=
\|\widehat\Sigma\|_{\mathrm{op}}\|h-h'\|_2=\frac{\|X\|_{op}^2}{n}\|h-h'\|_2.
\]
Similarly, since $x\sim N(0,I_d)$,
\[
\mathbb E |x^\top(h-h')|\,|x^\top w|
\le
\left(\mathbb E(x^\top(h-h'))^2\right)^{1/2}
\left(\mathbb E(x^\top w)^2\right)^{1/2}
=
\|h-h'\|_2\|w\|_2
=
\|h-h'\|_2.
\]
Also, 
\[
\mathbb E |x^\top(h-h')|\,|x^\top w|
\le
\|h-h'\|_2.
\]
Thus
\[
|Z_n(h,w)-Z_n(h',w)|
\le
C\Big(
1+
\frac{\|X\|_{op}^2}{n}
\Big)\|h-h'\|_2.
\]
By Lemma \ref{lem:gaubound}, with probability at least $1-\delta/2$,
\[
\left\|X
\right\|_{\mathrm{op}}
\lesssim 
C\sqrt{n+d + \log(1/\delta)}. 
\]

On the intersection of the these high-probability events, fix any $h\in\mathcal S_j$ and choose $h_0\in\mathcal N_j$ with
$
\|h-h_0\|_2\le \frac{r_j}{n}.$ Then uniformly over $w\in \mathbb{S}^{d-1}$,
\[
|Z_n(h,w)-Z_n(h_0,w)|
\lesssim \bigg(1+\frac{n+d+\log(1/\delta)}{n}\bigg)\frac{r_j}{n}\lesssim  r_j A_n,
\] 
This extends the bound in (\ref{discretebound}) to $\forall h_0\in \calS_j$, $a_0\in \calM$, $j\in0,\cdots,J$, yielding
\begin{align}
    |Z_n(h,w_0)|\le Cr_j A_j,\quad \forall h \in \calS_j,~\forall w_0\in\calM,~\forall j =0,\cdots,J,\label{nownowbound}
\end{align}
so long as the constant $C$ is increased accordingly.

It remains to show that $w_0\in\calM$ is sufficient for bounding 
\[\|Z_n(\theta^*+h)\|_2=\sup_{w\in \mathbb{S}^{d-1}}Z_n(h,w).\]
 Indeed, a standard discretization argument (see, e.g., \cite[Exercise 4.4.2]{vershynin2018high}) yields \[
\|Z_n(\theta^*+h)\|_2\le 2\sup_{w_0\in\mathcal M}|\langle w_0,Z_n(\theta^*+h)\rangle|=2\sup_{w_0\in\calM}|Z_n(h,w_0)|.  
\] 
Therefore, by the bound in (\ref{nownowbound}), for every $h\in\mathcal S_j$,
\[
\|Z_n(\theta^*+h)\|_2
\le
C r_j A_n.
\]
Since $\|h\|_2\ge r_j$ on $\mathcal S_j$ by definition, we get
\[
\|Z_n(\theta^*+h)\|_2
\le
C A_n\|h\|_2,\quad \forall \rho\le \|h\|_2\le R_0
\]

\paragraph{Handling $\|h\|_2\le \rho$.} It remains to handle the small ball $\|h\|_2\le \rho$. The same argument as above, applied with $\tilde{r}=\rho$ in Equation (\ref{fixedbound}) and the $(\rho/n)$-net of the ball
$
\{h:\|h\|_2\le \rho\},$ gives 
\[
\|Z_n(\theta^*+h)\|_2
\lesssim  A_n\rho,\qquad \forall h\textrm{ ~obeying~~}\|h\|_2\le \rho. 
\]
We omit further details for this step. 

Combining the shell bound and the small-ball bound, we obtain uniformly over all
$\|u\|_2\le 2\|\theta^*\|_2$,
\[
\|Z_n(u)\|_2
\le
C A_n\max\{\|u-\theta^*\|_2,\rho\}.
\]
Since $J+1\le C\log_2(2R_0/\rho)$, this proves the claim.
\end{proof}

\subsubsection{Final bound on $T_c$}\label{sec:finalTc}
We now substitute the bounds in Equations (\ref{Tc1bound}) and (\ref{Tc2bound}) into (\ref{decompconcen}) to obtain our final bound on the concentration term $T_c$: if $n\gtrsim d\log n+\log\|\theta^*\|_2$, then with probability at least $1-2e^{-d}$, we have
\begin{align}\label{finalTc}
    \|h_{\theta^*}(u)-\mathbb{E}[h_{\theta^*}(u)]\|_2 \lesssim \sqrt{\frac{d\log n+ \log\|\theta^*\|_2}{n}}\|u-\theta^*\|_2 + \sqrt{\frac{d}{n\|\theta^*\|_2}}+ \frac{d}{n},~~ \forall u\in \mathbb{B}_2^d(2\|\theta^*\|_2). 
\end{align}

\subsection{Bounding $T_e$ (bias term)} \label{app:biasthm1}
We now control the  bias term 
\begin{align*}
    T_e := \big\|u-\theta^*-\eta\cdot\mathbb{E}[h_{\theta^*}(u)]\big\|_2,
\end{align*}
where by $h_{\theta^*}(u)=\frac{1}{n}\sum_{i=1}^n (s(x_i^\top u)-y_i)x_i$ and $\mathbb{E}[y_i|x_i]=s(x_i^\top \theta^*)$
 \begin{align*}
     \mathbb{E}[h_{\theta^*}(u)] = \mathbb{E}\big[\big(s(x_i^\top u)-s(x_i^\top \theta^*)\big)x_i\big].
 \end{align*}
 Now an important step is to apply stein's identity, 
 \begin{align*}
     \mathbb{E}[h_{\theta^*}(u)] &= \mathbb{E}[x_is(x_i^\top u)]-\mathbb{E}[x_is(x_i^\top \theta^*)]\\& = \mathbb{E}[s'(x_i^\top u)]u - \mathbb{E}[s'(x_i^\top \theta^*)]\theta^*\\
     \explain \textrm{by Equation (\ref{stein3}) in Lemma \ref{lem:stein}}\\
     & = F(u)-F(\theta^*).\\
     \explain\textrm{we introduce $F(u):=\mathbb{E}[s'(x_i^\top u)]u=m(\|u\|_2)u$}
 \end{align*}
Recall that $m(\tau):=\mathbb{E}_{g\sim N(0,1)}s'(\tau g)$. The main work of the analysis on this bias term is to identify $\eta$ such that 
\begin{align}\label{T2expression}
    T_e:= \big\|u-\theta^*-\eta\cdot\mathbb{E}[h_{\theta^*}(u)]\big\|_2 = \big\|u-\theta^*-\eta(F(u)-F(\theta^*))\big\|_2
\end{align}
is a contraction. Under the small stepsize $\eta\in(0,8)$, we establish the following result by the main idea of approximating $F(u)-F(\theta^*)$ via a linear map on $u-\theta^*$.

We shall pause to see the implication on $T_e$. By (\ref{T2expression}), we shall set $v=\theta^*$ in Lemma \ref{lem:smallcontract} to establish that, under $\eta\in[0.1,7.9]$ (as chosen in Theorem \ref{thm:small}), 
\begin{align}\label{T2bound}
    T_e\le \bigg(1-\frac{c}{
    \|\theta^*\|_2^3
    }\bigg) \|u-\theta^*\|_2,\quad \forall u\in\mathbb{B}_2^d(2\|\theta^*\|_2). 
\end{align}
holds for some universal constant $c>0$. 

\begin{lem}[$T_2$ is a contraction under small stepsize]\label{lem:smallcontract}
Let $F(z)=m(\|z\|_2)z$ and $m(\tau)=\mathbb{E}_{g\sim N(0,1)}[s'(\tau g)]$.
Fix $0<\eta<8$. Let $v\in\mathbb{R}^d$ satisfy $\|v\|_2=R\ge 1$. 
For any $u\in\mathbb{R}^d$ satisfying 
$
\|u\|_2\le 2R,$
we have that 
\[
\|\eta(F(u)-F(v))-(u-v)\|_2
\le
\left(1-\frac{c\eta\left(2-\frac{\eta}{4}\right)}{R^3}\right)\|u-v\|_2.
\]
\end{lem}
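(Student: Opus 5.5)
We write $F(u)-F(v)=A(u-v)$ with $A:=\int_0^1\nabla F(v+t(u-v))\,dt$, using the integral representation of Lemma \ref{lem:integral}. Here
\[
\nabla F(z)=m(\|z\|_2)I_d+\frac{m'(\|z\|_2)}{\|z\|_2}zz^\top ,
\]
which is symmetric with eigenvalue $q'(\|z\|_2)$ along $z/\|z\|_2$ and $m(\|z\|_2)$ on the orthogonal complement. Hence $A$ is a symmetric matrix, and
\[
\|\eta(F(u)-F(v))-(u-v)\|_2=\|(\eta A-I)(u-v)\|_2\le \|\eta A-I\|_{op}\|u-v\|_2 .
\]
It therefore suffices to show that every eigenvalue of $A$ lies in an interval $[\lambda_{\min},\lambda_{\max}]$ with $\lambda_{\max}\le 1/4$ and $\lambda_{\min}\gtrsim R^{-3}$. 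Then $\eta A$ has spectrum in $[\eta\lambda_{\min},\eta/4]$, and we get $\|\eta A-I\|_{op}\le\max\{1-\eta\lambda_{\min},\ \eta/4-1\}$.

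\textbf{Upper bound.} Since $s'\le 1/4$ and $\mathbb{E}[g^2]=1$, we have $m(\tau)\le 1/4$ and $q'(\tau)=\mathbb{E}[s'(\tau g)g^2]\le 1/4$. Thus $\nabla F(z)\preceq \tfrac14 I_d$ for every $z$, and by averaging $A\preceq \tfrac14 I_d$. This gives $\eta/4-1<1$ for $\eta<8$, but the bound must be tied to the claimed factor $1-c\eta(2-\eta/4)/R^3$. A cleaner route is to bound $\|(\eta A-I)h\|_2^2=\|h\|^2-2\eta h^\top Ah+\eta^2\|Ah\|^2$ with $h=u-v$. Using $\|Ah\|^2\le \tfrac14 h^\top Ah$, which holds because $0\preceq A\preceq\tfrac14 I$, we get
\[
\|(\eta A-I)h\|^2\le \|h\|^2-\eta\big(2-\tfrac{\eta}{4}\big)h^\top Ah .
\]
The remaining task is to show $h^\top Ah\ge cR^{-3}\|h\|^2$. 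The square root then gives the factor $1-\tfrac{c}{2}\eta(2-\eta/4)R^{-3}$, matching the statement after renaming $c$.

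\textbf{Lower bound on $h^\top Ah$ (main obstacle).} We have
\[
h^\top Ah=\int_0^1 h^\top\nabla F(z_t)h\,dt,\qquad z_t=v+th,
\]
and $h^\top\nabla F(z)h\ge \min\{m(\|z\|),q'(\|z\|)\}\|h\|^2$. Note that $m(\tau)=q(\tau)/\tau>0$. By Lemma \ref{lem:separa}, $m(\tau)\asymp 1/\tau$ and $q'(\tau)\asymp\tau^{-3}$ for $\tau\ge1$; for $\tau\le1$ both are bounded below by a constant, using continuity, positivity, and $m(0)=q'(0)=1/4$. The segment satisfies $\|z_t\|\le 2R$ by convexity of the ball, so $\|z_t\|\le 2R$ for all $t$, and the minimum eigenvalue of each $\nabla F(z_t)$ is at least $\min_{\tau\in[0,2R]}\min\{m(\tau),q'(\tau)\}\gtrsim R^{-3}$.

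The delicate point is whether $q'$ stays positive and $\gtrsim \tau^{-3}$ uniformly on $[0,2R]$, including near $\tau=0$, and whether $m'<0$ causes a sign issue. It does not, since the direction eigenvalue equals $q'(\tau)=m(\tau)+\tau m'(\tau)=\mathbb{E}[s'(\tau g)g^2]>0$ explicitly. So the plan is to invoke Lemma \ref{lem:separa} for $\tau\ge1$ and a compactness/continuity argument for $\tau\in[0,1]$. This yields $h^\top Ah\ge c R^{-3}\|h\|^2$ with $R\ge1$ and completes the proof.
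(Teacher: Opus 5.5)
Your proposal is correct and follows the paper's proof essentially step for step. You use the integral representation $F(u)-F(v)=Ah$ and the sandwich $cR^{-3}I_d\preceq A\preceq \frac14 I_d$, which comes from the eigenvalues $m(\|z\|_2)$ and $q'(\|z\|_2)$ of $\nabla F(z)$ along a segment that stays in $\mathbb{B}_2^d(2R)$. You then use $h^\top A^2h\le\frac14 h^\top Ah$, the expansion $\|(\eta A-I)h\|_2^2\le\|h\|_2^2-\eta(2-\frac{\eta}{4})h^\top Ah$, and finally $\sqrt{1-a}\le 1-\frac a2$. The only difference is your separate continuity argument on $\tau\in[0,1]$, which is unnecessary because Lemma \ref{lem:separa} already gives $m(\tau)\asymp(1+\tau)^{-1}$ and $q'(\tau)\asymp(1+\tau)^{-3}$ for all $\tau\ge 0$.
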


\begin{proof} 
Let $h:=u-v$, then by Lemma \ref{lem:integral}, we have
\[
F(u)-F(v)=\int_0^1 \nabla F(v+th)h\,dt
\]
where \begin{align}
    \nabla F(z)
=
m(\|z\|_2)I
+
\frac{m'(\|z\|_2)}{\|z\|_2}zz^\top,~~ z\ne 0\quad \textrm{ with the convention $\nabla F(0)= m(0)I_d$.}\label{nablaFz}
\end{align}
Define the matrix
\begin{align}
    A:=\int_0^1 \nabla F(v+th)\,dt,\label{defineA}
\end{align}
then
\begin{align}
    \label{FuFvA}
    F(u)-F(v)=Ah. 
\end{align}
\paragraph{Spectrum of $A.$}
For every $t\in[0,1]$,
since $\|u\|_2\le 2R$ and $\|v\|_2=R$, and $v+th=(1-t)v+tu$, we have 
\[
\|v+th\|_2
\le (1-t)\|v\|_2+t\|u\|_2
\le 2R.
\]
In light of (\ref{nablaFz}), if $z\ne 0$, then the eigenvalues of $\nabla F(z)$ are
$m(\|z\|_2)$ in directions orthogonal to $z$, and 
\begin{align}
    m(\|z\|_2)+\|z\|_2m'(\|z\|_2)
=q'(\|z\|_2)
\label{introduceqt}
\end{align}
in the   direction of $\frac{z}{\|z\|_2}$; we mention that this extends to $z=0$. Note that in (\ref{introduceqt}), we introduce 
\[q(\tau)=\tau m(\tau)\]
as in Lemma \ref{lem:separa}, so that 
\begin{align}
    q'(\|z\|_2)= m(\|z\|_2)+\|z\|_2 m'(\|z\|_2)=\mathbb{E}[s'(\|z\|_2g)g^2]\le \frac{1}{4}\mathbb{E}[g^2]=\frac{1}{4}.\label{qprimeupper}
\end{align}
By Lemma \ref{lem:separa}, we have  $m(\tau)\asymp \frac{1}{1+\tau},
~q'(\tau)\asymp \frac{1}{(1+\tau)^3}$ for $\tau>0$. Combining with $\|v+th\|_2\le 2R$ and $R\ge 1$, we obtain that all the eigenvalues of $\nabla F(v+th)$, for any $t\in[0,1]$, are of order $\Omega(R^{-3})$. This means that 
\begin{align}
    \nabla F(v+th)\succeq  \frac{c}{R^3}I\quad\textrm{for some absolute constant $c$.} \label{pdnablaF}
\end{align}and therefore for any $w\in \mathbb{R}^d$, 
\begin{align}\label{lambdaminA}
    \langle w,Aw\rangle &= \int_0^1  w^\top \nabla F(v+th)w\,dt\ge \frac{c\|w\|_2^2}{R^3}\\\explain\textrm{by Equations (\ref{defineA}) and (\ref{pdnablaF})} 
\end{align}
On the other hand, since $s'(a)\le 1/4$, in light of (\ref{introduceqt}) and (\ref{qprimeupper}) we have
\begin{align*}
    \|\nabla F(v+th)\|_{op}\le m(\|v+th\|_2)\le \frac{1}{4}
\end{align*}
which then yields \begin{align}
    \langle w,Aw\rangle=\int_0^1w^\top \nabla F(v+th) w\,dt \le\frac{1}{4}\|w\|_2^2,\quad \forall w\in\mathbb{R}^d.\label{lammaxbound}
\end{align} 
Combining (\ref{lambdaminA}) and (\ref{lammaxbound}), we conclude that the positive definite  matrix $A$ satisfies
\begin{align*}
      \frac{c}{R^3}I_d\preceq  A \preceq \frac{1}{4}I_d,  
\end{align*}
and we can eigenvalue decomposition of $A$ as 
\begin{align}\label{eigA}
    A=O^\top \Lambda O 
\end{align}
for some orthonormal matrix $O$ and a diagonal matrix 
\[\Lambda=\diag(\lambda_1,\cdots,\lambda_d),\quad \lambda_i \in \Big[\frac{c_1}{R^3} ,\frac{1}{4}\Big],~i\in[d].\]
Now using (\ref{FuFvA}) and $h=u-v$, 
\[
\eta(F(u)-F(v))-(u-v)
=
(\eta A-I)h.
\]
Hence
\begin{align}\label{desiredsquare}
\|\eta(F(u)-F(v))-(u-v) \|_2^2&=\|(\eta A-I)h\|_2^2
=
\|h\|_2^2
-2\eta\langle h,Ah\rangle
+\eta^2\langle h,A^2h\rangle. 
\end{align}
To   bound $\langle h,A^2h\rangle$, we shall use the eigenvalue decomposition of $A$ in (\ref{eigA}) to obtain 
\begin{align}\label{upperhAAh}
    h^\top \Big(A^2-\frac{1}{4}A\Big)h = (Oh)^\top \Big(\Lambda^2-\frac{\Lambda}{4}\Big)(Oh)\le 0 ~~\Longrightarrow~~ h^\top A^2h\le \frac{1}{4}h^\top Ah. 
\end{align}
To lower bound $\langle h,Ah\rangle$, Equation (\ref{lambdaminA}) yields
\begin{align}\label{lowerhAh}
    \langle h,Ah\rangle \ge \frac{c\|h\|_2^2}{R^3}.
\end{align}
Substituting  (\ref{upperhAAh}) and (\ref{lowerhAh}) into (\ref{desiredsquare}), we obtain
\begin{align*}
    \|\eta(F(u)-F(v))-(u-v) \|_2^2 \le \|h\|_2^2-(2\eta-\frac{\eta^2}{4}) h^\top Ah \le \bigg(1-\frac{c\eta(2-\eta/4)}{R^3}\bigg)\|h\|_2^2
\end{align*}
Taking square roots and using $\sqrt{1-a}\le 1-\frac{a}{2}$ for $a\in[0,1)$, we arrive at 
\[
  \|\eta(F(u)-F(v))-(u-v) \|_2
\le
\sqrt{1-\frac{c\eta(2-\eta/4)}{R^3}}\|u-v\|_2\le \bigg(1-\frac{c\eta(2-\eta /4)}{2R^3}\bigg)\|u-v\|_2.
\]
The result follows.  
\end{proof}

 \subsection{AIC and   Convergence}\label{app:aic2con}
 
We are now ready to put pieces together to establish our main structure condition, the approximate invertibility condition (AIC). Under $m\gtrsim d\log n +\log\|\theta^*\|_2$, the bound (\ref{finalTc}) on $T_c$ holds with probability at least $1-2e^{-d}$, and also under $\eta\in[0.1,7.9]$ the bound (\ref{T2bound}) on $T_e $ holds deterministically. We now substitute these two bounds into (\ref{aicdecompose1}) to obtain 
\begin{align}\nn
    \|u-\theta^*-\eta\cdot h_{\theta^*}(u)\|_2 &\le \bigg(1-\frac{c}{\|\theta^*\|_2^3}+C\sqrt{\frac{d\log n+\log\|\theta^*\|_2}{n}}\bigg) \|u-\theta^*\|_2\\
\label{aicsmall}    &+ C'\bigg(\sqrt{\frac{d}{n\|\theta^*\|_2}}+\frac{d}{n}\bigg),\qquad \forall u\in \mathbb{B}_2^d(2\|\theta^*\|_2).
\end{align}
Following recent works \cite{matsumoto2024binary,matsumoto2025learning,chen2025unified,friedlander2021nbiht,abdalla2026robust}, (\ref{aicsmall}) is referred to as an  \emph{approximate invertibility condition} (AIC) in the sense that it essentially controls the difference between the ideal descent step $u-\theta^*$ and the actual gradient step $\eta\cdot h_{\theta^*}(u)$. To imply the desired convergence, we need the linear term, $(1-\frac{c}{\|\theta^*\|_2^3}+\sqrt{\frac{d\log n+\log\|\theta^*\|_2}{n}})\cdot\|u-\theta^*\|_2$, to be a contraction. To that end we enforce the sample complexity
\begin{align}
    \label{samcom}
    n\gtrsim \|\theta^*\|_2^6 d\log n + \|\theta^*\|_2^6\log\|\theta^*\|_2,
\end{align}
under which the AIC (\ref{aicsmall}) simplifies to 
\begin{align}\tag{AIC1}
    \|u-\theta^*-\eta\cdot h_{\theta^*}(u)\|_2 \le \bigg(1-\frac{c'}{\|\theta^*\|_2^3}\bigg)\|u-\theta^*\|_2+ C'\bigg(\sqrt{\frac{d}{n\|\theta^*\|_2}}+\frac{d}{n}\bigg),~~
    \forall u\in \mathbb{B}_2^d(2\|\theta^*\|_2). \label{smallraic2}
\end{align}

To complete the proof of Theorem \ref{thm:small}, all that remains is to show that the above (\ref{smallraic2}) implies the claimed linear convergence. We address this via the following lemma by an argument similar to \cite[Theorem 3.1]{chen2025unified}, which is to control $\{\|\theta_t-\theta^*\|_2\}_{t=0}^\infty$ by a sequence defined iteratively. 

\begin{lem}[(\ref{smallraic2}) implies convergence] \label{lem:aic1toconverge}
    Under (\ref{samcom}) and (\ref{smallraic2}), $\{\theta_t\}_{t\ge 0}$ generated by Algorithm  \ref{alg:gd} with $\theta_0=0$ (and the $\eta $  in (\ref{smallraic2})) satisfies
    \begin{align*}
        \|\theta_t - \theta^*\|_2 \le \bigg(1-\frac{c'}{\|\theta^*\|_2^3}\bigg)^t\|\theta^*\|_2 +  \tilde{C}\sqrt{\frac{\|\theta^*\|_2^5d}{n}},\qquad \forall t=0,1,\cdots . 
    \end{align*}
\end{lem}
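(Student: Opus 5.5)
We argue by induction on $t$, comparing $\|\theta_t-\theta^*\|_2$ with a scalar sequence. Write $\kappa:=1-\frac{c'}{\|\theta^*\|_2^3}$ and $E:=C'\big(\sqrt{\frac{d}{n\|\theta^*\|_2}}+\frac{d}{n}\big)$, and define $a_0:=\|\theta^*\|_2$ and $a_{t+1}:=\kappa a_t+E$. Unrolling the recursion gives
\begin{align*}
a_t=\kappa^t\|\theta^*\|_2+E\sum_{j=0}^{t-1}\kappa^j\le \kappa^t\|\theta^*\|_2+\frac{E\|\theta^*\|_2^3}{c'}.
\end{align*}
The inductive claim is: $\|\theta_t-\theta^*\|_2\le a_t$ and $\theta_t\in\mathbb{B}_2^d(2\|\theta^*\|_2)$. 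For $t=0$ both hold trivially, since $\theta_0=0$.

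For the inductive step, note that Algorithm \ref{alg:gd} gives
\begin{align*}
\theta_{t+1}-\theta^*=\theta_t-\theta^*-\eta\, h_{\theta^*}(\theta_t).
\end{align*}
Because $\theta_t$ lies in $\mathbb{B}_2^d(2\|\theta^*\|_2)$, we may apply (\ref{smallraic2}) with $u=\theta_t$:
\begin{align*}
\|\theta_{t+1}-\theta^*\|_2\le\kappa\|\theta_t-\theta^*\|_2+E\le \kappa a_t+E=a_{t+1}.
\end{align*}

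The main point is to keep the iterates inside the ball where (\ref{smallraic2}) holds, i.e.\ to show $a_{t+1}\le\|\theta^*\|_2$. Then the triangle inequality gives $\|\theta_{t+1}\|_2\le 2\|\theta^*\|_2$. Since $a_t\le\|\theta^*\|_2$ by induction,
\begin{align*}
a_{t+1}\le \kappa\|\theta^*\|_2+E=\|\theta^*\|_2-\frac{c'}{\|\theta^*\|_2^2}+E,
\end{align*}
so it suffices to have $E\le \frac{c'}{\|\theta^*\|_2^2}$. Under (\ref{samcom}) we have $\frac{d}{n}\lesssim \|\theta^*\|_2^{-6}$. This gives
\begin{align*}
\sqrt{\frac{d}{n\|\theta^*\|_2}}\lesssim\|\theta^*\|_2^{-7/2}\qquad\text{and}\qquad \frac{d}{n}\lesssim\|\theta^*\|_2^{-6},
\end{align*}
and both are $\le \frac{c'}{2C'}\|\theta^*\|_2^{-2}$ once the constant in (\ref{samcom}) is large enough. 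This step is the only delicate one, but it reduces to the sample-size condition.

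Finally we convert the error term into the rate claimed in the statement. We have
\begin{align*}
\frac{E\|\theta^*\|_2^3}{c'}\lesssim \sqrt{\frac{\|\theta^*\|_2^5 d}{n}}+\frac{\|\theta^*\|_2^3 d}{n}.
\end{align*}
The second term is at most the first: their ratio is $\sqrt{\|\theta^*\|_2\, d/n}\le 1$, which holds under (\ref{samcom}). Combining this with $\|\theta_t-\theta^*\|_2\le a_t$ yields the stated bound for all $t$, with a suitable $\tilde C$.
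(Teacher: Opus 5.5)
Your proof is correct and matches the paper's argument: it compares $\|\theta_t-\theta^*\|_2$ with the scalar recursion $f_{t+1}=\kappa f_t+E$, $f_0=\|\theta^*\|_2$, uses (\ref{samcom}) to get $f_t\le\|\theta^*\|_2$ so the iterates stay in $\mathbb{B}_2^d(2\|\theta^*\|_2)$ where (\ref{smallraic2}) applies, and inducts. The only difference is cosmetic: you check $a_t\le\|\theta^*\|_2$ step by step through $E\le c'/\|\theta^*\|_2^2$, while the paper reads it off the closed form via $\frac{E\|\theta^*\|_2^3}{c'}\le\|\theta^*\|_2$, and these two conditions are identical.
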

\begin{proof}
    We define a sequence by $\{f_t\}_{t=0}^\infty$ by the recurrence and the initial value
    \[f_{t+1}=\bigg(1-\frac{c'}{\|\theta^*\|_2^3}\bigg)f_t+C'\bigg(\sqrt{\frac{d}{n\|\theta^*\|_2}}+\frac{d}{n}\bigg),~t\ge 0,\quad\textrm{and}\quad f_0=\|\theta^*\|_2.\]
    It is not hard to observe the formula 
    \begin{align}
        \label{closedft}f_t= \bigg(1-\frac{c'}{\|\theta^*\|_2^3}\bigg)^t\|\theta^*\|_2+\bigg[1-\Big(1-\frac{c'}{\|\theta^*\|_2^3}\Big)^t\bigg]\frac{C'}{c'}\bigg(\sqrt{\frac{d\|\theta^*\|_2^5}{n}}+\frac{d\|\theta^*\|_2^3}{n}\bigg),\quad\forall t\ge 0.
    \end{align}    
    Hence, under (\ref{samcom}) and $\|\theta^*\|_2\ge 1$, we have that
    \begin{align*}
        f_t\le  \bigg(1-\frac{c'}{\|\theta^*\|_2^3}\bigg)^t\|\theta^*\|_2+ O\bigg(\sqrt{\frac{\|\theta^*\|_2^5d}{n}}\bigg),\quad \forall t\ge 0,
    \end{align*}
    and in turn we only need to prove $\|\theta_t-\theta^*\|_2\le f_t$ for any $t\ge 0.$ Before that, we further observe that  (\ref{samcom}) and $\|\theta^*\|_2\ge 1$ imply \[\frac{C'}{c'}\bigg(\sqrt{\frac{d\|\theta^*\|_2^5}{n}}+\frac{d\|\theta^*\|_2^3}{n}\bigg)\le \|\theta^*\|_2,\]
    hence (\ref{closedft}) also yields $f_t\le \|\theta^*\|_2$ for any $t\ge 0.$

    It remains to prove $\|\theta_t-\theta^*\|_2\le f_t$ for any $t\ge 0$. We accomplish this by induction. Since $\theta_0=0$ and $f_0=\|\theta^*\|_2$, $\|\theta_0-\theta^*\|_2\le f_0$ holds trivially. Now suppose that $\|\theta_t-\theta^*\|_2\le f_t$, and we seek to prove $\|\theta_{t+1}-\theta^*\|_2\le f_{t+1}$.  First by $f_t\le \|\theta^*\|_2$, the hypothesis yields $\|\theta_t\|_2\le f_t+\|\theta^*\|_2\le 2\|\theta^*\|_2$. Therefore we can apply (\ref{smallraic2}) with $u=\theta_t$, yielding 
    \[ \|\theta_t-\theta^*-\eta\cdot h_{\theta^*}(\theta_t)\|_2 \le \bigg(1-\frac{c'}{\|\theta^*\|_2^3}\bigg)\|\theta_t-\theta^*\|_2+ C'\bigg(\sqrt{\frac{d}{n\|\theta^*\|_2}}+\frac{d}{n}\bigg).\]
    Since $\theta_{t+1}=\theta_t-\eta \cdot h_{\theta^*}(\theta_t)$, and also using the hypothesis $\|\theta_t-\theta^*\|_2\le f_t$, the above display yields 
    \[\|\theta_{t+1}-\theta^*\|_2\le\bigg(1-\frac{c'}{\|\theta^*\|_2^3}\bigg)f_t+ C'\bigg(\sqrt{\frac{d}{n\|\theta^*\|_2}}+\frac{d}{n}\bigg) =f_{t+1}.\]
    This completes the induction. 
\end{proof}

\section{Proof of Theorem \ref{thm:big} (GD with Large Stepsize)}\label{app:proofbig}
The overall proof architecture remains the same as Theorem \ref{thm:small}, that is to establish an AIC and then use it to imply the desired convergence. The first main difference is that the AIC for Theorem \ref{thm:big} only holds ``locally'' over   a small radius-$O(\|\theta^*\|_2^{-1})$ neighborhood of $\theta^*$. Again we start with (\ref{aicdecompose1}), i.e., the decomposition
 \begin{align}\label{aicdecompose2}
     \|u-\theta^*-\eta\cdot h_{\theta^*}(u)\|_2\le \eta\cdot \underbrace{\|h_{\theta^*}(u) - \mathbb{E}[h_{\theta^*}(u)]\|_2}_{:=T_c} + \underbrace{\|u-\theta^* -\eta\mathbb{E}[h_{\theta^*}(u)]\|_2}_{:=T_e},
 \end{align}
and we shall restrict to $u\in \mathbb{B}_2^d(\theta^*;\frac{c_0}{\|\theta^*\|_2})$ for some small enough constant $c_0$. The major technical difference arises in the analysis of the bias term $T_e$. 

 \subsection{Bounding $T_c$ (concentration term)}
While re-iterating the arguments in Appendix \ref{sec:boundTc} (especially Appendix \ref{app:pealing}) over a smaller range of $u$, namely $\mathbb{B}_2^d(\theta^*;\frac{c_0}{\|\theta^*\|_2})$, may lead to slightly tighter bound, such improvement is marginal and can only refine log factors. Therefore, we directly reuse the previous bound, which we restate here (cf. Appendix \ref{sec:finalTc}): if $n\gtrsim d\log n+\log\|\theta^*\|_2$, then with probability at least $1-2e^{-d}$, we have
\begin{align}\label{finalTc1}
    \|h_{\theta^*}(u)-\mathbb{E}[h_{\theta^*}(u)]\|_2 \lesssim \sqrt{\frac{d\log n+ \log\|\theta^*\|_2}{n}}\|u-\theta^*\|_2 + \sqrt{\frac{d}{n\|\theta^*\|_2}}+ \frac{d}{n},~~ \forall u\in \mathbb{B}_2^d(\theta^*;\frac{c_0}{\|\theta^*\|_2}). 
\end{align}

 \subsection{Bounding $T_e$ (bias term)}\label{app:bias_thm2}
We shall now bound $T_e$ over $u\in \mathbb{B}_2^d(\theta^*;\frac{c_0}{\|\theta^*\|_2})$. The main ingredient is the following Lemma \ref{lem:largeeta}, which involves two functions that are recurring in our analysis: \[\textrm{$m(\tau)= \mathbb{E}[s'(\tau g)]$ and $q'(\tau)=\mathbb{E}[s'(\tau g)g^2]$}\] 
See,e.g., Theorem \ref{thm:big} and Lemma \ref{lem:separa}. 
An important starting point is, again,  the transform in   (\ref{T2expression}) based on stein's identity: 
\begin{align}
    T_e:= \big\|u-\theta^*-\eta\cdot\mathbb{E}[h_{\theta^*}(u)]\big\|_2 = \big\|u-\theta^*-\eta(F(u)-F(\theta^*))\big\|_2,\label{Tetransform}
\end{align}
where $F:\mathbb{R}^d\to\mathbb{R}^d$ is defined as $F(u)=m(\|u\|_2)u$.

Let us pause to see the implied bound of Lemma \ref{lem:largeeta} on $T_e$. We set $v=\theta^*$ in Lemma \ref{lem:largeeta}, then (\ref{largeeta1}) is also the stepsize assumption in Theorem \ref{thm:big}, and the conclusion 
\[\left\|
\eta(F(u)-F(\theta^*))-(u-\theta^*)
\right\|_2
\le
\left(1-\frac{c_1}{R^2}\right)\|u-\theta^*\|_2,\quad \forall u\in \mathbb{B}_2^d(\theta^*;\frac{c_0}{\|\theta^*\|_2}) ,\]
in light of (\ref{Tetransform}), yields 
\begin{align}
    T_e\le \left(1-\frac{c_1}{\|\theta^*\|_2^2}\right)\|u-\theta^*\|_2,\quad \forall u\in \mathbb{B}_2^d(\theta^*;\frac{c_0}{\|\theta^*\|_2}). \label{finalTeboundlarge}
\end{align}
This is final bound on the bias term $T_e.$
\begin{lem} \label{lem:largeeta} 
There exist universal constants $c,c_0,c_1>0$ such that the following holds. 
Let $v\in\mathbb{R}^d$ satisfy
$R:=\|v\|_2\ge 1.$ If 
\begin{align}
    \label{largeeta1}
     \frac{c}{R^2q'(R)}<\eta<\frac{2-cR^{-2}}{m(R)},
\end{align}
  then for any $\|u-v\|_2\le \frac{c_0}{R}$ we have 
\[
\left\|
\eta(F(u)-F(v))-(u-v)
\right\|_2
\le
\left(1-\frac{c_1}{R^2}\right)\|u-v\|_2.
\]
Moreover, $\eta= \frac{1}{m(R)}$ is a specific choice satisfying (\ref{largeeta1}). 
\end{lem}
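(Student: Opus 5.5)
Write $h=u-v$ and, as in Lemma \ref{lem:smallcontract}, use Lemma \ref{lem:integral} to get $F(u)-F(v)=Ah$ with $A=\int_0^1\nabla F(v+th)\,dt$. Then $\eta(F(u)-F(v))-(u-v)=(\eta A-I)h$. Because $A$ is symmetric, it suffices to show that every eigenvalue $\mu$ of $A$ satisfies $|\eta\mu-1|\le 1-c_1/R^2$. The plan is to compare $A$ with the single matrix $\nabla F(v)$, whose spectrum is explicit.

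By (\ref{nablaFz}), $\nabla F(v)=m(R)(I-ee^\top)+q'(R)ee^\top$ with $e=v/R$. So $\eta\nabla F(v)-I$ has eigenvalues $\eta m(R)-1$ on $e^\perp$ and $\eta q'(R)-1$ along $e$. Condition (\ref{largeeta1}) controls both.
\begin{itemize}
\item The upper bound $\eta m(R)<2-cR^{-2}$ gives $\eta m(R)-1\le 1-cR^{-2}$.
\item The lower bound gives $\eta q'(R)>c/R^2$, hence $1-\eta q'(R)<1-c/R^2$.
\item We also need $\eta m(R)-1\ge -(1-c'/R^2)$, i.e.\ $\eta m(R)\gtrsim R^{-2}$. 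This follows from $\eta>c/(R^2q'(R))$ together with $m(R)\ge q'(R)$ (Lemma \ref{lem:V1larger} / Lemma \ref{lem:separa}).
\item Finally, $\eta q'(R)-1\le \eta m(R)-1\le 1-cR^{-2}$.
\end{itemize}
Hence $\|\eta\nabla F(v)-I\|_{op}\le 1-c/R^2$.

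Next, control the perturbation $\|\eta(A-\nabla F(v))\|_{op}\le \eta\sup_{t}\|\nabla F(v+th)-\nabla F(v)\|_{op}$. Set $z=v+th$, so $\|z-v\|_2\le c_0/R$ and $\|z\|_2\in[R/2,2R]$. Write $\nabla F(z)=m(\|z\|)I+\frac{m'(\|z\|)}{\|z\|}zz^\top$ and bound three pieces:
\begin{itemize}
\item $|m(\|z\|)-m(R)|\le \sup|m'|\cdot c_0/R$;
\item the change in the scalar $m'(\|z\|)\|z\|$, which equals $q'-m$;
\item the change of the projector $\|zz^\top/\|z\|^2-ee^\top\|_{op}\lesssim \|z-v\|/R\lesssim c_0/R^2$.
\end{itemize}
From Lemma \ref{lem:separa} we have $m(\tau)\asymp 1/\tau$ and $q'\asymp\tau^{-3}$, hence $|m'(\tau)|=|q'(\tau)-m(\tau)|/\tau\lesssim \tau^{-2}$. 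I also need a bound on $q''$ (equivalently $m''$) of order $\tau^{-3}$, which I would obtain by differentiating under the expectation and repeating the Gaussian-tail estimates of Lemma \ref{lem:separa}. With these, the three pieces are of order $c_0R^{-3}$, $c_0R^{-4}$ (or $R^{-3}$) and $|m'(R)|R\cdot c_0R^{-2}\lesssim c_0R^{-3}$. Since $\eta\le 2/m(R)\lesssim R$, the perturbation is at most $C c_0 R^{-2}$.

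Choosing $c_0$ small relative to $c$ then gives $\|\eta A-I\|_{op}\le 1-c/R^2+Cc_0/R^2\le 1-c_1/R^2$.

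The main obstacle is the perturbation step. The contraction margin is only $R^{-2}$ while $\eta\sim R$, so the Hessian must be Lipschitz with constant $O(R^{-2})$ on the ball, and each derivative bound on $m,q'$ has to be sharp in $R$. The projector term is the tightest one and dictates the radius $c_0/R$.

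For the specific choice $\eta=1/m(R)$:
\begin{itemize}
\item The upper inequality reads $1<2-cR^{-2}$, which holds for small $c$ since $R\ge1$.
\item The lower inequality reads $c\,m(R)<R^2q'(R)$. Since $m(R)\asymp R^{-1}$ and $R^2q'(R)\asymp R^{-1}$ by Lemma \ref{lem:separa}, it holds for $c$ small enough.
\end{itemize}
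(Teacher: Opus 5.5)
Your proof is correct and shares the paper's skeleton. You linearize at $v$ with $J_v=\nabla F(v)=q'(R)ee^\top+m(R)(I-ee^\top)$ and read off $\|\eta J_v-I\|_{op}\le 1-c/R^2$ from the stepsize window. Your four sign checks, including the use of $m(R)>q'(R)$ from Lemma \ref{lem:V1larger}, are exactly what is needed there. You then absorb the linearization error using $\eta\lesssim R$ and $\|h\|_2\le c_0/R$.

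The difference is in how you control that linearization error. The paper never forms $A$. It splits $h=\alpha e+w$ and writes $F(u)-F(v)-J_vh=T_1e+T_2w$ explicitly. It bounds $T_1$ by a second-order Taylor expansion of $q$ (Lemma \ref{qpporder}) together with the identity $\rho-R-\alpha=\|w\|_2^2/(\rho+R+\alpha)$, and bounds $T_2$ by $|m'|\lesssim\tau^{-2}$, arriving at $\|\mathcal R\|_2\lesssim\|h\|_2^2/R^2$.

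You instead show that the Jacobian field is $O(R^{-2})$-Lipschitz on the ball and bound $\eta\|A-\nabla F(v)\|_{op}$. This forces you to also track the rotation of the projector $zz^\top/\|z\|_2^2$, which is harmless since $\|z\|_2\ge R/2$ along the segment.

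What each route buys:
\begin{itemize}
\item Your version is more modular. It reuses the integral representation from Lemma \ref{lem:smallcontract} and bounds $\|\eta A-I\|_{op}$ directly.
\item Your derivation $|m'(\tau)|=|q'(\tau)-m(\tau)|/\tau\lesssim\tau^{-2}$ from Lemma \ref{lem:separa} is quicker than Lemma \ref{mporder}.
\item The paper's version uses only scalar expansions at the two radii $\rho$ and $R$, and makes the quadratic structure of the error explicit.
\end{itemize}

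Two minor remarks on your commentary. First, the $q''$ bound you plan to derive is already Lemma \ref{qpporder}, at rate $\tau^{-4}$, which is more than you need. Second, the projector term is not the unique bottleneck. The radial change $|m(\|z\|_2)-m(R)|$ is likewise of order $\|z-v\|_2/R^2$, so both terms together fix the radius $c_0/R$.
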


\begin{proof}
Let $h:=u-v.$ If $h=0$, the result is trivial, so we assume $h\ne 0$. Let
$
e:=\frac{v}{\|v\|_2},$ and we shall decompose $h$ along $e$ and an orthogonal direction  $\frac{w}{\|w\|_2}$ (here, $\{e,\frac{w}{\|w\|_2}\}$ is orthonormal basis for ${\rm span}(u,v)$, and in the case of $u = \lambda v$ for some $\lambda\in\mathbb{R}$ we can choose $w/\|w\|_2$ as an arbitrary unit vector orthogonal to $v$):
\begin{align}\label{decomposeh}
    h=\alpha e+w,\qquad \textrm{for some }w\perp e. 
\end{align}
With these conventions and $R=\|v\|_2\ge 1$ we  have
\begin{align}
    u = v+h =(R+\alpha)e+w,
\label{uexpress}
\end{align}
and hence $\rho:=\|u\|_2$ equals 
\begin{align}
    \rho:=\|u\|_2=\sqrt{(R+\alpha)^2+\|w\|_2^2}.\label{rhoequa}
\end{align}
Since $\|h\|_2\le \frac{c_0}{R}$ and $R\ge 1$, choosing sufficiently small $c_0$ (together with triangle inequality) ensures 
\begin{align}\label{rhioatR}
    \frac{R}{2}\le R-\|h\|_2\le\rho \le R+\|h\|_2 \le 2R
\end{align} 
The above conventions, along with $q(\tau)=\tau m(\tau)$ (cf. Lemma \ref{lem:separa}), also allow us to write 
\begin{align*}
    F(v)=m(\|v\|_2)v=m(R)Re=q(R)e, 
\end{align*}
and in light of (\ref{uexpress})
\begin{align*}
F(u)=m(\|u\|_2)u=m(\rho)\big((R+\alpha)e+w\big).
\end{align*}
Hence
\begin{align}\label{FuFv}
    F(u)-F(v)
=
\Big(m(\rho)(R+\alpha)-q(R)\Big)e
+
m(\rho)w. 
\end{align}
\paragraph{Approximating $F(u)-F(v)$ via linear map.} We now come to the most technical step in this proof, which approximates $F(u)-F(v)$  via a linear map on $h$.  We introduce 
\begin{align}
    J_v=q'(R)ee^\top+m(R)(I-ee^\top).\label{defineJv}
\end{align}
Then for $h=\alpha e+w$ with $w\perp e$ (cf. Equation (\ref{decomposeh})), we have 
\[
ee^\top h=\alpha e,\qquad (I-ee^\top)h=w,
\]
and hence
\begin{align}
    J_vh=q'(R)ee^\top h+m(R)(I-ee^\top)h
=q'(R)\alpha e+m(R)w.\label{Jvh}
\end{align}
We want to use $J_vh$ to approximate $F(u)-F(v)=F(v+h)-F(v)$, with the intuition behind being that $J_v$ in (\ref{defineJv}) is just the Jacobian matrix $\nabla F(v)$ (see Lemma \ref{lem:integral}). 
Our main claim is that
\begin{align}
    F(u)-F(v)=J_vh+\mathcal R,\quad \textrm{with ~$\|\mathcal R\|_2\le C\frac{\|h\|_2^2}{R^2}.$}\label{approximate}
\end{align} 
\paragraph{Proof of (\ref{approximate}).} In light of (\ref{FuFv}) and (\ref{Jvh}),  
\begin{align}\label{seset}
    F(u)-F(v)-J_vh = \underbrace{\Big(m(\rho)(R+\alpha)-q(R)- q'(R)\alpha\Big)}_{:=T_1}e + \underbrace{\big(m(\rho)-m(R)\big)}_{T_2}w
\end{align}
and we shall treat the components of $e$ and of $w$ separately.

For the $e$-component, by $q(\rho)=\rho m(\rho)$ we have 
\begin{align*}
T_1 =
\underbrace{q(\rho)-q(R)-q'(R)(\rho-R)}_{:=T_1'
}
+
\underbrace{\big(q'(R)-m(\rho)\big)(\rho-R-\alpha)}_{T''_1}.
\end{align*}
To bound $T_1'$, we have
\begin{align}\nn
    &|T'_1|=\big|q(\rho)-q(R)-q'(R)(\rho-R)\big|\\
\nn    &\lesssim \Big(\sup_{\tau\ge\frac{R}{2}}|q''(\tau)|\Big)\cdot|\rho-R|^2\\
    \explain\textrm{by Taylor's theorem and Equation (\ref{rhioatR})}\\
    &\lesssim \frac{\|h\|_2^2}{R^4}\label{firstT1esti}\\\explain\textrm{by $q''(\tau)\asymp \frac{1}{\tau^4}$ and $|\rho-R|\le \|h\|_2$}
\end{align}
We now bound $|T''_1|=|q'(R)-m(\rho)||\rho-R-\alpha|$. 
The   bounds in Lemma \ref{lem:separa} give 
\[
|q'(R)-m(\rho)|
\le |q'(R)|+|m(\rho)|
\lesssim \frac1R.
\]
Combining with \[
|\rho-R-\alpha| 
=
\frac{\rho^2-(R+\alpha)^2}{\rho+R+\alpha}
\stackrel{(\ref{rhoequa})}{=}\frac{\|w\|_2^2}{\rho+R+\alpha}\stackrel{(\ref{decomposeh})}{\lesssim}
\frac{\|w\|_2^2}{R}
\stackrel{(\ref{decomposeh})}{\le}
\frac{\|h\|_2^2}{R}, 
\] 
we obtain 
\begin{align}
    |T_1''|
\lesssim
\frac1R\cdot \frac{\|h\|_2^2}{R}
=
\frac{\|h\|_2^2}{R^2}.\label{secondT1esti}
\end{align}
Combining (\ref{firstT1esti}) and (\ref{secondT1esti}) yields 
\begin{align}
    |T_1|=O\left(\frac{\|h\|_2^2}{R^2}\right).
\label{T1bound}
\end{align}
For $T_2$, by
using $|m'(\tau)|\lesssim \tau^{-2}$ for $\tau\ge\frac{1}{2}$ (cf. Lemma \ref{mporder}), we obtain 
\[|T_2|=
|m(\rho)-m(R)|\stackrel{(\ref{rhioatR})}{\le} \sup_{\tau\ge R/2}|m'(\tau)|\cdot |\rho-R|
\lesssim
\frac{|\rho-R|}{R^2}
\stackrel{(\ref{rhioatR})}{\le}
\frac{\|h\|_2}{R^2}.
\]
Together with $\|w\|_2\le \|h\|_2$ from (\ref{decomposeh}), we reach 
\begin{align}
    \|T_2w\|_2
\lesssim
\frac{\|h\|_2}{R^2}\|w\|_2
\le
\frac{\|h\|_2^2}{R^2}.\label{T2bound1}
\end{align}
We are now ready to establish the claimed approximation guarantee in (\ref{approximate}):
\begin{align*}
    \big\|F(u)-F(v)-J_vh\big\|_2 &\stackrel{(\ref{seset})}{\le} |T_1|+|T_2|\|w\|_2\\
    &~\le \frac{\|h\|_2^2}{R^2}.\\
    \explain\textrm{by Equations (\ref{T1bound}) and (\ref{T2bound1})}
\end{align*}

\paragraph{Concluding the proof.} In view of (\ref{defineJv}), $J_v$ has eigenvalue
$
\lambda_\parallel:=q'(R)
$
in the direction $e$ and eigenvalue
$
\lambda_\perp=m(R)
$
on the orthogonal subspace. Hence Lemma \ref{lem:V1larger} yields $\lambda_{\|}<\lambda_{\perp}$. 
Note that our choice of stepsize in (\ref{largeeta1}) 
is equivalent to 
\begin{align*}
    -\bigg(1-\frac{c}{R^2}\bigg)<1-\eta\lambda_{\perp}<1-\eta\lambda_{\parallel}<1-\frac{c}{R^2},
\end{align*}
which then also ensures
\begin{align}
\label{contract}
\|\eta J_v-I_d\|_{op} \le 1-\frac{c}{R^2}.
\end{align}
Using (\ref{approximate}), 
we get
\begin{align*}
&\|\eta(F(u)-F(v))-(u-v)\|_2
\\& = \|\eta J_vh+\calR-h\|_2
\\&\le
\|(\eta J_v-I_d)h\|_2+\eta\|\mathcal R\|_2\\
&\le
\left(1-\frac{c}{R^2}\right)\|h\|_2
+
\eta C\frac{\|h\|_2^2}{R^2}\\
&\le \left(1-\frac{c}{R^2}\right)\|h\|_2+C'\frac{\|h\|_2^2}{R}\\\explain\textrm{By $m(R)\asymp  \frac{1}{R}$ from Lemma \ref{lem:separa}, we have $\eta \lesssim R$}\\
&\le \left(1-\frac{c}{R^2}\right)\|h\|_2+ \frac{C'c_0\|h\|_2}{R^2} \\\explain\textrm{by $\|h\|_2\le \frac{c_0}{R}$ as assumed}\\
&\le \left(1-\frac{c}{2R^2}\right)\|h\|_2.\\
\explain\textrm{Taking $c_0$ sufficiently small}
\end{align*}
 This proves the claim. 
 
\paragraph{Specific stepsize $\eta=\frac{1}{m(R)}$.} Finally, we prove the ``moreover'' part of the statement. We claim that the simpler choice
$
\eta=\frac{1}{m(R)}
$
also belongs to the feasible interval
$
\left[
\frac{c}{R^2q'(R)},\,
\frac{2-cR^{-2}}{m(R)}
\right]
$ 
provided $c>0$ is sufficiently small. 
Indeed, the upper bound is immediate. Since $R\ge 1$, if $c\le 1$, then
$
1\le 2-cR^{-2},
$
and hence
$
\frac{1}{m(R)}
\le
\frac{2-cR^{-2}}{m(R)}.
$
For the lower bound, we need
$
\frac{c}{R^2q'(R)}
\le
\frac{1}{m(R)}.
$ 
Since $m(R),q'(R)>0$, this is equivalent to
$
c\,m(R)\le R^2q'(R).$ 
Using the   estimates from Lemma \ref{lem:separa}, 
we have
\[
\frac{R^2q'(R)}{m(R)}
\gtrsim
\frac{R^2 R^{-3}}{R^{-1}}
\gtrsim 1~~\Longrightarrow~~
\inf_{R\ge 1}\frac{R^2q'(R)}{m(R)}>0. 
\]
Choosing $c>0$ smaller than this infimum and also $c\le 1$ yields
$
\frac{1}{m(R)}
\in
\left[
\frac{c}{R^2q'(R)},\,
\frac{2-cR^{-2}}{m(R)}
\right]
$ for all $R\ge 1$. 
\end{proof}

\subsection{AIC and Convergence}\label{app:localaictov}
We are in a position to establish the AIC.  Substituting (\ref{finalTc1}) and (\ref{Tetransform}) into (\ref{aicdecompose2}), along with $\eta\le \frac{2}{m(\|\theta^*\|_2)}\asymp \|\theta^*\|_2$ (cf. Lemma \ref{lem:separa}), it follows that
\begin{align}\nn
    &\|u-\theta^*-\eta\cdot h_{\theta^*}(u)\|_2 \le C'\|\theta^*\|_2 \sqrt{\frac{d\log n +\log\|\theta^*\|_2}{n}}\|u-\theta^*\|_2 \\
\label{aic2initial} &\quad + \bigg(1-\frac{c_1}{\|\theta^*\|_2^2}\bigg)\|u-\theta^*\|_2 + C''\bigg(\sqrt{\frac{\|\theta^*\|_2d}{n}}+\frac{\|\theta^*\|_2d}{n}\bigg),\quad \forall u\in\mathbb{B}_2^d(\theta^*;\frac{c_0}{\|\theta^*\|_2}).
\end{align}
To ensure that the pre-factor of $\|u-\theta^*\|_2$ is bounded away from $1$, i.e., 
\[1-\frac{c_1}{\|\theta^*\|_2^2}+C'\|\theta^*\|_2 \sqrt{\frac{d\log n+\log\|\theta^*\|_2}{n}}<1-\frac{c_1}{2\|\theta^*\|_2^2},\]
we enforce the sample complexity 
\begin{align}\label{samcom2}
    n\gtrsim \|\theta^*\|_2^6d\log n +\|\theta^*\|_2^6\log\|\theta^*\|_2,
\end{align}
which is (surprisingly) identical to (\ref{samcom}). Then, the AIC in (\ref{aic2initial}) leads to the following local AIC: 
\begin{align}\tag{AIC2}
    \label{localaic}
    \|u-\theta^*-\eta\cdot h_{\theta^*}(u)\|_2 \le \bigg(1-\frac{c'}{\|\theta^*\|_2^2}\bigg)\|u-\theta^*\|_2 +C'\bigg(\sqrt{\frac{\|\theta^*\|_2d}{n}}+\frac{\|\theta^*\|_2d}{n}\bigg),~~ \forall  u\in\mathbb{B}_2^d(\theta^*;\frac{c_0}{\|\theta^*\|_2})
\end{align}
for some universal constants $c',C',c_0$ (after updating the value of $C'$).

To conclude the proof of Theorem \ref{thm:big}, it remains to show (\ref{localaic}) leads to the claimed convergence. An argument similar to Lemma \ref{lem:aic1toconverge} works, but since (\ref{localaic}) only holds locally over $\mathbb{B}_2^d(\theta^*;\frac{c}{\|\theta^*\|_2})$, there are two differences of (i) requiring a warm initialization and (ii) requiring a sample complexity slightly higher than (\ref{samcom2}). 
\begin{lem}[(\ref{localaic}) implies convergence] Under (\ref{localaic}) and  
\begin{align}
    \label{localsam}
    n\gtrsim \|\theta^*\|_2^7d~~~\textrm{with large enough hidden constant}
\end{align}
$\{\theta_0\}_{t\ge 0}$ generated by Algorithm \ref{alg:gd} with $\theta_0\in\mathbb{B}_2^d(\theta^*;\frac{c_0}{\|\theta^*\|_2})$ (and the $\eta$ in (\ref{localaic})) satisfies
\begin{align*}
    \|\theta_t-\theta^*\|_2\le \bigg(1-\frac{c'}{\|\theta^*\|_2^2}\bigg)^t\frac{c_0}{\|\theta^*\|_2} + \tilde{C}\sqrt{\frac{\|\theta^*\|_2^5d}{n}},\quad \forall t=0,1,\cdots.
\end{align*}
\end{lem}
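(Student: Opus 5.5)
We will mirror the proof of Lemma \ref{lem:aic1toconverge}: dominate $\|\theta_t-\theta^*\|_2$ by a scalar sequence defined through the contraction in (\ref{localaic}), and use induction to keep the iterates inside the ball where (\ref{localaic}) is valid. Write $\kappa:=1-\frac{c'}{\|\theta^*\|_2^2}$ and $E:=C'\big(\sqrt{\|\theta^*\|_2d/n}+\|\theta^*\|_2d/n\big)$. Define $f_0=\frac{c_0}{\|\theta^*\|_2}$ and $f_{t+1}=\kappa f_t+E$. Unrolling the recursion gives
\[
f_t=\kappa^t\frac{c_0}{\|\theta^*\|_2}+(1-\kappa^t)\frac{E}{1-\kappa}\le \kappa^t\frac{c_0}{\|\theta^*\|_2}+\frac{C'}{c'}\Big(\sqrt{\frac{\|\theta^*\|_2^5d}{n}}+\frac{\|\theta^*\|_2^3d}{n}\Big).
\]
Since $n\ge d$, the second term is at most $\sqrt{\|\theta^*\|_2^5d/n}$ up to a constant whenever $n\gtrsim \|\theta^*\|_2d$, because $\frac{\|\theta^*\|_2^3d}{n}\le\sqrt{\frac{\|\theta^*\|_2^5d}{n}}$ is equivalent to $n\ge \|\theta^*\|_2 d$. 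So $f_t$ already has the claimed form, and it only remains to show $\|\theta_t-\theta^*\|_2\le f_t$.

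The step that needs care is showing $f_t\le \frac{c_0}{\|\theta^*\|_2}$ for every $t$, which is what keeps the iterates in $\mathbb{B}_2^d(\theta^*;\frac{c_0}{\|\theta^*\|_2})$. Because $f_t$ is a convex combination of $f_0$ and the fixed point $E/(1-\kappa)$, it suffices to have $\frac{E}{1-\kappa}\le \frac{c_0}{\|\theta^*\|_2}$. This reduces to
\[
\frac{C'}{c'}\Big(\sqrt{\frac{\|\theta^*\|_2^5d}{n}}+\frac{\|\theta^*\|_2^3d}{n}\Big)\le\frac{c_0}{\|\theta^*\|_2},
\]
that is, $\sqrt{\|\theta^*\|_2^7d/n}\lesssim c_0$ and $\|\theta^*\|_2^4d/n\lesssim c_0$. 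Both hold under (\ref{localsam}), $n\gtrsim\|\theta^*\|_2^7d$, once the hidden constant is large relative to $C'/(c'c_0)$ and we use $\|\theta^*\|_2\ge1$. This is exactly where the extra power of $\|\theta^*\|_2$ enters, compared with (\ref{samcom2}).

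The induction then closes as follows. The base case $\|\theta_0-\theta^*\|_2\le f_0$ is the warm-start assumption. For the inductive step, suppose $\|\theta_t-\theta^*\|_2\le f_t$. Then $\theta_t\in\mathbb{B}_2^d(\theta^*;\frac{c_0}{\|\theta^*\|_2})$ because $f_t\le c_0/\|\theta^*\|_2$, so (\ref{localaic}) applies with $u=\theta_t$. Since $\theta_{t+1}=\theta_t-\eta h_{\theta^*}(\theta_t)$, this gives $\|\theta_{t+1}-\theta^*\|_2\le \kappa f_t+E=f_{t+1}$, completing the induction and yielding the bound with $\tilde C$ a suitable multiple of $C'/c'$.
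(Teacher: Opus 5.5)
Your proposal is correct and is essentially the paper's own proof: the same dominating sequence $f_{t+1}=(1-c'/\|\theta^*\|_2^2)f_t+C'(\sqrt{\|\theta^*\|_2d/n}+\|\theta^*\|_2d/n)$ with $f_0=c_0/\|\theta^*\|_2$, and the same induction. It also uses (\ref{localsam}) in the same two ways: to absorb the $\|\theta^*\|_2^3d/n$ term into $\sqrt{\|\theta^*\|_2^5d/n}$, and to keep the fixed point below $c_0/\|\theta^*\|_2$, so the iterates never leave the ball where (\ref{localaic}) holds. Your explicit reduction of the fixed-point condition to $\sqrt{\|\theta^*\|_2^7d/n}\lesssim c_0$ and $\|\theta^*\|_2^4d/n\lesssim c_0$ just spells out a step the paper states without detail.
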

\begin{proof}
     We introduce a sequence $\{f_t\}_{t\ge 0}$ by the recurrence and the initial value
     \begin{align}
         f_{t+1} = \bigg(1-\frac{c'}{\|\theta^*\|_2^2}\bigg)f_t+ C'\bigg(\sqrt{\frac{\|\theta^*\|_2d}{n}}+\frac{\|\theta^*\|_2d}{n}\bigg),\quad t\ge 0,\quad\textrm{and}\quad f_0=\frac{c_0}{\|\theta^*\|_2}.\label{defft}
     \end{align}
     One can find the closed-form expression
     \begin{align}\label{closedft1}
    f_t = \left( 1 - \frac{c'}{\|\theta^*\|_2^2} \right)^t \frac{c_0}{\|\theta^*\|_2} + \left[ 1 - \left( 1 - \frac{c'}{\|\theta^*\|_2^2} \right)^t \right] \frac{C'}{c'} \left( \sqrt{\frac{\|\theta^*\|_2^5 d}{n}} + \frac{\|\theta^*\|_2^3 d}{n} \right),\quad t\ge 0.
\end{align}
Under (\ref{localsam}), we have $\frac{\|\theta^*\|_2^3d}{n}\le\sqrt{\frac{\|\theta^*\|_2^5d}{n}}$, hence it is sufficient to prove $\|\theta_t-\theta^*\|_2\le f_t$ for any $t\ge 0.$ Also under (\ref{localsam}) we have 
\[\frac{C'}{c'} \left( \sqrt{\frac{\|\theta^*\|_2^5 d}{n}} + \frac{\|\theta^*\|_2^3 d}{n} \right) \le \frac{c_0}{\|\theta^*\|_2},\]
and as such (\ref{closedft1}) gives $f_t\le \frac{c_0}{\|\theta^*\|_2}$ for any $t\ge 0$.

    All that remains is to prove $\|\theta_t-\theta^*\|_2\le f_t$, $t\ge 0$, and we shall achieve this by induction. Due to $\theta_0\in \mathbb{B}_2^d(\theta^*;\frac{c_0}{\|\theta^*\|_2})$, this holds trivially for $t=0$. Suppose that $\|\theta_t-\theta^*\|_2\le f_t$, we then have $\|\theta_t-\theta^*\|_2\le \frac{c_0}{\|\theta^*\|_2}$, thus $\theta_t\in \mathbb{B}_2^d(\theta^*;\frac{c_0}{\|\theta^*\|_2})$. Therefore we can set $u=\theta_t$ in (\ref{localaic}), yielding 
    \[ \|\theta_t-\theta^*-\eta\cdot h_{\theta^*}(\theta_t)\|_2 \le \bigg(1-\frac{c'}{\|\theta^*\|_2^2}\bigg)\|\theta_t-\theta^*\|_2 +C'\bigg(\sqrt{\frac{\|\theta^*\|_2d}{n}}+\frac{\|\theta^*\|_2d}{n}\bigg).\]
    By $\theta_{t+1}=\theta_t-\eta\cdot h_{\theta^*}(\theta_t)$ and $\|\theta_t-\theta^*\|_2\le f _t$, the above display gives
    \begin{align*}
        \|\theta_{t+1}-\theta^*\|_2\le  \bigg(1-\frac{c'}{\|\theta^*\|_2^2}\bigg)f_t +C'\bigg(\sqrt{\frac{\|\theta^*\|_2d}{n}}+\frac{\|\theta^*\|_2d}{n}\bigg)\stackrel{(\ref{defft})}{=} f_{t+1},
    \end{align*}
    thus completing the induction. The claim is proved. 
\end{proof}

 \section{Proof of Theorem \ref{thm:initial} (Performance of Algorithm \ref{alg:spectral})}\label{app:proofsharper}
 In this proof, for simplicity in notation, we denote $\hat{r}_{T_0}$ by $\hat{r}$ and in turn $\hat{\theta}=\hat{\lambda}\cdot\hat{r}$. We start with a decomposition 
\begin{align} \nn
    \|\hat{\theta}-\theta^*\|_2 &\le \bigg\|\hat{\lambda}\hat{r} -\|\theta^*\|_2\hat{r}\bigg\|_2 + \bigg\|\|\theta^*\|_2\hat{r}-\theta^*\bigg\|_2 \\&= \big|\hat{\lambda}-\|\theta^*\|_2\big| + \|\theta^*\|_2\bigg\|\hat{r}-\frac{\theta^*}{\|\theta^*\|_2}\bigg\|_2,\label{inierror2terms}
\end{align}
where the first term captures the norm estimation error while the second term accounts for the direction estimation error.

Due to Lemma \ref{lem:directionbound}, we immediately obtain the bound on $\|\theta^*\|_2\big\|\hat{r}-\frac{\theta^*}{\|\theta^*\|_2}\big\|_2$. In fact, by assuming the bound in Lemma \ref{lem:directionbound} holds (this is within the   probability promised  in Theorem \ref{thm:initial}), then 
we have
\begin{align}
    \label{directiontermbound}
    \|\theta^*\|_2\bigg\|\hat{r}-\frac{\theta^*}{\|\theta^*\|_2}\bigg\|_2 \lesssim  \polylog(n) \bigg(\sqrt{\frac{\|\theta^*\|_2d}{n }}+\frac{\|\theta^*\|_2d}{n}\bigg).
\end{align}
In the remainder of this appendix, we focus on bounding the norm estimation error, $|\hat{\lambda}-\|\theta^*\|_2|$.

\subsection{Bounding $|\hat{\lambda}-\|\theta^*\|_2|$ (norm estimation error)}
In order to bound $|\hat{\lambda}-\|\theta^*\|_2|=\Big|q^{-1}(\hat{v}^\top \hat{r})-\|\theta^*\|_2\Big|,$
we shall start with bounding
\[\Big|\hat{v}^\top \hat{r}-q(\|\theta^*\|_2)\Big|.\]
\begin{lem}
    \label{lem:firstbound} If $n\gtrsim d$ and $\|\theta^*\|_2\ge 1$, then it holds with probability at least $1-n^{-2}$  that 
    \begin{align}\label{nonqbound}
        \Big|\hat{v}^\top \hat{r}-q(\|\theta^*\|_2)\Big| \le \polylog(n) \bigg(\frac{1}{\sqrt{n}}+\frac{d}{n\|\theta^*\|_2}+\frac{d^2}{n^2}\bigg) .  
    \end{align}
\end{lem}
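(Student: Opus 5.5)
We plan to exploit the sample splitting: $\hat r=\hat r_{T_0}$ depends only on the first $\nu n$ samples, while $\hat v$ depends only on the remaining $(1-\nu)n$ samples, so we may condition on $\hat r$ and treat it as a fixed unit vector. Writing $e:=\theta^*/\|\theta^*\|_2$ and $v=\mathbb{E}\hat v=q(\|\theta^*\|_2)e$ (cf. (\ref{Ehatvre})), we decompose
\begin{align*}
\hat v^\top\hat r-q(\|\theta^*\|_2)=(\hat v-v)^\top\hat r+q(\|\theta^*\|_2)\big(e^\top\hat r-1\big),
\end{align*}
and bound the two terms separately.

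\textbf{Bias term.} Since $\hat r,e\in\mathbb{S}^{d-1}$, we have $1-e^\top\hat r=\frac12\|\hat r-e\|_2^2$. By Lemma \ref{lem:directionbound} (probability at least $1-e^{-\nu n}$, valid under $n\gtrsim d\polylog n$), this gives $1-e^{\top}\hat r\le\polylog(n)\big(\frac{d}{n\|\theta^*\|_2}+\frac{d^2}{n^2}\big)$. Together with $q(\tau)\le\frac{1}{\sqrt{2\pi}}$ (the range of $q$), this yields exactly the last two terms in (\ref{nonqbound}). The fact that the error enters quadratically is the key point, and it is what makes the inner-product estimator beat $\|\hat v\|_2$.

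\textbf{Fluctuation term.} Conditionally on $\hat r$, the quantity $(\hat v-v)^\top\hat r$ is an average of $(1-\nu)n$ i.i.d.\ centered variables $y_ix_i^\top\hat r-\mathbb{E}[y_ix_i^\top\hat r]$. Each satisfies $|y_ix_i^\top\hat r|\le|x_i^\top\hat r|$ with $x_i^\top\hat r\sim N(0,1)$, since $x_i$ is independent of $\hat r$. These variables are therefore sub-Gaussian with $\psi_2$-norm $O(1)$. Hoeffding's inequality (or Lemma \ref{bern}), with deviation level $t=\sqrt{\log n/n}$ up to constants, gives $|(\hat v-v)^\top\hat r|\lesssim\sqrt{\log n/n}$ with conditional probability at least $1-n^{-3}$. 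Integrating over $\hat r$ then yields the same bound unconditionally. A union bound with the $e^{-\nu n}\le n^{-3}$ failure event of Lemma \ref{lem:directionbound} (for $n$ large) gives total probability at least $1-n^{-2}$.

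The only delicate point is the independence and conditioning argument, which is where sample splitting is essential; there is no need for a sharper variance bound here, since the statement only asks for $1/\sqrt n$. A second minor check is that Lemma \ref{lem:directionbound} is applied with $\nu n\asymp n$ samples, which affects only constants and $\polylog$ factors.
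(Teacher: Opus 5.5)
Your proof is correct. It reaches the bound through a different (and somewhat cleaner) decomposition than the paper, using the same ingredients. The paper splits the \emph{direction}, writing $\hat r=e+(\hat r-e)$ with $e=\theta^*/\|\theta^*\|_2$. It bounds $I_1=|\langle\hat v,e\rangle-q(\|\theta^*\|_2)|$ by sub-Gaussian concentration, and then splits $\hat r-e$ into two pieces. The component along $e$ gives $I_{21}\le|\langle\hat v,e\rangle|\cdot\frac12\|\hat r-e\|_2^2$, which needs the extra fact $|\langle\hat v,e\rangle|\lesssim 1$. The component orthogonal to $e$ gives a zero-mean term $I_{22}$ of order $\sqrt{\polylog(n)/n}\,\|\hat r-e\|_2$. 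You instead split the \emph{statistic}, $\hat v=v+(\hat v-v)$. Your bias term $q(\|\theta^*\|_2)(e^\top\hat r-1)$ is then deterministic given $\hat r$, with coefficient exactly $q\le 1/\sqrt{2\pi}$. The entire fluctuation becomes one average of centered sub-Gaussians in the fixed direction $\hat r$. This uses two terms instead of three. It also makes the conditioning on the first half of the sample explicit, whereas the paper uses it only implicitly when it treats the orthogonal error direction as fixed in $I_{22}$. Your choice of $n^{-3}$ per event also delivers the stated $1-n^{-2}$ cleanly; the paper's union of two $1-n^{-2}$ events plus the direction event does not literally give it. The paper's finer split does show that the interaction between the noise in $\hat v$ and the direction error is of lower order than $1/\sqrt n$. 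For the stated bound this is irrelevant, because the $1/\sqrt n$ term from $I_1$ is present regardless. As you flagged, both arguments invoke Lemma \ref{lem:directionbound} and so really require $n\gtrsim d\polylog n$ rather than the stated $n\gtrsim d$. This is harmless under the sample size assumed in Theorem \ref{thm:initial}.
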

\begin{proof}
We shall start with the decomposition 
\begin{align*}
    &\big|\langle\hat{v},\hat{r}\rangle - q(\|\theta^*\|_2)\big|\le \underbrace{\bigg|\bigg\langle\hat{v},\frac{\theta^*}{\|\theta^*\|_2}\bigg\rangle-q(\|\theta^*\|_2)\bigg|  }_{I_1}+\underbrace{\bigg|\bigg\langle\hat{v},\hat{r}-\frac{\theta^*}{\|\theta^*\|_2}\bigg\rangle\bigg|}_{:=I_2}.
\end{align*}
\paragraph{Bounding $I_1.$} Due to $\hat{v}=\frac{1}{(1-\nu)n}\sum_{i=\nu n+1}^ny_ix_i$ and (\ref{Ehatvre}) we can write  
 \begin{align} \nn
    I_1:&= \bigg|\frac{1}{(1-\nu)n}\sum_{i=\nu n+1}^n y_i\bigg\langle x_i,\frac{\theta^*}{\|\theta^*\|_2}\bigg\rangle-\mathbb{E}\Big(y_i\Big\langle x_i,\frac{\theta^*}{\|\theta^*\|_2}\Big\rangle\Big)\bigg|. 
\end{align}
By $|y_i|\le 1$, \[\Big\|y_i\Big\langle x_i, \frac{\theta^*}{\|\theta^*\|_2}\Big\rangle\Big\|_{\psi_2}\lesssim \Big\|\Big\langle x_i, \frac{\theta^*}{\|\theta^*\|_2}\Big\rangle\Big\|_{\psi_2}\lesssim 1,\]
and by centering \cite[Lemma 2.6.8]{vershynin2018high}, 
 \begin{align}
     \Big\|y_i\Big\langle x_i, \frac{\theta^*}{\|\theta^*\|_2}\Big\rangle - \mathbb{E}\Big(y_i\Big\langle x_i, \frac{\theta^*}{\|\theta^*\|_2}\Big\rangle\Big)\Big\|_{\psi_2} \lesssim 1,\label{simi1}
 \end{align}
and moreover by property of sum of independent, zero-mean sub-Gaussian variables \cite[Proposition 2.6.1]{vershynin2018high},
 \begin{align}
     \Big\|\frac{1}{(1-\nu)n}\sum_{i=\nu n+1}^ny_i\Big\langle x_i, \frac{\theta^*}{\|\theta^*\|_2}\Big\rangle - \mathbb{E}\Big(y_i\Big\langle x_i, \frac{\theta^*}{\|\theta^*\|_2}\Big\rangle\Big)\Big\|_{\psi_2} \lesssim \frac{1}{\sqrt{n}}.\label{simi2}
 \end{align}
 In turn, a standard sub-Gaussian tail bounds yields that 
 \begin{align}\label{I1boundnow}
     \mathbb{P}\bigg(I_1\lesssim \sqrt{\frac{\log n}{n}}\bigg)\ge 1-\frac{1}{n^2}.
 \end{align} 

\paragraph{Bounding $I_2$.} We first substitute $\hat{v}$ to write 
\begin{align*}
    I_2 = \bigg|\frac{1}{(1-\nu)n}\sum_{i=\nu n+1}^n y_i\Big\langle x_i,\hat{r}-\frac{\theta^*}{\|\theta^*\|_2}\Big\rangle\bigg|
\end{align*}
We now let $e = \hat{r}-\frac{\theta^*}{\|\theta^*\|_2}$
and decompose it along $\frac{\theta^*}{\|\theta^*\|_2}$ and orthogonal space, 
\begin{align}\nn
    e = \Big\langle e,\frac{\theta^*}{\|\theta^*\|_2}\Big\rangle\frac{\theta^*}{\|\theta^*\|_2} + \underbrace{e- \Big\langle e,\frac{\theta^*}{\|\theta^*\|_2}\Big\rangle\frac{\theta^*}{\|\theta^*\|_2}}_{e_{\theta^*}^\perp}.
\end{align}
Therefore, 
\begin{align}\nn
    &I_2 = \bigg|\frac{1}{(1-\nu)n}\sum_{i=\nu n+1}^n y_i\langle x_i,e\rangle\bigg| \\&\le \underbrace{\bigg|\frac{1}{(1-\nu)n}\sum_{i=\nu n}^ny_i\Big\langle x_i,\frac{\theta^*}{\|\theta^*\|_2}\Big\rangle\bigg|\bigg|\Big\langle e,\frac{\theta^*}{\|\theta^*\|_2}\Big\rangle\bigg|}_{:=I_{21}} + \underbrace{\bigg|\frac{1}{(1-\nu)n}\sum_{i=\nu n+1}^n y_i \langle x_i, e_{\theta^*}^\perp\rangle\bigg|}_{:=I_{22}}.\label{I1decompose}
\end{align}
To bound $I_{11}$, observe that 
\begin{align*}
    \bigg|\Big\langle e,\frac{\theta^*}{\|\theta^*\|_2}\Big\rangle\bigg| =\bigg|\Big\langle \hat{r},\frac{\theta^*}{\|\theta^*\|_2}\Big\rangle -1\bigg| =\frac{\|\hat{r}-\frac{\theta^*}{\|\theta^*\|_2}\|_2^2}{2}
\end{align*}
Also, by (\ref{I1boundnow}) and Lemma \ref{lem:mono}, 
\begin{align*}
    \bigg|\frac{1}{(1-\nu)n}\sum_{i=\nu n+1}^n y_i\bigg\langle x_i,\frac{\theta^*}{\|\theta^*\|_2}\bigg\rangle\bigg|\le q(\|\theta^*\|_2) +O\bigg(\sqrt{\frac{\log n}{n}}\bigg) \lesssim 1.
\end{align*}
Therefore, by the direction error bound in Lemma \ref{lem:directionbound}, 
\begin{align}\label{I11boudnow}
    I_{21}\lesssim \bigg\|\hat{r}-\frac{\theta^*}{\|\theta^*\|_2}\bigg\|_2^2 \le \polylog(n) \max\bigg\{\frac{d}{n\|\theta^*\|_2},\frac{d^2}{n^2}\bigg\}. 
\end{align}
To bound $I_{12}$, notice that it is zero-mean and satisfies   
\begin{align*}
    \bigg\|\frac{1}{(1-\nu)n}\sum_{i=\nu n+1}^ny_i\langle x_i,e^\perp_{\theta^*}\rangle\bigg\|_{\psi_2} = \bigg\|\bigg\langle \frac{1}{(1-\nu)n} \sum_{i=\nu n+1}^n \big\{y_ix_i-\mathbb{E}(y_ix_i)\big\}, e^\perp_{\theta^*}\bigg\rangle\bigg\|_{\psi_2}\lesssim \frac{\|e\|_2}{\sqrt{n}}
\end{align*}
by a reasoning similar to Equations (\ref{simi1})--(\ref{simi2}). Combining with the   bound on $\|e\|_2$ in Lemma \ref{lem:directionbound}, we reach
\[ \bigg\|\frac{1}{(1-\nu)n}\sum_{i=\nu n+1}^ny_i\langle x_i,e^\perp_{\theta^*}\rangle\bigg\|_{\psi_2} \lesssim \sqrt{\frac{\polylog n}{n}}\bigg(\sqrt{\frac{d}{n\|\theta^*\|_2}}+\frac{d}{n}\bigg).\]
Hence, by a sub-Gaussian tail bound similarly to (\ref{I1boundnow}),
\begin{align}\label{I12boundnow}
    I_{22}\lesssim\sqrt{\frac{\polylog n}{n}}\max\bigg\{\sqrt{\frac{d}{n\|\theta^*\|_2}},\frac{d}{n}\bigg\}
\end{align}
holds with probability at least $1-\frac{1}{n^2}$. Substituting (\ref{I11boudnow}) and (\ref{I12boundnow}) into (\ref{I1decompose}) yields
\begin{align}
    \label{I1finalbound11}
    I_2 \le \polylog(n)\bigg(\frac{\sqrt{d}}{n\sqrt{\|\theta^*\|_2}}+\frac{d}{n^{3/2}}+\frac{d^2}{n^2}+\frac{d}{n\|\theta^*\|_2}\bigg)
\end{align}
Combining (\ref{I1boundnow}) and (\ref{I1finalbound11}), under $n\gtrsim d$ and $\|\theta^*\|_2\ge 1$, we obtain 
\begin{align*}
    \big|\langle\hat{v},\hat{r}\rangle-q(\|\theta^*\|_2)\big|\le \polylog(n) \bigg(\frac{1}{\sqrt{n}}+\frac{d}{n\|\theta^*\|_2}+\frac{d^2}{n^2}\bigg) 
\end{align*}    
as claimed. 
\end{proof}
 
 We now examine the effect of applying $q^{-1}$ and seek to  bound $|\hat{\lambda}-\|\theta^*\|_2| = |q^{-1}(\hat{v}^\top \hat{r})-q^{-1}(q(\|\theta^*\|_2))|.$
\begin{lem}
    \label{lem:finalnormbound} Assume that the event (\ref{nonqbound}) in Lemma \ref{lem:firstbound} holds, i.e.,
    \begin{align}
          \Big|\hat{v}^\top \hat{r}-q(\|\theta^*\|_2)\Big| \le \polylog(n) \bigg(\frac{1}{\sqrt{n}}+\frac{d}{n\|\theta^*\|_2}+\frac{d^2}{n^2}\bigg) . \label{nonqbouncopy}
    \end{align}Under
    \begin{align}\label{samnownow}
        n\ge \polylog(n)\bigg(\|\theta^*\|_2^4+\|\theta^*\|_2d\bigg)
    \end{align} we have 
    \begin{align}
        |\hat{\lambda}-\|\theta^*\|_2|\le \polylog (n)\bigg(\|\theta^*\|_2^3 \sqrt{\frac{1}{n}}+ \frac{\|\theta^*\|_2^2d}{n}\bigg)\label{normerrorbbb}
    \end{align}
\end{lem}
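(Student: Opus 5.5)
We will prove (\ref{normerrorbbb}) by inverting $q$ through the mean value theorem, controlling $(q^{-1})'$ via the lower bound $q'(\tau)\gtrsim(1+\tau)^{-3}$ from Lemma \ref{lem:separa}. Write $R:=\|\theta^*\|_2$, $a:=\hat v^\top\hat r$, and let $\epsilon:=\polylog(n)\big(\frac{1}{\sqrt n}+\frac{d}{nR}+\frac{d^2}{n^2}\big)$ denote the right-hand side of (\ref{nonqbouncopy}), so $|a-q(R)|\le\epsilon$.

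\textbf{Step 1: $a$ lies in the range of $q$ and near $q(R)$.} Since $q$ is increasing from $(0,\infty)$ onto $(0,\frac{1}{\sqrt{2\pi}})$ (Lemma \ref{lem:mono}), we need $a$ to stay inside this interval and, more quantitatively, to satisfy $a\in[q(R/2),q(2R)]$. The gaps satisfy $q(R)-q(R/2)\ge \frac R2\inf_{[R/2,R]}q'\gtrsim R^{-2}$, and likewise $q(2R)-q(R)\gtrsim R\cdot R^{-3}=R^{-2}$, again by $q'(\tau)\asymp\tau^{-3}$. It therefore suffices that $\epsilon\le cR^{-2}$ for a small constant $c$. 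Under (\ref{samnownow}) each term is handled separately: $\frac{\polylog n}{\sqrt n}\le cR^{-2}$ follows from $n\ge\polylog(n)R^4$; $\frac{d\,\polylog n}{nR}\le cR^{-2}$ follows from $n\ge\polylog(n)Rd$; and $\frac{d^2}{n^2}\le(\frac{d}{n})^2\le (cR^{-1})^2$, again from $n\gtrsim Rd$. Choosing the polylog factor in (\ref{samnownow}) large enough absorbs the constants. Consequently $\hat\lambda=q^{-1}(a)$ is well defined and lies in $[R/2,2R]$.

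\textbf{Step 2: mean value theorem.} By the mean value theorem, $|\hat\lambda-R|=|q^{-1}(a)-q^{-1}(q(R))|\le \sup_{\tau\in[R/2,2R]}\frac{1}{q'(\tau)}\,|a-q(R)|\lesssim R^3\epsilon$, using $q'(\tau)\gtrsim (1+\tau)^{-3}\gtrsim R^{-3}$ on $[R/2,2R]$ together with $R\ge1$. Expanding gives
\[
R^3\epsilon=\polylog(n)\Big(\frac{R^3}{\sqrt n}+\frac{R^2d}{n}+\frac{R^3d^2}{n^2}\Big).
\]

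\textbf{Step 3: absorb the last term.} Since $n\gtrsim Rd$, we have $\frac{R^3d^2}{n^2}=\frac{R^2d}{n}\cdot\frac{Rd}{n}\le\frac{R^2d}{n}$, which yields (\ref{normerrorbbb}).

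The main obstacle is Step 1: verifying that the perturbation stays within the $O(R^{-2})$ window where $q'$ is comparable to $R^{-3}$. This is exactly where the sample complexity $n\gtrsim R^4+Rd$ is needed; without it, $q^{-1}$ could blow up near the edge of the range $\frac{1}{\sqrt{2\pi}}$. The remaining steps are routine.
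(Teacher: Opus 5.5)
Your proof is correct, and its skeleton matches the paper's. Both arguments invert $q$ by the mean value theorem and use $q'(\tau)\gtrsim(1+\tau)^{-3}\gtrsim R^{-3}$ near $R$ to get $|\hat\lambda-R|\lesssim R^3\epsilon$. Both use (\ref{samnownow}) to force $\epsilon\le cR^{-2}$, and both absorb $R^3d^2/n^2$ via $n\gtrsim Rd$.

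The difference is in how $\hat\lambda$ is localized. The paper writes $E$ for your $\epsilon$ and sets $\hat\lambda_{\max}:=q^{-1}(q(R)+E)$. It then invokes the gap estimate of Lemma \ref{lem:orderofgap}, $\frac{1}{\sqrt{2\pi}}-q(\tau)\asymp\tau^{-2}$, in two ways:
\begin{itemize}
\item its lower bound at $\tau=R$ shows that $q(R)+E$ stays below the supremum $\frac{1}{\sqrt{2\pi}}$, so $q^{-1}$ is defined;
\item its upper bound at $\tau=\hat\lambda_{\max}$ gives $\hat\lambda_{\max}\lesssim R$.
\end{itemize}
It then runs two one-sided mean value arguments. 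The lower one needs no localization, because the bound $q'(\tau)\gtrsim(1+\tau)^{-3}$ only improves for $\tau<R$.

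You instead integrate the derivative lower bound of Lemma \ref{lem:separa} over $[R/2,R]$ and $[R,2R]$. This gives $q(R)-q(R/2)\gtrsim R^{-2}$ and $q(2R)-q(R)\gtrsim R^{-2}$. These deliver well-definedness and the explicit two-sided localization $\hat\lambda\in[R/2,2R]$ at once, so a single mean value step suffices.

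Your route is more self-contained: it needs only Lemmas \ref{lem:separa} and \ref{lem:mono}, not the gap lemma. It is also slightly more uniform. The paper's route makes clearer why $R^{-2}$ is the critical scale: it is exactly the distance from $q(R)$ to the edge $\frac{1}{\sqrt{2\pi}}$ of the range of $q$. A perturbation much larger than $R^{-2}$ could therefore push $\hat v^\top\hat r$ out of the domain of $q^{-1}$ altogether, which is what drives the $n\gtrsim R^4$ requirement.
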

\begin{proof}
   Letting $q(\infty)=\frac{1}{\sqrt{2\pi}}$, a key fact in this proof is the order of the asymptotic gap (cf. Lemma \ref{lem:orderofgap}) 
\begin{align}\label{gapeq}
    G(\tau):=q(\infty)-q(\tau)\asymp \frac{1}{\tau^2},\qquad\forall \tau\ge \frac{1}{2},
\end{align}
which implies
\begin{align}
    q(\infty) - q(\|\theta^*\|_2)\asymp\frac{1}{\|\theta^*\|_2^2}.\label{gaptheta}
\end{align}On the other hand, (\ref{samnownow}) and (\ref{nonqbouncopy}) together ensure that 
\begin{align}\label{gaptheta1}
    |\hat{v}^\top \hat{r}-q(\|\theta^*\|_2)|\le \frac{c_*}{\|\theta^*\|_2^2}\quad\textrm{for some small enough $c_*.$}
\end{align}
Combining (\ref{gaptheta}) and (\ref{gaptheta1}) gives 
\begin{align}\nn 
    \hat{v}^\top \hat{r}< q(\infty)- \frac{c_1}{\|\theta^*\|_2^2}\quad\textrm{for some universal constant $c_1,$}
\end{align}
hence it is meaningful to apply $q^{-1}$ to $\hat{v}^\top \hat{r}$.

Now we write the right-hand side of (\ref{nonqbouncopy}) as $E$, which satisfies $E\le\frac{c_*}{\|\theta^*\|_2^2}$, then we have \[q(\|\theta^*\|_2)-E\le \hat{v}^\top \hat{r}\le q(\|\theta^*\|_2)+E \]and moreover
\begin{align}\nn
    &|\hat{\lambda}-\|\theta^*\|_2|=\big|q^{-1}(\hat{v}^\top \hat{r})-q^{-1}(q(\|\theta^*\|_2))\big| \\\label{twotobound}
    &\le \max\left\{q^{-1}\Big(q(\|\theta^*\|_2)+E\Big)-\|\theta^*\|_2,\|\theta^*\|_2-q^{-1}\Big(q(\|\theta^*\|_2)-E\Big)\right\}\\\explain \textrm{by the monotonicity of $q$ and $q^{-1}$ (cf. Lemma \ref{lem:mono})}
\end{align}
We shall first bound
\begin{align*}
    \hat{\lambda}_{\max}:= q^{-1}\Big(q(\|\theta^*\|_2)+E\Big)
\end{align*}
which satisfies \begin{align}
    q(\hat{
\lambda
}_{\max})= q(\|\theta^*\|_2)+E\label{whatisqmax}
\end{align} The asymptotic gap is given by 
\begin{align}\label{121}
   G(\hat{\lambda}_{\max})= q(\infty) - q(\hat{\lambda}_{\max})= q(\infty) - q(\|\theta^*\|_2)-E = G(\|\theta^*\|_2)-E.
\end{align}
 Now using (\ref{gapeq}) and $E\le\frac{c_*}{\|\theta^*\|_2^2}$ for some small enough $c_*$, 
\begin{align}\label{uplambdamax}
    &\frac{1}{\hat{\lambda}_{\max}^2}\asymp G(\hat{\lambda}_{\max}) = G(\|\theta^*\|_2)-E\gtrsim \frac{1}{\|\theta^*\|_2^2}
   ~\Longrightarrow~ \hat{\lambda}_{\max}\lesssim \|\theta^*\|_2
\end{align}
By the mean value theorem and the derivative for an inverse function,  
\begin{align*}
    &q^{-1}\Big(q(\|\theta^*\|_2)+E\Big)-\|\theta^*\|_2\\
    &=q^{-1}\Big(q(\|\theta^*\|_2)+E\Big)-q^{-1}\big(q(\|\theta^*\|_2)\big)\\
    &= q^{-1}(q(\hat{\lambda}_{\max}))-q^{-1}\big(q(\|\theta^*\|_2)\big)\\\explain\textrm{by Equation (\ref{whatisqmax})}\\
    &\le (q^{-1})'(\xi_0)\big(q(\hat{\lambda}_{\max})-q(\|\theta^*\|_2)\big)\\\explain\textrm{by mean value theorem, $\xi_0\in[q(\|\theta^*\|_2),q(\hat{\lambda}_{\max})]$}\\
    &= \frac{E}{q'(\hat{\xi}_0)}\\\explain\textrm{by derivative of inverse function, $\hat{\xi}_0\in [\|\theta^*\|_2,\hat{\lambda}_{\max}]$, and (\ref{whatisqmax})}\\
    &\lesssim \|\theta^*\|_2^3E.\\\explain\textrm{By (\ref{uplambdamax}) and Lemma \ref{lem:separa}}
\end{align*}
We now similarly bound the second term in (\ref{twotobound}),  
\begin{align*}
    &\|\theta^*\|_2-q^{-1}\Big(q(\|\theta^*\|_2)-E\Big)\\
    &= q^{-1}(q(\|\theta^*\|_2))-q^{-1}\Big(q(\|\theta^*\|_2)-E\Big)\\
    &=  (q^{-1})'(\xi_1)\cdot E\\\explain\textrm{by mean value theorem, $\xi_1\in[q(\|\theta^*\|_2)-E,q(\|\theta^*\|_2)]$}\\
    & = \frac{1}{q'(\hat{\xi}_1)}E\\\explain\textrm{by derivative of inverse function, $\hat{\xi}_1<\|\theta^*\|_2$}\\
    & \lesssim  \|\theta^*\|_2^3E.\\\explain \textrm{By Lemma \ref{lem:separa}}
\end{align*}
Substituting the past two displays and $E=\polylog(n) \big (\frac{1}{\sqrt{n}}+\frac{d}{n\|\theta^*\|_2}+\frac{d^2}{n^2}\big ) $ into (\ref{twotobound}) gives
\begin{align}\nn 
    |\hat{\lambda}-\|\theta^*\|_2|\le \polylog(n)\bigg(\frac{\|\theta^*\|_2^3}{\sqrt{n}}+\frac{\|\theta^*\|_2^2d}{n}+\frac{\|\theta^*\|_2^3d^2}{n^2}\bigg)
\end{align}which together with $\frac{\|\theta^*\|_2^3d^2}{n^2}\le \frac{\|\theta^*\|_2^2d}{n}$ under $n\gtrsim\|\theta^*\|_2d$ concludes the proof. 
\end{proof}

 We now substitute (\ref{directiontermbound}) and (\ref{normerrorbbb}) into  (\ref{inierror2terms}) to complete the proof of Theorem \ref{thm:initial}.
 
 \section{Technical Lemmas}\label{app:lemma}
 \begin{lem}[Stein identities \cite{stein1972bound,stein2004use}]\label{lem:stein}
Let $g\sim N(0,1)$ and let $\phi:\mathbb{R}\to\mathbb{R}$ be absolutely continuous with
$
\mathbb E|\phi'(g)|<\infty,~
\mathbb E|g\phi(g)|<\infty.
$ Then
\begin{align}
    \mathbb{E}[g\phi(g)]=\mathbb{E}[\phi'(g)].\label{stein1}
\end{align}

Moreover, let $x\sim N(0,I_d)$ and let $\Phi:\mathbb{R}^d\to\mathbb{R}$ be continuously differentiable with
$
\mathbb E\|\nabla\Phi(x)\|_2<\infty,
~
\mathbb E\|x\Phi(x)\|_2<\infty.
$
Then
\begin{align}
    \mathbb{E}[x\Phi(x)]=\mathbb{E}[\nabla\Phi(x)].\label{stein2}
\end{align}
In particular, for $\Phi(x)=f(x^\top\theta)$,
\begin{align}
    \mathbb{E}[x f(x^\top\theta)]
=
\mathbb{E}[f'(x^\top\theta)]\theta,\label{stein3}
\end{align}
provided both sides are well-defined.
\end{lem}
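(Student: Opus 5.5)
The plan is to prove the one-dimensional identity (\ref{stein1}) by integration by parts against the Gaussian density. We then lift it to $\mathbb{R}^d$ coordinatewise via Fubini to obtain (\ref{stein2}), and finally specialize to ridge functions to obtain (\ref{stein3}).

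For (\ref{stein1}), let $\varphi(t)=(2\pi)^{-1/2}e^{-t^2/2}$ and use $\varphi'(t)=-t\varphi(t)$. Rather than integrating by parts directly, which would require controlling the boundary terms $\phi(t)\varphi(t)$ at $\pm\infty$, I will use the standard Fubini trick. For $t>0$, write $t\varphi(t)=\int_t^\infty s\varphi(s)\,ds$, and analogously for $t<0$. Then
\[
\mathbb{E}[\phi'(g)]=\int_0^\infty\phi'(t)\int_t^\infty s\varphi(s)\,ds\,dt-\int_{-\infty}^0\phi'(t)\int_{-\infty}^t s\varphi(s)\,ds\,dt .
\]
Swapping the order of integration, justified by $\mathbb{E}|\phi'(g)|<\infty$ and Tonelli applied to $|\phi'(t)|\,|s|\varphi(s)$, turns the inner integrals into $\phi(s)-\phi(0)$ by absolute continuity. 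This yields $\mathbb{E}[g(\phi(g)-\phi(0))]=\mathbb{E}[g\phi(g)]$, since $\mathbb{E}g=0$. This integrability bookkeeping is the only delicate step; the rest is algebra.

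For (\ref{stein2}), fix a coordinate $j$ and condition on $x_{-j}$, which is independent of $x_j\sim N(0,1)$. Apply (\ref{stein1}) to $\phi(t)=\Phi(x_1,\dots,t,\dots,x_d)$, which is $C^1$ and hence absolutely continuous. By Fubini and the assumed integrability of $\|\nabla\Phi(x)\|_2$ and $\|x\Phi(x)\|_2$, for almost every $x_{-j}$ the one-dimensional hypotheses hold. Integrating the resulting identity over $x_{-j}$ gives $\mathbb{E}[x_j\Phi(x)]=\mathbb{E}[\partial_j\Phi(x)]$ for each $j$.

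For (\ref{stein3}), take $\Phi(x)=f(x^\top\theta)$, so that $\nabla\Phi(x)=f'(x^\top\theta)\theta$ and the claim follows from (\ref{stein2}). Alternatively, to avoid needing $f\in C^1$, apply the same rotation argument used elsewhere in the paper. Write $x=g\,\theta/\|\theta\|_2+x_\perp$ with $x_\perp$ independent of $g$. Then $\mathbb{E}[x_\perp f(x^\top\theta)]=0$, and the remaining component is $\frac{\theta}{\|\theta\|_2}\mathbb{E}[g f(\|\theta\|_2 g)]=\frac{\theta}{\|\theta\|_2}\|\theta\|_2\mathbb{E}[f'(\|\theta\|_2g)]$ by (\ref{stein1}). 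The case $\theta=0$ is trivial.

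In the paper's uses, $f=s$ is bounded and smooth with bounded derivative, so all integrability conditions hold automatically.
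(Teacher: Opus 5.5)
The paper gives no proof of this lemma. It quotes the identities from Stein's work and uses them as a black box: (\ref{stein1}) in Lemma~\ref{lem:separa} to write $q(\tau)=\mathbb{E}[s(\tau g)g]$, and (\ref{stein3}) in Appendix~\ref{app:biasthm1} to compute $\mathbb{E}[h_{\theta^*}(u)]$. Your proposal is therefore a correct, self-contained version of the standard argument filling in what the paper only cites.

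There is one slip to fix. The tail identity is $\varphi(t)=\int_t^\infty s\varphi(s)\,ds$ for $t>0$, and $\varphi(t)=-\int_{-\infty}^t s\varphi(s)\,ds$ for $t<0$. It is not $t\varphi(t)=\int_t^\infty s\varphi(s)\,ds$. Your displayed formula for $\mathbb{E}[\phi'(g)]$ already uses the correct version, so the Fubini swap and the conclusion $\mathbb{E}[g(\phi(g)-\phi(0))]=\mathbb{E}[g\phi(g)]$ go through unchanged.

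Your Tonelli bookkeeping also shows $\mathbb{E}|g(\phi(g)-\phi(0))|\le\mathbb{E}|\phi'(g)|$. Hence the hypothesis $\mathbb{E}|g\phi(g)|<\infty$ in (\ref{stein1}) is actually redundant.

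The coordinatewise conditioning argument for (\ref{stein2}) is fine as stated. Your second route to (\ref{stein3}), which splits $x$ into its component along $\theta/\|\theta\|_2$ and an independent orthogonal part, is the same decomposition the paper uses in the proof of Lemma~\ref{Ehatv}. It has the advantage of needing only absolute continuity of $f$ rather than $\Phi\in C^1$. For the paper's application $f=s$ is smooth with bounded derivative, so either route suffices.
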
 
\begin{lem}
    \cite[Theorem 2.10]{13concen} \label{lem:bernstein210} Let $X_1,...,X_n$ be independent random variables, and assume that for some $v,c>0$, 
    \begin{gather*}
        \sum_{i=1}^n \mathbb{E}X_i^2 \le v,\\
        \sum_{i=1}^n \mathbb{E}|X_i|^q\le \frac{q!}{2}vc^{q-2},\quad \textrm{for all integers }q\ge 3,
    \end{gather*}
    then we have
    \begin{align*}
        \mathbb{P}\bigg(\bigg|\sum_{i=1}^n(X_i-\mathbb{E}X_i)\bigg|\ge \sqrt{2vt}+ct\bigg) \le 2\exp(-t),\quad \forall t>0. 
    \end{align*}
\end{lem}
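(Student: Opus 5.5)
This is the moment form of Bernstein's inequality (Boucheron--Lugosi--Massart, Theorem 2.10), and I would prove it by the Chernoff method. The plan is to bound the log-moment generating function of each centered summand using the moment hypotheses, then optimize over $\lambda$ and invert the resulting sub-gamma tail.

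\textbf{Step 1: one-sided MGF bound.} Write $S=\sum_{i=1}^n (X_i-\mathbb{E}X_i)$ and fix $\lambda\in(0,1/c)$. Use $\log u\le u-1$ and the expansion $e^{y}-1-y=\sum_{q\ge 2}y^q/q!$ to get
\[
\log \mathbb{E} e^{\lambda(X_i-\mathbb{E}X_i)} = \log\mathbb{E}e^{\lambda X_i}-\lambda\mathbb{E}X_i \le \mathbb{E}\big[e^{\lambda X_i}-1-\lambda X_i\big]\le \sum_{q\ge 2}\frac{\lambda^q\mathbb{E}|X_i|^q}{q!}.
\]
Summing over $i$ and inserting the hypotheses ($\sum_i\mathbb{E}X_i^2\le v$ and $\sum_i\mathbb{E}|X_i|^q\le \frac{q!}{2}vc^{q-2}$) gives
\[
\log\mathbb{E}e^{\lambda S}\le \frac{\lambda^2 v}{2}\sum_{k\ge 0}(\lambda c)^k=\frac{v\lambda^2}{2(1-c\lambda)}.
\]
Interchanging sum and expectation is justified by monotone convergence, since all terms are nonnegative.

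\textbf{Step 2: Chernoff bound and its inverse.} Markov's inequality gives
\[
\mathbb{P}(S\ge s)\le \exp\Big(-\sup_{0<\lambda<1/c}\Big[\lambda s-\frac{v\lambda^2}{2(1-c\lambda)}\Big]\Big).
\]
It is easiest to verify the inverted form directly: show that for $s=\sqrt{2vt}+ct$ some admissible $\lambda$ makes the exponent at least $t$. Take
\[
\lambda=\frac{\sqrt{2t/v}}{1+c\sqrt{2t/v}},
\]
so that $1-c\lambda=\frac{1}{1+c\sqrt{2t/v}}$. Writing $a=\sqrt{2t/v}$, the exponent becomes
\[
\lambda s-\frac{v\lambda^2}{2(1-c\lambda)}=\frac{a(\sqrt{2vt}+ct)}{1+ca}-\frac{va^2}{2(1+ca)}=\frac{2t+cat-t}{1+ca}=t.
\]
Here I used $a\sqrt{2vt}=2t$ and $va^2/2=t$. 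Hence $\mathbb{P}(S\ge \sqrt{2vt}+ct)\le e^{-t}$.

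\textbf{Step 3: lower tail and union.} The hypotheses involve only $|X_i|$, so they hold equally for $-X_i$. Applying Steps 1--2 to $-S$ gives the lower tail, and a union bound produces the factor $2$.

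The main obstacle is essentially algebraic: choosing $\lambda$ so that the exponent equals exactly $t$. The explicit choice above removes the need for the general Legendre-transform computation $h(u)=1+u-\sqrt{1+2u}$. A minor point is that Step 1 must bound the centered variables' MGF using the uncentered moments; this is handled by subtracting $\lambda\mathbb{E}X_i$ exactly before applying $\log u\le u-1$.
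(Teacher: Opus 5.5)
Your proof is correct and complete. The paper does not prove this lemma but cites Boucheron--Lugosi--Massart (Theorem 2.10), whose proof is the same Chernoff argument built on the sub-gamma bound $\log\mathbb{E}e^{\lambda S}\le \frac{v\lambda^2}{2(1-c\lambda)}$. Your explicit $\lambda=a/(1+ca)$ is exactly the maximizer of $\lambda s-\frac{v\lambda^2}{2(1-c\lambda)}$ at $s=\sqrt{2vt}+ct$, because $1+2cs/v=(1+ca)^2$, so you recover their inverse Legendre transform without the general computation.
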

  \begin{lem}\label{bern} \cite[Theorem 2.8.1]{vershynin2018high} Let $X_1,...,X_N$ be independent, zero-mean, sub-exponential random variables. Then for every $t\geq 0$, for some absolute constant $c$ we have \begin{equation}
        \nonumber\mathbb{P}\left(\Big|\sum_{i=1}^NX_i\Big|\geq t\right)\leq 2\exp\left(-c\min\Big\{\frac{t^2}{\sum_{i=1}^N \|X_i\|_{\psi_1}^2},\frac{t}{\max_{1\leq i\leq N}\|X_i\|_{\psi_1}}\Big\}\right)
    \end{equation}
 \end{lem}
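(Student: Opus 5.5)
This lemma is the standard Bernstein inequality for sub-exponential variables, cited from \cite[Theorem 2.8.1]{vershynin2018high}. I would prove it by the usual Chernoff (moment generating function) argument.

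\textbf{Step 1: one-sided tail.} By symmetry (replace $X_i$ by $-X_i$) it suffices to bound $\mathbb{P}(S\ge t)$, where $S=\sum_{i=1}^N X_i$, and then take a union bound, which produces the factor $2$. For any $\lambda>0$, Markov's inequality together with independence gives
\[
\mathbb{P}(S\ge t)\le e^{-\lambda t}\prod_{i=1}^N\mathbb{E}e^{\lambda X_i}.
\]

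\textbf{Step 2: MGF bound.} This is the only substantive step. For a zero-mean $X$ with $K=\|X\|_{\psi_1}$, I claim that
\[
\mathbb{E}e^{\lambda X}\le \exp(C\lambda^2K^2)\quad\text{for all } |\lambda|\le \frac{c_0}{K}.
\]
To prove it, first derive moment bounds from the definition of $\|\cdot\|_{\psi_1}$: since $\mathbb{E}e^{|X|/K}\le 2$, expanding the exponential gives $\mathbb{E}|X|^p\le 2\,p!\,K^p$. Then expand $\mathbb{E}e^{\lambda X}=1+\sum_{p\ge2}\lambda^p\mathbb{E}X^p/p!$; the linear term vanishes because $\mathbb{E}X=0$. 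This yields
\[
\mathbb{E}e^{\lambda X}\le 1+2\sum_{p\ge2}(\lambda K)^p\le 1+4\lambda^2K^2\quad\text{for } |\lambda|K\le \tfrac12.
\]
Finally use $1+x\le e^x$.

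\textbf{Step 3: optimize over $\lambda$.} Set $\sigma^2=\sum_i\|X_i\|_{\psi_1}^2$ and $K_{\max}=\max_i\|X_i\|_{\psi_1}$. Combining Steps 1 and 2,
\[
\mathbb{P}(S\ge t)\le \exp\bigl(-\lambda t+C\lambda^2\sigma^2\bigr)\quad\text{for } 0<\lambda\le \frac{c_0}{K_{\max}}.
\]
Choose $\lambda=\min\{t/(2C\sigma^2),\,c_0/K_{\max}\}$ and split into two cases.
\begin{itemize}
\item If the first term is the minimum, the exponent equals $-t^2/(4C\sigma^2)$.
\item If the second is the minimum, then $C\lambda\sigma^2\le t/2$, so the exponent is at most $-\lambda t/2=-c_0t/(2K_{\max})$.
\end{itemize}
In either case the exponent is at most $-c\min\{t^2/\sigma^2,\,t/K_{\max}\}$. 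Adding the symmetric bound for $-S$ gives the stated two-sided inequality.

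The main obstacle is Step 2: obtaining a sub-Gaussian-type MGF bound that holds only in a restricted range of $\lambda$, since that restriction is what produces the linear regime $t/K_{\max}$. Everything else is routine, and all constants can be taken universal.
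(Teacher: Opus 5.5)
Your proposal is correct and matches the paper's approach: the paper gives no proof beyond citing \cite[Theorem 2.8.1]{vershynin2018high}, and that theorem is proved there by the same argument, namely a Chernoff bound, the restricted-range estimate $\mathbb{E}e^{\lambda X}\le \exp(C\lambda^2\|X\|_{\psi_1}^2)$ for $|\lambda|\lesssim 1/\|X\|_{\psi_1}$, and the choice $\lambda=\min\{t/(2C\sigma^2),c_0/K_{\max}\}$. Your constants work ($C=4$, $c_0=1/2$), and the termwise expansion of $\mathbb{E}e^{\lambda X}$ is justified because $\mathbb{E}e^{|\lambda X|}<\infty$ for $|\lambda|<1/K$.
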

 \begin{lem}\cite[Theorem 7.3.1 \& Corollary 7.3.3]{vershynin2018high}
\label{lem:gaubound} Let $X\in R^{n\times d}$ be a matrix with independent $N(0,1)$ entries. Then $\mathbb{E}\|X\|_{op} \le \sqrt{n}+\sqrt{d}$ and 
\[\mathbb{P}\bigg(\|X\|_{op}\ge \sqrt{n}+\sqrt{d}+t\bigg)  \le 2\exp(-ct^2),\quad \forall t\ge 0.\]
\end{lem}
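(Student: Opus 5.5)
This is a standard Gaussian matrix fact, so I will only indicate how it follows from Gaussian concentration. The plan has two parts: bound $\mathbb{E}\|X\|_{op}$ by a comparison inequality, then get the tail from the Lipschitz property of the operator norm.

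\textbf{Expectation bound.} Write
\[
\|X\|_{op}=\sup_{u\in\mathbb{S}^{n-1},\,v\in\mathbb{S}^{d-1}} u^\top X v,
\]
which is the supremum of the centered Gaussian process $Y_{u,v}:=u^\top Xv$. Compare it with $Z_{u,v}:=\langle g,u\rangle+\langle h,v\rangle$, where $g\sim N(0,I_n)$ and $h\sim N(0,I_d)$ are independent. A direct computation gives
\[
\mathbb{E}(Y_{u,v}-Y_{u',v'})^2=\|uv^\top-u'v'^\top\|_F^2\le \|u-u'\|_2^2+\|v-v'\|_2^2=\mathbb{E}(Z_{u,v}-Z_{u',v'})^2,
\]
using $\|u\|_2=\|v\|_2=1$. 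The Sudakov--Fernique inequality then yields
\[
\mathbb{E}\sup Y\le \mathbb{E}\sup Z=\mathbb{E}\|g\|_2+\mathbb{E}\|h\|_2\le \sqrt{n}+\sqrt{d},
\]
where the last step is Jensen's inequality.

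\textbf{Tail bound.} The map $X\mapsto \|X\|_{op}$ is $1$-Lipschitz with respect to the Frobenius norm, because $\|X\|_{op}\le\|X\|_F$ and the triangle inequality holds. Gaussian concentration for Lipschitz functions of a standard Gaussian vector in $\mathbb{R}^{nd}$ gives
\[
\mathbb{P}\big(\|X\|_{op}-\mathbb{E}\|X\|_{op}\ge t\big)\le 2e^{-ct^2}.
\]
Combining this with the expectation bound proves the claim.

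\textbf{Main obstacle.} The only delicate step is the increment comparison. It expands to
\[
2-2\langle u,u'\rangle\langle v,v'\rangle\le 4-2\langle u,u'\rangle-2\langle v,v'\rangle,
\]
which is equivalent to $(1-\langle u,u'\rangle)(1-\langle v,v'\rangle)\ge0$. This holds because both inner products are at most $1$ in absolute value.
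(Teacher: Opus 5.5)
Your proposal is correct and follows essentially the same route as the source the paper cites. The paper gives no proof of its own; it relies on Vershynin's Theorem 7.3.1, which bounds $\mathbb{E}\|X\|_{op}$ via Sudakov--Fernique using the same comparison process $\langle g,u\rangle+\langle h,v\rangle$, and on Corollary 7.3.3, which gets the tail from Gaussian concentration since $X\mapsto\|X\|_{op}$ is $1$-Lipschitz in Frobenius norm.
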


\subsection{Functions  Arising in the Proofs}
\begin{lem}
    \label{lem:derivative} 
    For $s(a)=\frac{1}{1+e^{-a}}$, we have 
   \[s'(a) = \frac{e^{-a}}{(1+e^{-a})^2}=s(a)\big(1-s(a)\big)\]
   and $\sup_{a\in\mathbb{R}}|s'(a)|\le \frac{1}{4}$. 
\end{lem}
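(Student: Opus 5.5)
The derivative formula follows from the quotient rule, and the bound $\tfrac14$ is an AM--GM inequality on $s$ and $1-s$.

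\textbf{Step 1 (derivative).} Write $s(a)=(1+e^{-a})^{-1}$ and differentiate with the chain rule:
\[
s'(a)=-(1+e^{-a})^{-2}\cdot(-e^{-a})=\frac{e^{-a}}{(1+e^{-a})^2}.
\]

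\textbf{Step 2 (product form).} Note that
\[
1-s(a)=\frac{1+e^{-a}-1}{1+e^{-a}}=\frac{e^{-a}}{1+e^{-a}}.
\]
Hence
\[
s(a)\big(1-s(a)\big)=\frac{1}{1+e^{-a}}\cdot\frac{e^{-a}}{1+e^{-a}},
\]
which coincides with the expression obtained in Step 1.

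\textbf{Step 3 (uniform bound).} Since $s(a)\in(0,1)$, both $s(a)$ and $1-s(a)$ are positive, so $s'(a)>0$ and $|s'(a)|=s'(a)$. Put $p=s(a)$. Then
\[
p(1-p)=\tfrac14-\big(p-\tfrac12\big)^2\le\tfrac14 .
\]
Equality holds at $p=\tfrac12$, i.e.\ at $a=0$, which gives $\sup_{a\in\mathbb{R}}|s'(a)|=s'(0)=\tfrac14$.

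No step here presents a genuine obstacle; the only point to get right is the sign bookkeeping in the chain rule in Step 1.
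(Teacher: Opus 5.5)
Your proof is correct and takes essentially the same approach as the paper, whose proof consists of the single sentence that the identity follows by direct differentiation. Your completing-the-square step $p(1-p)=\tfrac14-(p-\tfrac12)^2$ supplies the bound $\sup_a|s'(a)|\le\tfrac14$, which the paper leaves implicit.
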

\begin{proof}
    It can be shown by direct differentiation. 
\end{proof}

\begin{lem}[Order of $m(\tau)$ and $q'(\tau)$]\label{lem:separa}
Let $g\sim N(0,1)$ and
\begin{gather*}
    m(\tau)=\mathbb{E} [s'(\tau g)],\\
    q(\tau)=\tau m(\tau)=\mathbb{E}[\tau s'(\tau g)]=\mathbb{E}[s(\tau g)g]. 
\end{gather*} 
Then, for all $\tau\ge 0$,
\[
m(\tau)\asymp \frac{1}{1+\tau},
\qquad
q'(\tau)=\mathbb{E}[s'(\tau g)g^2]\asymp \frac{1}{(1+\tau)^3}.
\]
\end{lem}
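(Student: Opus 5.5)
To prove Lemma \ref{lem:separa}, I would split the expectations according to whether $|g|\lesssim 1/\tau$ or not. For $\tau\le 1$ the claims are trivial. Indeed, $s'(\tau g)\le \frac14$, and $s'(\tau g)\ge s'(|g|)$ since $s'$ is even and decreasing on $[0,\infty)$. Hence $m(\tau)\in[\mathbb{E}s'(|g|),\frac14]$, and likewise $q'(\tau)=\mathbb{E}[s'(\tau g)g^2]\in[\mathbb{E}s'(|g|)g^2,\frac14]$, both of which are positive constants. So the rest of the argument takes $\tau\ge 1$.

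\textbf{Order of $m$.} I would write $m(\tau)=\int_{\mathbb{R}} s'(\tau x)\phi(x)\,dx$ and substitute $y=\tau x$, which gives $m(\tau)=\frac1\tau\int s'(y)\phi(y/\tau)\,dy$. Since $\phi(y/\tau)\le \phi(0)$ and $\int s'=1$, this yields $m(\tau)\le \frac{1}{\sqrt{2\pi}\,\tau}$. For the lower bound, restrict to $|y|\le 1$, where $\phi(y/\tau)\ge\phi(1)$; this gives $m(\tau)\ge \frac{\phi(1)}{\tau}\int_{-1}^1 s'(y)\,dy\gtrsim\frac1\tau$.

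\textbf{Order of $q'$.} The identity $q'(\tau)=\mathbb{E}[s'(\tau g)g^2]$ follows by differentiating $q(\tau)=\mathbb{E}[s(\tau g)g]$ under the expectation, which is justified by dominated convergence since $s'\le\frac14$. The same substitution gives $q'(\tau)=\frac{1}{\tau^3}\int s'(y)y^2\phi(y/\tau)\,dy$. For the upper bound, use $\phi\le\phi(0)$ together with $\int s'(y)y^2\,dy<\infty$, which holds because $s'(y)\le e^{-|y|}$. For the lower bound, again restrict to $|y|\le 1$, obtaining $q'(\tau)\ge \frac{\phi(1)}{\tau^3}\int_{-1}^1 s'(y)y^2\,dy\gtrsim \tau^{-3}$. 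For $\tau\ge1$ we have $\tau\asymp 1+\tau$, so combining the two regimes gives $m(\tau)\asymp(1+\tau)^{-1}$ and $q'(\tau)\asymp(1+\tau)^{-3}$.

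The only mildly delicate point is interchanging the derivative and the expectation, which the bound $s'\le\frac14$ together with $\mathbb{E}g^2<\infty$ handles. I do not expect a real obstacle here.
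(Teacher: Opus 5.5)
Your argument is correct and differs from the paper's only in how the two integrals are reduced.

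The paper first invokes the two-sided comparison $c\,e^{-|a|}\le s'(a)\le C e^{-|a|}$. This turns $m(\tau)$ and $q'(\tau)$ into $\mathbb{E}[e^{-\tau|g|}]$ and $\mathbb{E}[g^2e^{-\tau|g|}]$. For the upper bound it discards $e^{-x^2/2}\le 1$, leaving the Gamma integrals $1/\tau$ and $2/\tau^3$. For the lower bound it restricts to the window $[0,(1+\tau)^{-1}]$, which handles all $\tau\ge 0$ at once with no case split.

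You instead rescale $y=\tau x$ and keep $s'$ itself. You use only that $s'$ is a probability density ($\int s'=1$) with finite second moment, and you dispose of $\tau\le 1$ separately through the monotonicity of $s'$ in $|a|$.

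The mechanism is the same in both: drop the Gaussian weight for the upper bound, and keep a window of width $\asymp 1/\tau$ for the lower bound. Your rescaled form is the same device the paper itself uses in Lemma~\ref{lem:orderofgap}. It has two small advantages:
\begin{enumerate}
\item It never needs the lower comparison $s'(a)\gtrsim e^{-|a|}$.
\item Via dominated convergence, it even gives the sharp limits $\tau m(\tau)\to 1/\sqrt{2\pi}$ and $\tau^3 q'(\tau)\to \pi^2/(3\sqrt{2\pi})$.
\end{enumerate}
The paper's version, in turn, treats the small-$\tau$ and large-$\tau$ regimes in a single uniform computation.
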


\begin{proof}
We first note that
$
s'(a)=\frac{1}{(e^{a/2}+e^{-a/2})^2}
,
$
hence there exist universal constants $c,C>0$ such that  
\begin{align}
    c e^{-|a|}\le s'(a)\le C e^{-|a|},\qquad\forall a\in\mathbb{R}.\label{sprimaorder}
\end{align}

\paragraph{Proof of $m(\tau)\asymp\frac{1}{1+\tau}$.}
In view of (\ref{sprimaorder}),
\[
m(\tau)=\mathbb{E}[s'(\tau g)]
\asymp
\mathbb{E}[e^{-\tau|g|}].
\]
To show $m(\tau)\asymp\frac{1}{1+\tau}$, all that remains is to establish 
$
\mathbb{E}[e^{-\tau|g|}]\asymp \frac{1}{1+\tau}.$
To show this, note that 
\[
\mathbb{E}[e^{-\tau|g|}]
=
\frac{2}{\sqrt{2\pi}}\int_0^\infty e^{-\tau x}e^{-x^2/2}\,dx.
\]
For the upper bound, since $e^{-x^2/2}\le 1$,
\[
\int_0^\infty e^{-\tau x}e^{-x^2/2}\,dx
\le
\int_0^\infty e^{-\tau x}\,dx
=
\frac{1}{\tau}
\]
for $\tau>0$. While for $0\le \tau\le 1$ the integral is bounded by an absolute constant. Taken collectively, 
\[
\mathbb{E}[e^{-\tau|g|}]\lesssim \frac{1}{1+\tau}.
\]
For the lower bound, integrate over $0\le x\le (1+\tau)^{-1}$. On this interval, we have
\[
\tau x\le 1,
\qquad
x^2/2\le 1/2,
\]
and in turn $
e^{-\tau x}e^{-x^2/2}\ge e^{-3/2}$, which yields 
\[
\int_0^\infty e^{-\tau x}e^{-x^2/2}\,dx
\ge
\int_0^{(1+\tau)^{-1}} e^{-\tau x}e^{-x^2/2}\,dx
\gtrsim
\frac{1}{1+\tau}.
\]
This proves
$
m(\tau)\asymp \frac{1}{1+\tau}.$

\paragraph{Proof of $q'(\tau)\asymp\frac{1}{(1+\tau)^3}$.}
 By Stein's identity (cf. Equation (\ref{stein1}) of Lemma \ref{lem:stein}),
\[
q(\tau)=\tau\mathbb{E}[s'(\tau g)]
=
\mathbb{E}[s(\tau g)g],
\]
and differentiating gives $q'(\tau)=\mathbb{E}[s'(\tau g)g^2].$
Using again $s'(a)\asymp e^{-|a|}$ from (\ref{sprimaorder}), we get
\[
q'(\tau)
\asymp
\mathbb{E}[g^2 e^{-\tau|g|}].
\]
Now
\[
\mathbb{E}[g^2 e^{-\tau|g|}]
=
\frac{2}{\sqrt{2\pi}}
\int_0^\infty x^2 e^{-\tau x}e^{-x^2/2}\,dx.
\]

For the upper bound, since $e^{-x^2/2}\le 1$,
\[
\int_0^\infty x^2 e^{-\tau x}e^{-x^2/2}\,dx
\le
\int_0^\infty x^2 e^{-\tau x}\,dx
=
\frac{2}{\tau^3}
\]
for $\tau>0$. For $0\le \tau\le 1$, the expectation is bounded by an absolute constant. Hence
\[
\mathbb{E}[g^2 e^{-\tau|g|}]
\lesssim
\frac{1}{(1+\tau)^3}.
\]

For the lower bound, we shall integrate over $0\le x\le (1+\tau)^{-1}$. On this interval,
we have $
e^{-\tau x}e^{-x^2/2}\ge e^{-3/2}.
$
Thus
\[
\int_0^\infty x^2 e^{-\tau x}e^{-x^2/2}\,dx
\ge
\int_0^{(1+\tau)^{-1}} x^2 e^{-\tau x}e^{-x^2/2}\,dx
\gtrsim
\int_0^{(1+\tau)^{-1}} x^2\,dx
\asymp
\frac{1}{(1+\tau)^3}.
\]
Therefore
\[
q'(\tau)=\mathbb{E}[s'(\tau g)g^2]\asymp \frac{1}{(1+\tau)^3}.
\]
This completes the proof.
\end{proof}

\begin{lem}[Comparing $m(\tau)$ and $q'(\tau)$]\label{lem:V1larger} Define $m(\tau)$ and $q'(\tau)$ as in Lemma \ref{lem:separa}.  
    We have
    \[m(\tau)> q'(\tau),\quad\forall \tau>0.\]
\end{lem}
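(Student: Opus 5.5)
We need to show $m(\tau)>q'(\tau)$, that is, $\mathbb{E}[s'(\tau g)]>\mathbb{E}[s'(\tau g)g^2]$ for every $\tau>0$. Equivalently, we must show
\[
D(\tau):=\mathbb{E}\big[s'(\tau g)(1-g^2)\big]>0 .
\]
The natural route is Stein's identity (Lemma \ref{lem:stein}). Writing $\phi(g)=s'(\tau g)g$, Equation (\ref{stein1}) gives
\[
\mathbb{E}[g^2 s'(\tau g)]=\mathbb{E}[g\phi(g)]=\mathbb{E}[\phi'(g)]=\mathbb{E}[s'(\tau g)]+\tau\,\mathbb{E}[s''(\tau g)g].
\]
Hence $D(\tau)=-\tau\,\mathbb{E}[g\,s''(\tau g)]$. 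The claim therefore reduces to showing $\mathbb{E}[g\,s''(\tau g)]<0$ for $\tau>0$. The integrability conditions needed for Stein's identity hold trivially, since $s'$ and $s''$ are bounded.

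To settle the sign, I will use the symmetry and monotonicity of $s'$. Since $s'$ is even, $s''$ is odd, so $g\,s''(\tau g)$ is an even function of $g$. Moreover, $s''(a)=s'(a)(1-2s(a))$ is negative for $a>0$ and positive for $a<0$. Consequently $a\,s''(a)<0$ for all $a\neq 0$, and so $g\,s''(\tau g)=\tau^{-1}(\tau g)s''(\tau g)<0$ almost surely when $\tau>0$. Taking expectations gives $\mathbb{E}[g\,s''(\tau g)]<0$, with strict inequality because the integrand is strictly negative on a set of positive measure. This yields $D(\tau)>0$.

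As a cross-check, or as an alternative if one prefers to avoid $s''$, I can apply Stein's identity once more, in the form $\mathbb{E}[g\psi(g)]=\mathbb{E}[\psi'(g)]$ with $\psi=s'(\tau\cdot)$. This gives $\mathbb{E}[g\,s''(\tau g)]\tau^{-1}\cdot\tau=\mathbb{E}[s'''(\tau g)]\tau$, but then the sign argument becomes less transparent. I will therefore stick with the direct sign argument above.

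The only step needing care is verifying that $s''(a)=s'(a)(1-2s(a))$ has the sign of $-a$. This follows from Lemma \ref{lem:derivative} together with $s(a)>1/2$ iff $a>0$, so there is no real obstacle. The main point to get right is the bookkeeping of the factor $\tau$ in the chain rule when differentiating $\phi(g)=s'(\tau g)g$.
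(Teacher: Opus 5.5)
Your proof is correct, but it takes a different route from the paper. You apply Stein's identity to $\phi(g)=s'(\tau g)g$ to get the exact identity $m(\tau)-q'(\tau)=-\tau\,\mathbb{E}[g\,s''(\tau g)]=\tau\,\mathbb{E}\big[|g|\,|s''(\tau g)|\big]$. You then conclude from the pointwise fact $a\,s''(a)<0$ for $a\neq 0$. In effect you show $m'(\tau)<0$, because your identity is just $q'=m+\tau m'$ (cf.\ (\ref{introduceqt})) with $m'(\tau)=\mathbb{E}[g\,s''(\tau g)]$.

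The paper instead writes $q'(\tau)-m(\tau)=\operatorname{Cov}(X,\varphi_\tau(X))$ with $X=g^2$ and $\varphi_\tau(x)=s'(\tau\sqrt{x})$ nonincreasing. It then shows this covariance is strictly negative via the independent-copy (Chebyshev association) trick. The paper's argument never differentiates $s'$ and works for any even weight that is strictly decreasing in $|a|$. Yours needs $s''$, but it yields an explicit expression for the gap that could be bounded quantitatively.

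Your argument could also be shortened. The relation $q=\tau m$ already gives $q'-m=\tau m'$, and $m'(\tau)=\mathbb{E}[s''(\tau g)g]$ by differentiating under the expectation (as in the proof of Lemma \ref{mporder}). So Stein's identity is not really needed.

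There is one harmless slip in your discarded aside. The second Stein step should use $\psi=s''(\tau\,\cdot)$, which gives $\mathbb{E}[g\,s''(\tau g)]=\tau\,\mathbb{E}[s'''(\tau g)]$. With $\psi=s'(\tau\,\cdot)$, both sides vanish by parity.
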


\begin{proof}
In view of Lemma \ref{lem:derivative}, $s'$ is an even function and is strictly decreasing in $|a|$. 
Let $X:=g^2.$
Then $X\ge 0$ and $\mathbb E X=1$. For each fixed $\tau\ge 0$, define
\[
\varphi_\tau(x):=s'(\tau\sqrt{x}),\qquad x\ge 0.
\]
Since $s'(a)$ is decreasing in $|a|$, the function $\varphi_\tau$ is nonincreasing on $[0,\infty)$. Moreover, if $\tau>0$, then $\varphi_\tau$ is strictly decreasing. With this notation,
\[
m(\tau)=\mathbb E[\varphi_\tau(X)],
\qquad
q'(\tau)=\mathbb E[X\varphi_{\tau}(X)].
\]
Therefore
\begin{align}
    q'(\tau)-m(\tau)
=
\mathbb E[X\varphi_{\tau}(X)]-\mathbb E[X]\mathbb E[\varphi_{\tau}(X)]
=
\operatorname{Cov}(X,\varphi_{\tau}(X)).\label{cfh1recall}
\end{align}
We claim this covariance is not greater than $0$. Indeed, let $X'$ be an independent copy of $X$. Then
\[
\operatorname{Cov}(X,\varphi_{\tau}(X))
=
\frac12\mathbb E\Big[(X-X')(\varphi_{\tau}(X)-\varphi_{\tau}(X'))\Big].
\]
Since $\varphi_{\tau}$ is nonincreasing, we have
$
(X-X')(\varphi_{\tau}(X)-\varphi_{\tau}(X'))\le 0
$, which yields
\[
\operatorname{Cov}(X,\varphi_{\tau}(X))\le 0.
\]
In light of (\ref{cfh1recall}), 
\[
q'(\tau)-m(\tau)\le 0,
\]
which gives $
m(\tau)\ge q'(\tau).$ If $\tau>0$, then $\varphi_{\tau}$ is strictly decreasing and $X=g^2$ is nondegenerate, so the covariance is strictly negative. Therefore
\[ 
m(\tau)>q'(\tau),\qquad\forall \tau>0,
\] 
as claimed. 
\end{proof} 
\begin{lem}[Upper bound on $\mathbb{E}(s'(\tau g)|g|^q)$]\label{lem:Hq-upper}
For every integer $q\ge 3$ and every $\tau\ge 1$,
\[
\mathbb{E}_{g\sim N(0,1)}\big[s'(\tau g)|g|^q\big]\le \frac{2q!}{\sqrt{2\pi}}\frac{1}{\tau^{q+1}}.
\] 
\end{lem}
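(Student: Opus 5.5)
The plan is to reduce the expectation to a one-dimensional integral, replace $s'$ by an exponential majorant, and then evaluate a Gamma integral. Since $s'$ is even and $|g|^q$ is even, we have
\[
\mathbb{E}\big[s'(\tau g)|g|^q\big]=\frac{2}{\sqrt{2\pi}}\int_0^\infty s'(\tau x)\,x^q e^{-x^2/2}\,dx .
\]
Next I would use a pointwise bound on $s'$ with constant exactly $1$. By Lemma \ref{lem:derivative}, for $a\ge 0$,
\[
s'(a)=\frac{e^{-a}}{(1+e^{-a})^2}\le e^{-a}.
\]
I would also drop the Gaussian factor using $e^{-x^2/2}\le 1$.

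With these two bounds the integral becomes
\[
\int_0^\infty x^q e^{-\tau x}\,dx=\frac{\Gamma(q+1)}{\tau^{q+1}}=\frac{q!}{\tau^{q+1}}.
\]
This gives
\[
\mathbb{E}\big[s'(\tau g)|g|^q\big]\le \frac{2}{\sqrt{2\pi}}\cdot\frac{q!}{\tau^{q+1}},
\]
which is exactly the claimed bound. In fact it holds for every $q\ge 0$ and every $\tau>0$, so the hypotheses $q\ge 3$ and $\tau\ge1$ are not needed for this step. They only matter where the lemma is used, namely the Bernstein moment check in the bound on $T_{c1}$.

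There is no real obstacle here. The only point to check is that the exponential majorant of $s'$ has constant $1$ rather than the generic $C$ in (\ref{sprimaorder}). This follows from $(1+e^{-a})^2\ge 1$, so no constant beyond $2/\sqrt{2\pi}$ appears in the final bound.
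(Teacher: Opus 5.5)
Your proposal is correct and follows the paper's proof essentially verbatim. You symmetrize to a half-line integral, majorize $s'(\tau x)$ by $e^{-\tau x}$, drop $e^{-x^2/2}$, and evaluate the Gamma integral. Your explicit check that $s'(a)\le e^{-a}$ for $a\ge 0$ (via $(1+e^{-a})^2\ge 1$) is a small but real improvement: the paper cites (\ref{sprimaorder}), which only gives an unspecified constant $C$, yet silently uses constant $1$ to get the exact factor $2/\sqrt{2\pi}$.
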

\begin{proof}
Let $H_q(\tau):=\mathbb{E}_{g\sim N(0,1)}\big[s'(\tau g)|g|^q\big]$ for convenience. In light of (\ref{sprimaorder}), 
\[
H_q(\tau)
\le
\mathbb{E}\big[e^{-\tau|g|}|g|^q\big]=\frac{2}{\sqrt{2\pi}}
\int_0^\infty e^{-\tau x}x^q e^{-x^2/2}\,dx.
\] 
Since \(e^{-x^2/2}\le 1\), we have
\[
\mathbb{E}\big[e^{-\tau|g|}|g|^q\big]
\le
\frac{2}{\sqrt{2\pi}}
\int_0^\infty e^{-\tau x}x^q\,dx.
\]
By the change of variables \(u=\tau x\),
\[
\int_0^\infty e^{-\tau x}x^q\,dx
=
\frac{1}{\tau^{q+1}}\int_0^\infty e^{-u}u^q\,du
=
\frac{\Gamma(q+1)}{\tau^{q+1}}
=
\frac{q!}{\tau^{q+1}}.
\]
Therefore,
\[
H_q(\tau)
\le
\frac{2q!}{\sqrt{2\pi}}\frac{1}{\tau^{q+1}}.
\]
This proves the claim.
\end{proof}
\begin{lem}[Map $F:\mathbb{R}^d\to\mathbb{R}^d$ in Equation (\ref{T2expression})] \label{lem:integral}Let $F(z)=m(\|z\|_2)z$ for $z\in\mathbb{R}^d$ where $m(\tau)= \mathbb{E}_{g\sim (0,1)}(s'(\tau g))$. Then we have: 
\begin{itemize}
    \item The Jacobian of $F$ at $z\ne 0$ is given by \begin{align}\label{jaco}
        \nabla F(z)=m(\|z\|_2)I_d+ \frac{m'(\|z\|_2)}{\|z\|_2}zz^\top . 
    \end{align} 

    \item With the convention $\nabla F(0):=m(0)I_d$, we have the integral representation 
    \begin{align}\label{integral}
        F(u)-F(v) = \int_0^1 \nabla F(v+t h)h\, dt
    \end{align}
    where $h=u-v$. 
\end{itemize}
\end{lem}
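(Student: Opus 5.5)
The plan is to verify the two bullets directly. For the Jacobian formula \eqref{jaco}, I will first check that $m$ is differentiable on $(0,\infty)$. Since $s''$ is bounded and $|g|$ is integrable under $N(0,1)$, dominated convergence allows differentiation under the expectation, giving $m'(\tau)=\mathbb{E}[s''(\tau g)g]$. Next, at $z\neq 0$ the map $z\mapsto \|z\|_2$ is smooth with gradient $z/\|z\|_2$. The product and chain rules applied to $F(z)=m(\|z\|_2)z$ then give
\[
\nabla F(z)=m(\|z\|_2)I_d+z\,\big(m'(\|z\|_2)\,z/\|z\|_2\big)^\top,
\]
which is exactly \eqref{jaco}. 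This formula also shows that $\nabla F$ is continuous on $\mathbb{R}^d\setminus\{0\}$.

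For the integral representation \eqref{integral}, I will set $\phi(t)=F(v+th)$ on $[0,1]$ and use the fundamental theorem of calculus, $\phi(1)-\phi(0)=\int_0^1\phi'(t)\,dt$. This requires $\phi$ to be absolutely continuous with $\phi'(t)=\nabla F(v+th)h$ for almost every $t$.

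\textbf{Case 1: the segment $\{v+th:t\in[0,1]\}$ avoids $0$.} Then $\phi$ is $C^1$ on $[0,1]$ by the first bullet, and the identity follows immediately.

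\textbf{Case 2: the segment passes through $0$.} This happens at most at one time $t_0$, since $h\neq 0$ in the nontrivial case. The main point is to show that $F$ is globally Lipschitz, so that $\phi$ is absolutely continuous; a single point is then irrelevant to the integral.

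\emph{Bounding the Jacobian.} Writing $w=z/\|z\|_2$, the matrix $\nabla F(z)$ has eigenvalue $m(\|z\|_2)$ on $w^\perp$ and $m(\|z\|_2)+\|z\|_2m'(\|z\|_2)$ along $w$. The latter equals $q'(\|z\|_2)=\mathbb{E}[s'(\|z\|_2 g)g^2]$, where $q(\tau)=\tau m(\tau)=\mathbb{E}[s(\tau g)g]$ by Stein's identity \eqref{stein1}. Both eigenvalues lie in $[0,1/4]$ because $0\le s'\le 1/4$. Hence $\|\nabla F(z)\|_{op}\le 1/4$ uniformly over $z\neq 0$.

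\emph{Continuity at $0$.} The map $F$ is continuous at $0$, since $\|F(z)\|_2\le \|z\|_2/4$.

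\emph{Lipschitz property.} Take any $a,b\in\mathbb{R}^d$. If the segment $[a,b]$ avoids the origin, the mean value inequality with the Jacobian bound gives $\|F(a)-F(b)\|_2\le \tfrac14\|a-b\|_2$. If it meets the origin, I split it at $0$ and apply the same bound on each half, using continuity of $F$ at $0$; the estimate survives. So $F$ is $\tfrac14$-Lipschitz.

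Consequently $\phi$ is Lipschitz, hence absolutely continuous. It is differentiable with $\phi'(t)=\nabla F(v+th)h$ for every $t\neq t_0$. The fundamental theorem of calculus for absolutely continuous functions then yields \eqref{integral}, and the convention chosen for $\nabla F(0)$ does not affect the integral.

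\textbf{Remark on the convention.} The choice $\nabla F(0)=m(0)I_d$ is in fact the true Jacobian at $0$: $F(z)-F(0)=m(\|z\|_2)z$, and $m(\|z\|_2)\to m(0)$ as $z\to 0$. I only need it for consistency, though.

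The expected main obstacle is the non-smoothness of $\|\cdot\|_2$ at the origin. It is handled by the uniform bound $\|\nabla F\|_{op}\le 1/4$, which is where Stein's identity, rewriting the radial eigenvalue as $q'(\tau)\ge 0$, is essential: it shows that $\tau m'(\tau)$ is bounded.
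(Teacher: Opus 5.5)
Your proposal is correct and follows the paper's route: the product and chain rule give the Jacobian, and then the fundamental theorem of calculus is applied along $t\mapsto F(v+th)$. Your Lipschitz/absolute-continuity argument actually proves the origin-crossing case, which the paper only dismisses as coverable ``by a separate treatment (or an approximation argument)''. That case is even easier than you make it: since $|m'(\tau)|\le \sup_a|s''(a)|\,\mathbb{E}|g|$, we have $\nabla F(z)=m(\|z\|_2)I_d+\|z\|_2m'(\|z\|_2)\,zz^\top/\|z\|_2^2\to m(0)I_d$ as $z\to 0$, so with your observation that $m(0)I_d$ is the true derivative at $0$, $F\in C^1(\mathbb{R}^d)$ and Case 1 applies verbatim, which also shows that Stein's identity is not essential here.
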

 
\begin{proof}
 We first compute the Jacobian (\ref{jaco}). Write
$
\rho:=\|z\|_2=\left(\sum_{k=1}^d z_k^2\right)^{1/2}.$
The $i$-th component of $F$ is
$
F_i(z)=m(\rho)z_i.$
For $z\neq 0$, we have
$
\frac{\partial \rho}{\partial z_j}
=
\frac{z_j}{\rho}.$ Therefore, by the product rule and the chain rule,
\[
\frac{\partial F_i(z)}{\partial z_j}
=
\frac{\partial}{\partial z_j}\big(m(\rho)z_i\big)
=
m'(\rho)\frac{\partial \rho}{\partial z_j}z_i
+
m(\rho)\frac{\partial z_i}{\partial z_j}=m'(\rho)\frac{z_j}{\rho}z_i
+
m(\rho)\mathbf 1_{\{i=j\}}.
\] 
Thus the $(i,j)$-th entry of the Jacobian matrix is
\[
(\nabla F(z))_{ij}
=
m(\rho)\mathbf 1_{\{i=j\}}
+
\frac{m'(\rho)}{\rho}z_i z_j.
\]
In matrix form, this is exactly
\[
\nabla F(z)
=
m(\rho)I_d+\frac{m'(\rho)}{\rho}zz^\top
=
m(\|z\|_2)I_d
+
\frac{m'(\|z\|_2)}{\|z\|_2}zz^\top.
\]

We next prove the integral representation (\ref{integral}). Let
$
h:=u-v
$ and define the line segment
$\gamma(t):=v+th,~~t\in[0,1].$ We shall assume that $\gamma(t)\ne 0$ for $t\in[0,1]$; we can cover the remaining cases by a separate treatment (or an approximation argument) and the convention $\nabla F(0)=m(0)I_d$ does not affect the value of the integral.  For each coordinate $i=1,\ldots,d$, define the one-dimensional function
$
\psi_i(t):=F_i(\gamma(t))=F_i(v+th).
$ 
By the chain rule,
\[
\psi_i'(t)
=
\sum_{j=1}^d
\frac{\partial F_i(\gamma(t))}{\partial z_j}h_j.
\]
This is the $i$-th component of the vector
$\nabla F(\gamma(t))h.$ Hence, by the fundamental theorem of calculus,
\[
F_i(u)-F_i(v)
=
\psi_i(1)-\psi_i(0)
=
\int_0^1 \psi_i'(t)\,dt
=
\int_0^1
\left(\nabla F(v+th)h\right)_i
\,dt.
\]
Since this holds for every coordinate $i$, we obtain the vector identity
in (\ref{integral}).
This completes the proof.
\end{proof}
\begin{lem}[Upper bound on $m'(\tau)$]
\label{mporder}
For
$
m(\tau):=\mathbb E_{g\sim N(0,1)}[s'(\tau g)],
$
we have
\[
|m'(\tau)|\lesssim \frac{1}{\tau^2},
\qquad \tau\ge \frac12.
\]
\end{lem}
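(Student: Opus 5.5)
We bound $m'(\tau)$ through the identity $q(\tau)=\tau m(\tau)$, already recorded in Lemma \ref{lem:separa}. Differentiating gives $q'(\tau)=m(\tau)+\tau m'(\tau)$, so
\[
m'(\tau)=\frac{q'(\tau)-m(\tau)}{\tau}.
\]
By Lemma \ref{lem:V1larger} we have $q'(\tau)<m(\tau)$, hence $m'(\tau)<0$ and $|m'(\tau)|=\frac{m(\tau)-q'(\tau)}{\tau}\le \frac{m(\tau)}{\tau}$. Lemma \ref{lem:separa} gives $m(\tau)\lesssim \frac{1}{1+\tau}\le\frac1\tau$, so $|m'(\tau)|\lesssim \tau^{-2}$ for all $\tau>0$, and in particular for $\tau\ge\frac12$.

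The one point needing justification is that $m$ is differentiable and $q'(\tau)=\mathbb{E}[s'(\tau g)g^2]$, i.e., that we may differentiate under the expectation. We can also compute directly: $m'(\tau)=\mathbb{E}[s''(\tau g)g]$. Differentiation under the integral sign is justified by dominated convergence, since $|s''|\le 1$ (from $s''=s'(1-2s)$ and $|s'|\le\frac14$) and $\mathbb{E}|g|<\infty$.

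As a cross-check, this route also gives the bound directly. Since $|s''(a)|\le s'(a)\lesssim e^{-|a|}$ by (\ref{sprimaorder}), we get $|m'(\tau)|\lesssim \mathbb{E}[|g|e^{-\tau|g|}]\le \frac{2}{\sqrt{2\pi}}\int_0^\infty xe^{-\tau x}\,dx\lesssim \tau^{-2}$, matching the computation in Lemma \ref{lem:Hq-upper}.

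Neither step is a real obstacle. The only subtlety is the interchange of derivative and expectation, which the bounded-$s''$ domination settles.
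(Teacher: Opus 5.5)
Your proof is correct, but your main argument differs from the paper's; your ``cross-check'' paragraph is essentially the paper's proof.

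The paper differentiates under the expectation to get $m'(\tau)=\mathbb{E}[s''(\tau g)g]$. It then proves the pointwise bound $|s''(a)|\le e^{-|a|}$ from the explicit formula for $s''$; you get the same bound from $|s''|=s'|1-2s|\le s'$ and (\ref{sprimaorder}). Finally it evaluates $\mathbb{E}[|g|e^{-\tau|g|}]\le C\int_0^\infty xe^{-\tau x}\,dx=C\tau^{-2}$.

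Your main route instead reads off $\tau m'(\tau)=q'(\tau)-m(\tau)$ from $q=\tau m$ and reuses Lemmas \ref{lem:V1larger} and \ref{lem:separa}, so no new integral estimate is needed. It also gives more structure:
\begin{itemize}
\item $m$ is strictly decreasing, with $0<-m'(\tau)<m(\tau)/\tau\lesssim \frac{1}{\tau(1+\tau)}$;
\item $-\tau m'(\tau)$ is exactly the gap between the two eigenvalues $m$ and $q'$ of $\nabla F$.
\end{itemize}
Two implicit steps check out. The inequality $m-q'\le m$ uses $q'>0$, which is immediate from $s'>0$. Identifying the derivative of $\tau m(\tau)$ with $\mathbb{E}[s'(\tau g)g^2]$, the quantity Lemma \ref{lem:V1larger} actually compares against, needs a second interchange, dominated by $g^2/4$; Lemma \ref{lem:separa} already records it.

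The advantage of the paper's direct computation is that it is self-contained and extends verbatim to higher derivatives: Lemma \ref{qpporder} bounds $q''$ by the same argument with $|g|^3$ in place of $|g|$. Your identity trick is tied to the first derivative. Bounding $q''=2m'+\tau m''$ that way would still require a separate estimate on $m''$, which brings you back to the direct computation.
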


\begin{proof} 
Differentiating under the expectation gives
\[
m'(\tau)=\mathbb E[s''(\tau g)g].
\]
We now use the explicit formula for $s''$. Since
$
s'(a)
=
\frac{1}{(e^{a/2}+e^{-a/2})^2},
$
we have
\[
s''(a)
=
-\frac{e^{a/2}-e^{-a/2}}{(e^{a/2}+e^{-a/2})^3}
=
\frac{e^{-a/2}-e^{a/2}}{(e^{a/2}+e^{-a/2})^3}.
\]
Therefore,
\[
|s''(a)|
=
\frac{|e^{a/2}-e^{-a/2}|}{(e^{a/2}+e^{-a/2})^3}.
\]
If $a\ge 0$, then
\[
|s''(a)|
\le
\frac{e^{a/2}}{e^{3a/2}}
=
e^{-a}.
\]
If $a<0$, then
\[
|s''(a)|
\le
\frac{e^{-a/2}}{e^{-3a/2}}
=
e^{a}
=
e^{-|a|}.
\]
Hence, for all $a\in\mathbb R$,
\begin{align}
    |s''(a)|\le e^{-|a|}.\label{haveproved}
\end{align}

It follows that, for $\tau\ge 1/2$,
\[
|m'(\tau)|
\le
\mathbb E[|s''(\tau g)||g|]
\le
\mathbb E[e^{-\tau|g|}|g|].
\]
Using the Gaussian density,
\[
\mathbb E[e^{-\tau|g|}|g|]
=
\frac{2}{\sqrt{2\pi}}
\int_0^\infty x e^{-\tau x}e^{-x^2/2}\,dx.
\]
Since $e^{-x^2/2}\le 1$, we obtain
\[
\mathbb E[e^{-\tau|g|}|g|]
\le
C\int_0^\infty x e^{-\tau x}\,dx.
\]
Finally,
\[
\int_0^\infty x e^{-\tau x}\,dx
=
\frac{1}{\tau^2}\int_0^\infty u e^{-u}\,du
=
\frac{1}{\tau^2},
\]
where we used the change of variables $u=\tau x$. Therefore,
\[
|m'(\tau)|\le \frac{C}{\tau^2},
\qquad \tau\ge \frac12.
\]
This proves the claim.
\end{proof}
\begin{lem}[Upper bound on $q''(\tau)$]
\label{qpporder}
For $q(\tau)$ defined in Lemma~\ref{lem:separa}, we have
\[
|q''(\tau)|\lesssim \frac{1}{\tau^4},
\qquad \tau\ge \frac12.
\]
\end{lem}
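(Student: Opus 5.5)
We need $|q''(\tau)|\lesssim \tau^{-4}$ for $\tau\ge 1/2$. The plan is to write $q''$ as an explicit Gaussian expectation and then bound it exactly as in the proofs of Lemma \ref{mporder} and Lemma \ref{lem:Hq-upper}.

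Lemma \ref{lem:separa} gives $q'(\tau)=\mathbb{E}[s'(\tau g)g^2]$. Differentiating once more under the expectation gives
\[
q''(\tau)=\mathbb{E}\big[s''(\tau g)g^3\big].
\]
This step is justified by dominated convergence. Indeed, $|s''(a)|\le e^{-|a|}\le 1$ by (\ref{haveproved}), so the integrand is dominated by $|g|^3$, which is integrable.

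We then apply the pointwise bound (\ref{haveproved}) once more, $|s''(a)|\le e^{-|a|}$, to get
\[
|q''(\tau)|\le \mathbb{E}\big[e^{-\tau|g|}|g|^3\big]=\frac{2}{\sqrt{2\pi}}\int_0^\infty x^3e^{-\tau x}e^{-x^2/2}\,dx .
\]

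Next we drop the Gaussian factor using $e^{-x^2/2}\le 1$. Substituting $u=\tau x$ then gives
\[
\int_0^\infty x^3e^{-\tau x}\,dx=\frac{\Gamma(4)}{\tau^4}=\frac{6}{\tau^4}.
\]
Hence $|q''(\tau)|\le \frac{12}{\sqrt{2\pi}}\,\tau^{-4}$ for every $\tau>0$, and in particular for $\tau\ge 1/2$. This is the same computation as in Lemma \ref{lem:Hq-upper} with $q=3$.

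There is no real obstacle in this argument. The only points to check are the interchange of derivative and expectation, which the domination above settles, and the identity $q'(\tau)=\mathbb{E}[s'(\tau g)g^2]$. The latter follows from $q(\tau)=\mathbb{E}[s(\tau g)g]$ by the same differentiation under the expectation, with the integrand dominated by $\frac14 g^2$.

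The oddness of $g\mapsto s''(\tau g)g^3$ need not worry us. We have $s''(-a)=-s''(a)$, so $s''(\tau g)g^3$ is an even function of $g$ and its expectation is generally nonzero. Only the absolute bound is required, and the argument above gives it directly.
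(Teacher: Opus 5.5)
Your proof is correct and follows essentially the same route as the paper: you differentiate under the expectation to get $q''(\tau)=\mathbb{E}[s''(\tau g)g^3]$, apply $|s''(a)|\le e^{-|a|}$, drop the Gaussian factor, and evaluate $\int_0^\infty x^3e^{-\tau x}\,dx=6/\tau^4$. Your dominated-convergence justification for the interchange (integrand bounded by $|g|^3$) is a small addition that the paper leaves implicit.
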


\begin{proof} 
Recall that
$
q'(\tau)=\mathbb E[s'(\tau g)g^2].
$ 
Differentiating under the expectation gives
\[
q''(\tau)=\mathbb E[s''(\tau g)g^3].
\] 
Recall from Equation (\ref{haveproved}) in the proof of Theorem \ref{mporder} that, for all $a\in\mathbb R$,
\[
|s''(a)|\le e^{-|a|}.
\]

It follows that, for $\tau\ge 1/2$,
\[
|q''(\tau)|
\le
\mathbb E[|s''(\tau g)||g|^3]
\le
\mathbb E[e^{-\tau|g|}|g|^3].
\]
Using the Gaussian density,
\[
\mathbb E[e^{-\tau|g|}|g|^3]
=
\frac{2}{\sqrt{2\pi}}
\int_0^\infty x^3 e^{-\tau x}e^{-x^2/2}\,dx.
\]
Since $e^{-x^2/2}\le 1$, we get
\[
\mathbb E[e^{-\tau|g|}|g|^3]
\le
C\int_0^\infty x^3 e^{-\tau x}\,dx.
\]
Finally,
\[
\int_0^\infty x^3e^{-\tau x}\,dx
=
\frac{1}{\tau^4}\int_0^\infty u^3e^{-u}\,du
=
\frac{6}{\tau^4},
\]
where we used the change of variables $u=\tau x$. Therefore,
\[
|q''(\tau)|\le \frac{C}{\tau^4},
\qquad \tau\ge \frac12.
\]
This proves the claim.
\end{proof}

\begin{lem}\label{lem:mono}
 Let $q(\tau)$ be defined as in Lemma \ref{lem:separa}. $q(\tau)$ is strictly increasing on $[0,\infty)$ and satisfies
\[
q(0)=0,
\qquad
\lim_{\tau\to\infty}q(\tau)=\frac{1}{\sqrt{2\pi}}.
\]
Consequently, $q$ admits an inverse on
$
(0,\frac{1}{\sqrt{2\pi}}),$ which we denote by $q^{-1}:(0,\frac{1}{\sqrt{2\pi}})\to (0,\infty)$.
\end{lem}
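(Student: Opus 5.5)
The plan is to work from the two representations of $q$ already recorded in Lemma \ref{lem:separa}, namely $q(\tau)=\tau m(\tau)=\mathbb{E}[s(\tau g)g]$ and $q'(\tau)=\mathbb{E}[s'(\tau g)g^2]$, and to establish the three claims in turn.

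\textbf{Value at zero.} Since $s(0)=\frac12$ and $\mathbb{E}[g]=0$, we get $q(0)=\mathbb{E}[\tfrac12 g]=0$. Alternatively, $q(0)=0\cdot m(0)=0$ directly.

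\textbf{Strict monotonicity.} Differentiating under the expectation is justified by dominated convergence, because $|s'(\tau g)g^2|\le \frac14 g^2$ is integrable. This gives $q'(\tau)=\mathbb{E}[s'(\tau g)g^2]$ for all $\tau\ge 0$. Since $s'>0$ everywhere and $g^2>0$ almost surely, we have $q'(\tau)>0$ for every $\tau\ge0$. This positivity is also consistent with the lower bound $q'(\tau)\gtrsim (1+\tau)^{-3}$ from Lemma \ref{lem:separa}. Hence $q$ is strictly increasing and continuous on $[0,\infty)$.

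\textbf{Limit at infinity.} We use $q(\tau)=\mathbb{E}[s(\tau g)g]$. As $\tau\to\infty$, $s(\tau g)\to \mathbf{1}\{g>0\}$ pointwise for every $g\neq 0$, hence almost surely. The integrand is dominated by $|g|$, so dominated convergence yields
\[
\lim_{\tau\to\infty}q(\tau)=\mathbb{E}[g\mathbf{1}\{g>0\}]=\frac{1}{\sqrt{2\pi}}\int_0^\infty x e^{-x^2/2}\,dx=\frac{1}{\sqrt{2\pi}}.
\]

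\textbf{Inverse.} Since $q$ is continuous and strictly increasing with $q(0)=0$ and $q(\tau)\uparrow \frac{1}{\sqrt{2\pi}}$, the intermediate value theorem shows that $q$ maps $(0,\infty)$ bijectively onto $(0,\frac{1}{\sqrt{2\pi}})$. Here the supremum is not attained, again by strict monotonicity. Therefore $q^{-1}:(0,\frac{1}{\sqrt{2\pi}})\to(0,\infty)$ is well defined, and it is increasing and differentiable by the inverse function theorem because $q'>0$.

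\textbf{Main obstacle.} The argument is essentially routine. The only point requiring care is justifying the interchange of limit or derivative with the expectation, and this is handled by the integrable dominating functions $\frac14 g^2$ and $|g|$.
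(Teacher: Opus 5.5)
Your proposal is correct and follows essentially the same route as the paper: you use positivity of $q'(\tau)=\mathbb{E}[s'(\tau g)g^2]$ for strict monotonicity, $q(0)=0$, and dominated convergence with the bound $|g|$ to get the limit $\frac{1}{\sqrt{2\pi}}$, and then continuity plus strict monotonicity to get the inverse. Your explicit justification of differentiating under the expectation via the dominating function $\frac14 g^2$, and your check that the supremum $\frac{1}{\sqrt{2\pi}}$ is not attained, are small welcome additions that the paper leaves implicit.
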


\begin{proof}
By Lemma \ref{lem:separa}, for every $\tau\ge 0$,
\[
q(\tau)=\tau\mathbb E[s'(\tau g)]
=
\mathbb E[s(\tau g)g].
\]
We first prove monotonicity. Differentiating $q(\tau)=\mathbb E[s(\tau g)g]$ gives
\[
q'(\tau)
=
\mathbb E[s'(\tau g)g^2].
\]
Since
$
s'(a)=\frac{1}{(e^{a/2}+e^{-a/2})^2}>0
$ 
for every $a\in\mathbb R$, and since $\mathbb P(g\neq 0)=1$, we have
\[
q'(\tau)=\mathbb E[s'(\tau g)g^2]>0
\]
for every $\tau\ge 0$. Hence $q$ is strictly increasing on $[0,\infty)$.

Next,
\[
q(0)=0\cdot m(0)=0.
\]
For the limit as $\tau\to\infty$, using again the representation
\[
q(\tau)=\mathbb E[s(\tau g)g],
\]
we note that for every $g\neq 0$,
\[
s(\tau g)\to \mathbf 1_{\{g>0\}}
\qquad\text{as ~}\tau\to\infty.
\]
Moreover,
we have $
|s(\tau g)g|\le |g|
$
and $\mathbb E|g|<\infty$. Thus, by dominated convergence,
\[
\lim_{\tau\to\infty}q(\tau)
=
\mathbb E[g\mathbf 1_{\{g>0\}}]=
\frac{1}{\sqrt{2\pi}}\int_0^\infty x e^{-x^2/2}\,dx
=
\frac{1}{\sqrt{2\pi}}.
\] Since $q$ is continuous and strictly increasing from $0$ to $1/\sqrt{2\pi}$, it admits an inverse on
$\left(0,\frac{1}{\sqrt{2\pi}}\right).$
\end{proof}
  \begin{lem}\label{lem:orderofgap}
 Let $q(\tau)$ be defined as in Lemma \ref{lem:separa}.  For every $\tau\ge 1/2$,
\[
\frac{1}{\sqrt{2\pi}}-q(\tau)\asymp \frac{1}{\tau^2}.
\] 
\end{lem}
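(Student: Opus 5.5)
We need $\frac{1}{\sqrt{2\pi}}-q(\tau)\asymp\tau^{-2}$ for $\tau\ge 1/2$. The plan is to write the gap as an explicit one-dimensional integral and estimate it from both sides with the exponential envelope (\ref{sprimaorder}).

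By Lemma \ref{lem:mono} we have $q(\tau)=\mathbb{E}[s(\tau g)g]$ and $\frac{1}{\sqrt{2\pi}}=\mathbb{E}[g\mathbf 1_{\{g>0\}}]$. Hence
\[
G(\tau)=\mathbb{E}\big[g(\mathbf 1_{\{g>0\}}-s(\tau g))\big].
\]
Two identities simplify this. On $g>0$ we have $1-s(\tau g)=s(-\tau g)$, and on $g<0$ we have $-g\,s(\tau g)=|g|s(-\tau|g|)$. By symmetry of $g$ the two halves coincide, so
\[
G(\tau)=2\,\mathbb{E}\big[|g|\,s(-\tau|g|)\mathbf 1_{\{g>0\}}\big]=\frac{2}{\sqrt{2\pi}}\int_0^\infty x\,s(-\tau x)e^{-x^2/2}\,dx .
\]
The key elementary fact is $\tfrac12 e^{-a}\le s(-a)=\frac{1}{1+e^{a}}\le e^{-a}$ for $a\ge 0$. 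This is cleaner than going through $s'$.

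For the upper bound, drop $e^{-x^2/2}\le 1$ and use $s(-\tau x)\le e^{-\tau x}$. This gives $G(\tau)\le C\int_0^\infty xe^{-\tau x}dx=C/\tau^2$.

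For the lower bound, restrict the integral to $x\in[0,1/\tau]$. On this interval $e^{-\tau x}\ge e^{-1}$. Since $\tau\ge 1/2$ gives $x\le 2$, we also have $e^{-x^2/2}\ge e^{-2}$. Therefore
\[
G(\tau)\ge c\int_0^{1/\tau}x\,dx=\frac{c}{2\tau^2}.
\]
Together the two bounds prove the claim.

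No step is a genuine obstacle. The only point needing care is the symmetrization identity, which rewrites the gap so that the integrand is nonnegative and exponentially localized at scale $1/\tau$. Once that is done, the argument parallels the proof of Lemma \ref{lem:separa}, and the restriction $\tau\ge 1/2$ is used only to keep the Gaussian factor bounded below on the integration window.
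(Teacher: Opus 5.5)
Your proposal is correct and follows essentially the same route as the paper. Both arguments symmetrize the gap into $\frac{2}{\sqrt{2\pi}}\int_0^\infty \frac{x e^{-x^2/2}}{1+e^{\tau x}}\,dx$, get the upper bound by dropping the Gaussian factor, and get the lower bound by restricting to the window $x\le 1/\tau$ (the paper's $u\in[0,1]$ after substituting $u=\tau x$), where $\tau\ge 1/2$ keeps $e^{-x^2/2}$ bounded below. The only cosmetic difference is that you replace $1/(1+e^{\tau x})$ by the envelope $e^{-\tau x}$ up to a factor of $2$, whereas the paper rescales and keeps the exact integrand $u/(1+e^{u})$.
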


\begin{proof}
In light of Lemma \ref{lem:separa}, 
\[
q(\tau)=\tau\mathbb E[s'(\tau g)]
=
\mathbb E[s(\tau g)g].
\]
Therefore,
\[
\frac{1}{\sqrt{2\pi}}-q(\tau)
=
\mathbb E[g\mathbf 1_{\{g>0\}}]-\mathbb E[s(\tau g)g].
\]
Using the symmetry of the standard Gaussian distribution, we can write
\begin{align*}
\frac{1}{\sqrt{2\pi}}-q(\tau)
&=
\int_0^\infty x\phi(x)\,dx
-
\int_{\mathbb R} s(\tau x)x\phi(x)\,dx \\
&=
\int_0^\infty x\phi(x)\,dx
-
\int_0^\infty x\phi(x)\big(s(\tau x)-s(-\tau x)\big)\,dx,
\end{align*}
where
$$
\phi(x)=\frac{1}{\sqrt{2\pi}}e^{-x^2/2}
$$ is the density function of $N(0,1).$
Since
$
s(a)-s(-a)=2s(a)-1,
$ 
we get
\[
1-\big(s(\tau x)-s(-\tau x)\big)
=
1-(2s(\tau x)-1)
=
2(1-s(\tau x))
=
\frac{2}{1+e^{\tau x}}.
\]
Hence
\[
\frac{1}{\sqrt{2\pi}}-q(\tau)
=
2\int_0^\infty \frac{x\phi(x)}{1+e^{\tau x}}\,dx.
\]
Changing variables $u=\tau x$ gives
\[
\frac{1}{\sqrt{2\pi}}-q(\tau)
=
\frac{2}{\tau^2}
\int_0^\infty
\frac{u\phi(u/\tau)}{1+e^u}\,du.
\]

We now prove matching upper and lower bounds. For the upper bound, since
\[
\phi(u/\tau)\le \phi(0)=\frac{1}{\sqrt{2\pi}},
\]
we have
\[
\frac{1}{\sqrt{2\pi}}-q(\tau)
\le
\frac{2\phi(0)}{\tau^2}
\int_0^\infty \frac{u}{1+e^u}\,du
\le
\frac{C}{\tau^2}.
\]

For the lower bound, restrict the integral to $u\in[0,1]$. Since $\tau\ge 1/2$,
we have $u/\tau\le 2$ for $u\in[0,1]$. Thus
\[
\phi(u/\tau)\ge \phi(2)>0
\qquad\text{for }0\le u\le 1.
\]
Therefore,
\[
\frac{1}{\sqrt{2\pi}}-q(\tau)
\ge
\frac{2\phi(2)}{\tau^2}
\int_0^1 \frac{u}{1+e^u}\,du
\ge
\frac{c}{\tau^2}.
\]
Combining the two bounds proves the claim.
\end{proof}
\begin{lem}\label{Ehatv}
Under Assumptions \ref{gaussian}--\ref{normge1}, we have 
\[
v=\mathbb E[y_i x_i]
=
q(\|\theta^*\|_2)\frac{\theta^*}{\|\theta^*\|_2},
\]
where $q(\tau):=\mathbb E_{g\sim N(0,1)}[s(\tau g)g].$ Note that by Stein's identity $q(\tau)=\mathbb{E}[\tau s'(\tau g)]$ and thus coincides with the definition in Lemma \ref{lem:separa}. 
\end{lem}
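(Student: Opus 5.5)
The plan is to compute $\mathbb{E}[y_ix_i]$ by first conditioning on $x_i$ and then applying the vector Stein identity. Under Assumption \ref{gaussian}, $\mathbb{E}[y_i\mid x_i]=s(x_i^\top\theta^*)$. The tower property therefore gives
\[
v=\mathbb{E}[y_ix_i]=\mathbb{E}\big[x_i\, s(x_i^\top\theta^*)\big].
\]
Since $|y_ix_i|\le\|x_i\|_2$ is integrable, all expectations here are finite.

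Next we apply Equation (\ref{stein3}) of Lemma \ref{lem:stein} with $f=s$ and $\theta=\theta^*$. The hypotheses hold because $s$ is smooth and bounded with $0<s'\le 1/4$ (Lemma \ref{lem:derivative}). Hence $\mathbb{E}\|x\,s(x^\top\theta^*)\|_2\le\mathbb{E}\|x\|_2<\infty$ and $\mathbb{E}\|\nabla\Phi(x)\|_2\le\|\theta^*\|_2/4$. This yields
\[
v=\mathbb{E}[s'(x_i^\top\theta^*)]\,\theta^*.
\]
Now write $R=\|\theta^*\|_2\ge 1$ (Assumption \ref{normge1}, so $R\neq0$). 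By rotational invariance, $x_i^\top\theta^*\overset{d}{=}Rg$ with $g\sim N(0,1)$. Therefore
\[
v=m(R)\,\theta^*=R\,m(R)\,\frac{\theta^*}{R}=q(R)\,\frac{\theta^*}{\|\theta^*\|_2},
\]
using $q(\tau)=\tau m(\tau)=\mathbb{E}[\tau s'(\tau g)]$.

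Finally we reconcile this with the definition $q(\tau)=\mathbb{E}[s(\tau g)g]$ in the statement. Apply the scalar Stein identity (\ref{stein1}) to $\phi(a)=s(\tau a)$, which is absolutely continuous with $\phi'(a)=\tau s'(\tau a)$, and with $\mathbb{E}|g\phi(g)|\le\mathbb{E}|g|<\infty$. This gives $\mathbb{E}[s(\tau g)g]=\tau\,\mathbb{E}[s'(\tau g)]$, so the two definitions coincide, as in Lemma \ref{lem:separa}.

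There is no real obstacle. The only point requiring care is checking the integrability conditions of Stein's identity, and these are immediate from the boundedness of $s$ and $s'$.
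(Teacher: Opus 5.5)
Your proof is correct, but it runs in the opposite order from the paper's. After conditioning on $x_i$, the paper argues by rotational invariance that $\mathbb{E}[s(x_i^\top\theta^*)x_i]$ must be parallel to $\theta^*$. It then reads off the component along $e=\theta^*/\|\theta^*\|_2$ directly as $\mathbb{E}[s(\|\theta^*\|_2 g)g]$. This lands on the definition of $q$ in the statement without invoking Stein at all; Stein enters only in the side remark identifying this with $\tau m(\tau)$.

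You instead apply the vector Stein identity (\ref{stein3}). This makes the direction automatic, since $\nabla\Phi(x)=s'(x^\top\theta^*)\theta^*$ is already parallel to $\theta^*$. The price is that the coefficient comes out as $\|\theta^*\|_2\,m(\|\theta^*\|_2)$, so you need the scalar identity (\ref{stein1}) to reach the stated form $\mathbb{E}[s(\tau g)g]$.

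Your route is the same computation the paper uses for $\mathbb{E}[h_{\theta^*}(u)]=F(u)-F(\theta^*)$ in the bias analysis. It also makes the parallelism rigorous with explicitly checked integrability. The paper's symmetry argument is shorter but leaves implicit why the orthogonal components vanish: for $w\perp e$, $x_i^\top w$ is independent of $x_i^\top e$ and has mean zero.
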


\begin{proof}
By conditioning on $x_i$,
\[
\mathbb E[y_i x_i]
=
\mathbb E[s(x_i^\top\theta^*)x_i].
\]
If $\theta^*=0$, the claim is immediate since the expectation is zero. Assume
$\theta^*\neq 0$ and set
$
e:=\frac{\theta^*}{\|\theta^*\|_2}.
$
By rotational invariance of the standard Gaussian distribution,
$
\mathbb E[s(x_i^\top\theta^*)x_i]
$ 
must be parallel to $\theta^*$, and its component along $e$ is
\[
\left\langle
\mathbb E[s(x_i^\top\theta^*)x_i],e
\right\rangle
=
\mathbb E[s(\|\theta^*\|_2 g)g]
=
q(\|\theta^*\|_2),
\]
where $g=x_i^\top e\sim N(0,1)$. Hence
$
\mathbb E[y_i x_i]
=
q(\|\theta^*\|_2)e
=
q(\|\theta^*\|_2)\frac{\theta^*}{\|\theta^*\|_2}.
$ 
\end{proof}

\section{Estimator in \cite{matsumoto2025learning} and Algorithm \ref{alg:spectral}}\label{app:relu}
In this small appendix we provide more details on the estimator of \cite{matsumoto2025learning}, in order to support Equation (\ref{lloydequa}) and Lemma \ref{lem:directionbound} in the main text.

\subsection{Sub-Gradient Descent}\label{app:subgd}
We first explain that Equation (\ref{lloydequa}), or equivalently
\begin{gather}\label{subgd}
    \tilde{r}_{t+1} =\hat{r}_t-\frac{\sqrt{2\pi}}{n}\sum_{i=1}^{n}\big(\frac{\sign(x_i^\top \hat{r}_t)+1}{2}-y_i\big)x_i,\\\hat{r}_{t+1}=\frac{\tilde{r}_{t+1}}{\|\tilde{r}_{t+1}\|_2} \label{normalize1}
\end{gather}
as written in Lemma \ref{lem:directionbound}, can be viewed as a (sub-)gradient descent algorithm with respect to the ReLU loss. To see this, we transition to the $\{-1,1\}$-valued responses $\tilde{y}_i = 2y_i-1$, 
consider the Hamming distance  loss \[L_{\rm ham}(u)=\frac{1}{n}\sum_{i=1}^n \mathbf{1}\big(\tilde{y_i}\ne \sign(x_i^\top  u)\big)=\frac{1}{n}\sum_{i=1}^n \mathbf{1}\big(-\tilde{y}_ix_i^\top u \ge 0\big)\] and then consider its convex relaxation, the ReLU loss obtained by replacing $\mathbf{1}(a>0)$ to ${\rm ReLU}(a)=\max\{a,0\}=\frac{a+|a|}{2}$,  
\[L_{\rm ReLU}(u)=\frac{1}{2n}\sum_{i=1}^n \Big(|x_i^\top u|-\tilde{y}_ix_i^\top u\Big).\]
It is easy to see that a sub-gradient of the ReLU loss is given by 
\[\partial L_{\rm ReLU}(u) = \frac{1}{2n}\sum_{i=1}^n\Big(\sign(x_i^\top u)-\tilde{y}_i\Big)x_i = \frac{1}{n}\sum_{i=1}^n \Big(\frac{\sign(x_i^\top u)+1}{2}-y_i\Big)x_i .\]
As such, (\ref{subgd}) is exactly the sub-gradient descent with ReLU loss. Moreover, the additional normalization step, (\ref{normalize1}), is natural since the desired true direction $\theta^*/\|\theta^*\|_2$ resides in $\mathbb{S}^{d-1}$.

\subsection{Derivation of Lemma \ref{lem:directionbound}}\label{app:derilem}
We show the derivation of  Lemma \ref{lem:directionbound} from Corollary 7 and Theorem 5 of \cite{matsumoto2025learning} with the sparsity level $k$ therein set as $d$. To start, we note that $\beta$ in \cite{matsumoto2025learning} denotes the signal norm $\|\theta^*\|_2$. Hence $\beta \ge 1$ under Assumption \ref{normge1}.

We first show that (\ref{directionerror}) follows from the error rate of \cite[Corollary 7]{matsumoto2025learning}. In particular, Equation (17) therein shows that, to achieve $\epsilon$ estimation error, we require a sample size of 
\begin{gather}
    n=\tilde{\Omega}\bigg(\frac{d}{\epsilon^2\|\theta^*\|_2}\bigg),\qquad\textrm{if ~}1\le\|\theta^*\|_2\le \frac{C_0}{\epsilon};\\
    n=\tilde{\Omega}\bigg(\frac{d}{\epsilon}\bigg),\qquad\textrm{if ~}\|\theta^*\|_2> \frac{C_0}{\epsilon}.
\end{gather}
This implies an error rate of $$\epsilon_0
=\tilde{O}\bigg(\sqrt{\frac{d}{n\|\theta^*\|_2}}+\frac{d}{n}\bigg)$$
under any $\|\theta^*\|_2\ge 1$, as with our Equation  (\ref{directionerror}).

We then show the probability of $1-e^{-n}$ and the iteration complexity of $\log_2\log_2\frac{n}{d}$, as stated in our Lemma \ref{lem:directionbound}. In fact, we can set $\rho$ of \cite[Theorem 5]{matsumoto2025learning}, the probability parameter therein, as $e^{-n}$. In view of Equation (14) therein, this can only add factor of $\polylog(n)$ to the sample complexity, which remains at $n\gtrsim d\polylog n$, as assumed in our Lemma \ref{lem:directionbound}. Moreover, in view of Equation (16) of \cite{matsumoto2025learning}, the sequence of direction estimates $\{\hat{r}_t\}_{t\ge 0}$ achieves a very fast convergence rate,
\[\bigg\|\hat{r}_t-\frac{\theta^*}{\|\theta^*\|_2}\bigg\|_2 \le 2^{2^{-t}}\epsilon_0^{1-2^{-t}},\qquad\forall t\ge 0.\]
A direct calculation then finds that $\hat{r}_t$ with $t\ge \log_2\log_2\frac{n}{d}$ already achieves the desired direction estimation error of $\tilde{O}(\epsilon_0)$. 

\subsection{Numerical Performance of Algorithm \ref{alg:spectral}}\label{app:numalg2}
We compare the estimation performance of Algorithm \ref{alg:spectral} with GD under the following settings:
\begin{itemize}
    \item We set $T_0=30$ in Algorithm \ref{alg:spectral} and run it without sample splitting --- i.e., both Step 1 and Step 2 use all the $n$ samples.  

    \item We run GD in Algorithm \ref{alg:gd} with $\theta_0=0$, $\eta=4$ for $400$ iterations. Suppose that the MLE exists, then traditional optimization theory \cite{nesterov2018lectures} guarantees that the GD iterates converge to MLE, and as we run sufficiently many iterations we expect that the estimator, $\theta_{400}$, well approximates MLE. 
\end{itemize}
Consistent with Remark \ref{rem:improve}, our numerical simulations suggest that Algorithm \ref{alg:spectral} outperforms GD, and possibly also MLE, in high-dimensional setting, especially when $n$ is only moderately large. We provide two numerical examples to illustrate this phenomenon:
\begin{itemize}
    \item We first test both algorithms in a high-dimensional regime with $(d,\|\theta^*\|_2)=(1000,2)$ and $n\in\{3000,5000,8000,10000,15000,20000,30000\}$. The log-log plots in Figure~\ref{fig:sub11} suggest that Algorithm~\ref{alg:spectral} yields more accurate estimates than GD when $n<10000$, while GD becomes slightly better when $n>10000$. The reference dashed line further confirms the $O(n^{-1/2})$ error decay rate of Algorithm~\ref{alg:spectral}.
    
    \item We next consider the setting $(n,d)=(5000,1000)$ and vary $\|\theta^*\|_2$ over $\{1,2,4,6,8\}$. The results in Figure~\ref{fig:sub21} again show that Algorithm~\ref{alg:spectral} can substantially outperform GD in high-dimensional, moderate-sample regimes.
\end{itemize}
\begin{figure}[ht!]
    \centering

    \begin{subfigure}[t]{0.49\textwidth}
        \centering
        \includegraphics[width=\textwidth]{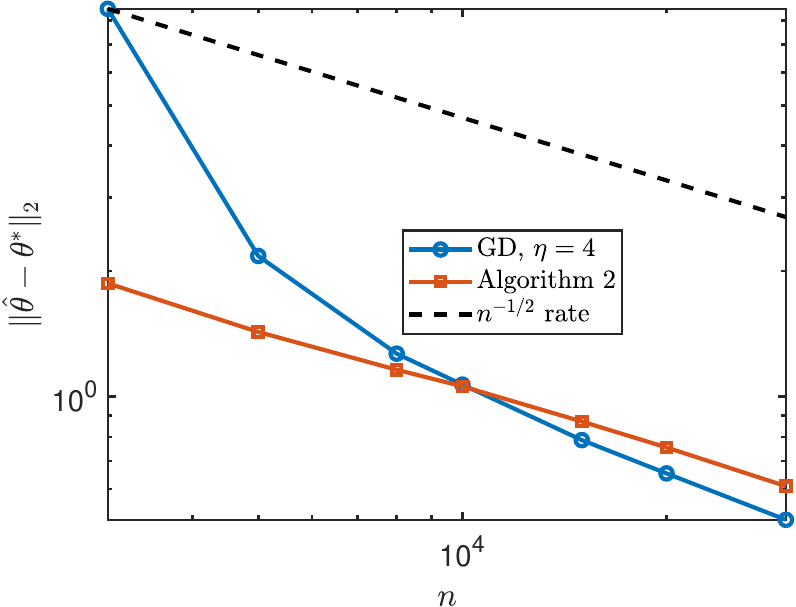}
        \caption{Error v.s. $n$}
        \label{fig:sub11}
    \end{subfigure}
    \hfill
    \begin{subfigure}[t]{0.46\textwidth}
        \centering
        \includegraphics[width=\textwidth]{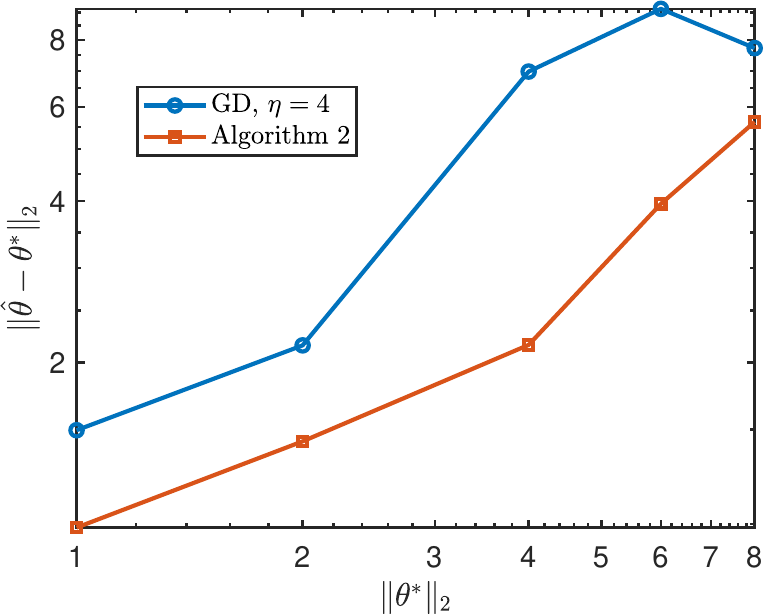}
        \caption{Error v.s. $\|\theta^*\|_2$}
        \label{fig:sub21}
    \end{subfigure}

    \caption{Comparing Algorithm \ref{alg:spectral} with small stepsize GD.}
    \label{fig:two_figs}
\end{figure}

\end{document}